\newcommand{\nbmute}[1]{\textcolor{red}{(!)}}
\newcommand{\bet}{{\,\mathbin{\bowtie}\,}}
\newtheorem{theorem}{Theorem}
\newtheorem{example}{Example}
\newtheorem{observation}{Observation}
\newtheorem{definition}{Definition}
\newtheorem{proposition}{Proposition}
\newtheorem{lemma}{Lemma}
\title{Plausible Reasoning about $\mathcal{EL}$-Ontologies using Concept Interpolation}
\author{%
Yazm\'in  Ib\'a\~nez-Garc\'ia\and
V\'ictor Guti\'errez-Basulto\and
Steven Schockaert
\affiliations
Cardiff University,UK\\
\emails
\{ibanezgarciay, gutierrezbasultov, schockaerts1\}@cardiff.ac.uk}
\begin{document}
\maketitle

\begin{abstract}
Description logics (DLs) are standard knowledge representation languages for modelling ontologies, i.e.\ knowledge about concepts and the relations between them. Unfortunately, DL ontologies are difficult to learn from data and time-consuming to encode manually. As a result, ontologies for broad domains are almost inevitably incomplete. In recent years, several data-driven approaches have been proposed for automatically extending such ontologies. One family of methods rely on characterizations of concepts that are derived from text descriptions. While such characterizations do not capture ontological knowledge directly, they encode information about the similarity between different concepts, which can be exploited for filling in the gaps in existing ontologies. To this end, several inductive inference mechanisms have already been proposed, but these have been defined and used in a heuristic fashion. In this paper, we instead propose an inductive inference mechanism which is based on a clear model-theoretic semantics, and can thus be tightly integrated with standard deductive reasoning. We particularly focus on interpolation, a powerful commonsense reasoning mechanism which is closely related to cognitive models of category-based induction. Apart from the formalization of the underlying semantics, as our main technical contribution we provide computational complexity bounds for reasoning in \EL with this interpolation mechanism.
\end{abstract}

\section{Introduction}
In the field of AI, knowledge about concepts has traditionally been encoded using logic, often in the form of description logic ontologies \cite{baader2004description,DBLP:books/daglib/0041477}. While this approach has been highly successful in particular domains, such as health care and biomedical research, the difficulty in acquiring (description logic) ontologies has clearly hampered a more widespread adoption. In open-domain settings, it is almost impossible to exhaustively encode all relevant knowledge about the concepts of interest. As a simple example illustrating this so-called knowledge acquisition bottleneck, the SUMO ontology\footnote{\url{http://www.adampease.org/OP/}} contains the knowledge that \emph{linguine}, \emph{penne}, \emph{spaghetti}, \emph{couscous} and \emph{ziti} are types of pasta, but none of the many other types of pasta are included. 

Beyond the use of ontologies, there has also been a large interest in learning concept representations from data, such as text descriptions. Word embedding models \cite{DBLP:journals/corr/abs-1301-3781} learn such representations, for instance. Some authors have also proposed approaches that exploit semi-structured data such as WikiData, Freebase and BabelNet \cite{DBLP:conf/naacl/NeelakantanC15,camacho2016nasari,DBLP:conf/sigir/JameelBS17}. Data-driven concept representations are highly complementary to ontologies: they excel at capturing similarity but are otherwise limited in the kinds of dependencies between concepts they can capture. They are essentially tailored towards a form of \emph{inductive reasoning}: given a number of instances of some concept, they are used to predict which other entities are also likely to be instances of that concept. Conversely, traditional  ontology languages use rules to encode rigid dependencies between concepts, but they cannot capture graded notions such as similarity, vagueness and typicality. 
Description logic representations are thus rather tailored to support \emph{deductive reasoning} about concepts.

%In fact, this closely corresponds to the traditional dichotomy between rule-based and similarity-based processes in cognition \cite{hahn1998similarity}. 
%This requires external knowledge about these concepts, which in practice is often derived from text descriptions. For instance, \cite{DBLP:conf/naacl/NeelakantanC15} proposed a model for inferring missing entity types (i.e.\ subsumption relations) in knowledge bases, based on features derived from freebase and text descriptions, whereas \cite{DBLP:conf/aaai/BouraouiS19} focused on predicting missing subsumption axioms in ontologies based on pre-trained vector space embeddings of the concepts. It is important to emphasize that these external representations typically do not capture the required ontological information directly. Instead, these representations essentially provide us with information about the similarities between different concepts. The information captured by ontologies and by such external representations is thus clearly complementary complementary. In particular, vector space embeddings  

There is a growing realization that a combination of deductive and inductive reasoning about concepts is  needed in many applications \cite{DBLP:journals/jwe/HarmelenT19,DBLP:journals/semweb/dAmato20}. While several authors have started to explore ways in which such an integration can be achieved, existing work has mostly relied on heuristic methods, focusing on empirical performance rather than the underlying principles. For instance, several approaches have been proposed to exploit rules \cite{guo2016jointly,DBLP:conf/emnlp/DemeesterRR16}, and symbolic knowledge more generally \cite{xu2014rc,DBLP:conf/naacl/FaruquiDJDHS15}, to learn higher-quality vector space representations. Conversely, some authors have used vector representations to infer missing knowledge graph triples \cite{DBLP:conf/naacl/NeelakantanC15,DBLP:conf/aaai/XieLJLS16}, missing ABox assertions \cite{DBLP:conf/semweb/0001dFE13,DBLP:conf/ijcai/BouraouiS18}, or missing concept inclusions \cite{DBLP:conf/semweb/LiBS19}.

 The main focus of this paper is on the inference of plausible concepts inclusions, that is, concept inclusions which 
are not entailed from a given TBox, but which are likely to hold given the knowledge obtained from vector representations and the TBox. However, unlike in previous work,  rather than focusing on empirical performance, we aim to study the underlying principles. In particular, in  existing approaches, inductive and deductive inferences are typically decoupled. For instance, in \cite{DBLP:conf/semweb/LiBS19}, missing concept inclusions are predicted in a pre-processing step, after which the standard deductive machinery is employed. The main purpose of this paper is to propose a model-theoretic semantics in which some forms of inductive reasoning about description logic ontologies can be formalized, and which thus allows for a tighter integration between the
deductive and inductive inferences.

%between the standard deductive machinery and inductive inferences.}

%%22.06
%Similar to the latter approaches, our focus in this paper is to infer plausible concept inclusions, which cannot be deduced from a given TBox. However, rather than focusing on empirical performance, we aim to study the underlying principles. In particular, in  existing approaches, inductive and deductive inferences are typically decoupled. For instance, in \cite{DBLP:conf/semweb/LiBS19}, missing concept inclusions are predicted in a pre-processing step, after which the standard deductive machinery is employed. The main purpose of this paper is to propose a model-theoretic semantics in which some forms of inductive reasoning about description logic ontologies can be formalized, and which thus allows for a tighter integration between the standard deductive machinery and inductive inferences.

\smallskip To illustrate the particular setting that we consider in this paper, assume that we have the following knowledge in a given TBox about some concept $C$:
\begin{align*}
 \textit{Rabbit} \sqsubseteq C \qquad
 \textit{Giraffe} \sqsubseteq C
\end{align*}

 If we additionally have background knowledge about  rabbits, giraffes and zebras, in particular the fact that zebras satisfy all the \emph{natural} properties that rabbits and giraffes have in common (e.g.\ being mammals and herbivores), we could then make the following inductive inference, even if we know nothing else about $C$:
\begin{align}\label{eqExampleInductiveInference}
\textit{Zebra} \sqsubseteq C
\end{align}

\noindent In other words, any natural property that is known to hold for giraffes and rabbits is likely to hold for zebras as well.  In such a case, we say that zebras are \emph{conceptually between} rabbits and giraffes. The corresponding inference pattern is known as \emph{interpolation} in AI\footnote{Not to be confused with the notions of  interpolation used to relate logical theories~\cite{Craig57a,LutWo-IJCAI11}} \cite{dubois1997logical,schockaert2013interpolative} and is closely related to the notion of category based induction in cognitive science \cite{osherson1990category}. 
Importantly, vector representations of concepts, e.g.\  word embeddings, knowledge graph embeddings, could be used to obtain knowledge about conceptual betweenness.  %Knowledge about conceptual betweenness can be derived from data-driven representations. 
For instance, \cite{DBLP:journals/ai/DerracS15} found that geometric betweenness closely corresponds to conceptual betweenness in vector spaces learned with multi-dimensional scaling.

%\medskip 
%{\color{red} {\bf OLD:} Intuitively, even if we know nothing else about $C$, we could still make the following inductive inference:
%\begin{align}\label{eqExampleInductiveInference}
%\textit{Zebra} \sqsubseteq C
%\end{align}

%This conclusion relies on background knowledge\nb{V: Where does it come from? How it looks like?} about rabbits, giraffes and zebras, in particular the fact that zebras satisfy all the \emph{natural} properties that rabbits and giraffes have in common (e.g.\ being mammals and herbivores). In other words, any natural property that is known to hold for giraffes and rabbits is likely to hold for zebras as well. In such a case, we say that zebras are \emph{conceptually between} rabbits and giraffes. The corresponding inference pattern is known as \emph{interpolation} in AI\footnote{Not to be confused with the notions of  interpolation used to relate logical theories~\cite{Craig57a,LutWo-IJCAI11}} \cite{dubois1997logical,schockaert2013interpolative} and is closely related to the notion of category based induction in cognitive science \cite{osherson1990category}. Knowledge about conceptual betweenness can be derived from data-driven representations. For instance, \cite{DBLP:journals/ai/DerracS15} found that geometric betweenness closely corresponds to conceptual betweenness in vector spaces learned with multi-dimensional scaling.}

Apart from conceptual betweenness, the notion of \emph{naturalness} also plays a central role. Indeed, it is clear that the conclusion in \eqref{eqExampleInductiveInference} can only be justified by making certain assumptions on the concept $C$. If $C$ could be an arbitrary concept, the resulting inference may clearly not be valid, e.g.\ this is the case if $C=\textit{Rabbit}\sqcup \textit{Giraffe}$ (i.e.\ \textit{Rabbit} or \textit{Giraffe}). For natural properties, however,
%such as $\textit{Rabbit}\sqcup \textit{Giraffe}$ (i.e.\ \textit{Rabbit} or \textit{Giraffe}), there is no reason to believe that the inference is valid\nb{V: Do we want to explain why?}, 
%the inference seems 
interpolative inferences seem intuitively plausible. This idea that only some properties admit inductive inferences has been extensively studied in philosophy, among others by Goodman~\shortcite{goodman1983fact}, who called such properties \emph{projectible}. As an example of a non-projectible property, he introduced the famous example of the property \textit{grue}, which means green up to a given time point and blue afterwards. Along similar lines, Quine~\shortcite{quine} introduced the notion of ``natural kinds'' to explain why only some properties admit inductive inferences. This notion was developed by G\"ardenfors \shortcite{gardenfors2000conceptual}, who introduced the term ``natural properties'' and suggested that such properties correspond to convex regions in a suitable vector space. To determine which concepts, in a given ontology, are likely to be natural, a useful heuristic is to consider the concept name: concepts that correspond to standard natural language terms are normally assumed to be natural \cite{gardenfors2014geometry}. In this paper, we will simply assume that we are given which concept names are natural. 

%One problem with applying interpolative reasoning in practice is that it requires knowledge about conceptual betweenness and the naturalness of properties, which cannot be encoded in standard description logics. To obtain such knowledge, we can rely on insights from G\"ardenfors' theory of conceptual  spaces \cite{gardenfors2000conceptual}. In particular, G\"ardenfors argues that what distinguishes natural properties (e.g.\ being a herbivore) from artificial ones (e.g.\ being ``either a rabbit or a giraffe'') is that the former can be represented as convex regions in a suitable vector space, which entails that conceptual betweenness coincides with geometric betweenness. Previous work has found that this assumption of natural properties as convex regions is indeed often satisfied in learned vector space embeddings., while several authors have proposed strategies for concept learning \cite{snell2017prototypical} and rule base completion \cite{DBLP:conf/aaai/BouraouiS19} which are based on the idea that natural concepts tend to correspond to convex regions. 

%Clearly, the way in which interpolative reasoning can be implemented in practice will still inevitably be heuristic. For instance, we typically do not know with certainty which concepts are natural and which are not. Moreover, any knowledge about conceptual betweenness that we can obtain from learned vector space representations will inevitably be noisy. However, rather than simply viewing interpolation as a heuristic inference mechanism, 

In particular, we consider the following setting. We are given a standard DL ontology, in addition to a set of conceptual betweenness assertions (i.e.\ assertions of the form ``natural properties that hold for $C_1,...,C_n$ should also hold for $C$'') and a list of natural concepts. The aim is to reason about the given ontology by combining standard deductive reasoning with the aforementioned interpolation principle. Note that in this way, we maintain a clear separation between deriving knowledge from data-driven representations (i.e.\ the conceptual betweenness assertions) and the actual reasoning process. 
We particularly focus on an extension of the description logic \EL~\cite{DBLP:books/daglib/0041477}. Our motivation for choosing this logic is its simplicity. \EL allows only for conjunction and existential restrictions as constructors. Nevertheless, \EL has been successfully used for encoding large-scale ontologies like \textsc{snomed ct}. Furthermore, reasoning in \EL can be performed in polynomial time. 
Our proposed extension allows for reasoning with natural concepts and 
 conceptual betweenness, and thus supports interpolative reasoning. Formally defining the semantics of these notions requires an extension to the usual first-order semantics of description logics. Indeed, to capture e.g.\ that the concept \emph{blue} is natural while \emph{grue} is not, we cannot simply model concepts as sets of individuals. To this end, we consider two alternative approaches for characterizing natural concepts at the semantic level. First, we propose a semantics in which natural concepts are characterized using sets of features. This approach is closely related to formal concept analysis \cite{wille1982restructuring}, and is loosely inspired by the long tradition in cognitive science to model concepts in terms of features \cite{tversky1977features}. Second, we propose a semantics based on vector space representations, inspired by conceptual spaces \cite{gardenfors2000conceptual}, in which natural concepts correspond to convex regions. 

As our main technical contribution, we provide complexity bounds for concept subsumption. Concept subsumption in our considered extension of \EL is \coNP-complete under the feature-enriched semantics  and \PSpace-hard under the geometric semantics. %In both cases, we thus see a jump in complexity compared to standard \EL. 
The difference in complexity between the two proposed semantics intuitively stems from differences in how conceptual betweenness interacts with intersection. %\nb{V: Is what follows really informative at this point? Also it is repeated in the geometric semantics section S: OK for me to remove this (also the next sentence then, of course.}In particular, if $B$ is between $A$ and $C$ then under the feature-enriched semantics we also have, for any natural concept $X$, that $B\sqcap X$ is between $A\sqcap X$ and $C\sqcap X$, while this is not always the case under the geometric semantics.

%{\bf : \color{red} TODO Missing proofs can be found  at~\cite{arxivlink}}

% \nb{ S: Yes, so we should have the final version at least one day before the camera ready deadline (so we have the arxiv link in time).}

% \url{https://tinyurl.com/vq2eh22} 

%{\bf Rebuttal Response on Motivation. Something along those lines to improve it}

%{\color{blue}
%{\bf Critical points to tackle:}

%\begin{itemize}
%    \item Something about integrating Learning and Reasoning in a broader sense.
    
%    \item The motivation could be strengthened by a clear use case of the inference. For instance, is it possible to apply such an inference in (logic-guided) representation learning or knowledge base completion? Or if the main purpose is to enhance ontological modeling capability, which kind of domain knowledge often requires to express (and reason about) concept betweenness? The running example provides a rough idea of the second question, but it would be useful to make it more explicit
    
%    \item Would existing approaches on vector space embeddings benefit from the proposed inference?
    
%    \item Do you have some use cases in mind that require to express and reason about concept betweenness?

%\end{itemize}
%}

\section{Background}\label{sec:back}

%\subsection{Modelling Concepts}\label{sec:preliConcepts}

%\smallskip\noindent{\bf Representing concepts.} A large number of cognitive theories of concepts have been proposed. The main aim of such theories is to explain how humans decide whether a given exemplar is an instance of a previously learned concept. In this paper, we will consider two approaches which derive from prototype theory \cite{rosch1973natural}. The main idea underlying prototype theory is that we categorize exemplars based on their similarity to a learned prototype of the concept. Various frameworks have been derived from this core idea, which differ in how prototypes and exemplars are represented, and in how similarity is evaluated. One natural choice is to represent exemplars and prototypes as points in some metric space, and to define similarity in terms of the associated metric. The theory of conceptual spaces follows this tradition (where the metric is based on a combination of Euclidean distance and Manhattan distance), in the sense that we can think of a convex region as the set of points that are sufficiently close to a given prototype. For instance, a common assumption in the context of conceptual spaces is that the convex regions correspond to the cells of a Voronoi tessellation of a given set of prototypes. Another popular choice, advocated by Tversky \shortcite{tversky1977features}, is to represent exemplars and prototypes as sets of features, and to categorize exemplar based on the number of shared features. 

%\subsection{Description Logics}\label{sec:preliDL}
 We briefly recall some basic notions about description logics, focusing on the \EL logic in particular.

\smallskip\noindent{\bf Syntax.}
Consider countably infinite
but disjoint sets of \emph{concept names} $\mn{N_C}$ and  \emph{role names}
$\mn{N_R}$. %and \emph{individual names} $\mn{N_I}$. 
These concept and role names are combined to \emph{$\EL$ concepts}, in accordance with the following grammar, 
%
%The syntax of \emph{$\EL_\bot$ concepts} $C,D$ is given by the following grammar, 
where $A \in \mn{N_C}$ and $r \in \mn{N_R}$:
$$ C,D := \top \mid A \mid C \sqcap D \mid \exists r. C $$
For instance, $A \sqcap (\exists r. (B \sqcap C))$ is an example of a well-formed $\EL$ concept, assuming $A,B,C\in \mn{N_C}$ and $r\in \mn{N_R}$.
%The fragment of $\EL_\bot$ in which $\bot$ is not used is known as \EL. 
An \emph{$\EL$ TBox (ontology) $\Tmc$} is a finite set of \emph{concept inclusions (CIs)} of the form $C \sqsubseteq D$, where $C,D$ are 
$\EL$ concepts. 

\smallskip\noindent{\bf Semantics.}
The semantics of description logics are usually given in terms of first-order interpretations $(\Delta^\Imc, \cdot^\Imc)$. Such interpretations consist of a nonempty \emph{domain} $\Delta^\Imc$ and an \emph{interpretation function} $\cdot^\Imc$, which maps 
each concept name $A$ to a subset $A^\Imc \subseteq \Delta^\Imc$ and each role name $r$ to a binary relation $r^\Imc \subseteq \Delta^\Imc \times \Delta^\Imc$.
The interpretation function  $\cdot^\Imc$ is extended to complex concepts as follows:%
\begin{align*}
(\top)^\Imc  &= \Delta^\Imc, \qquad(C \sqcap D)^\Imc = C^\Imc \cap D^\Imc,\\
(\exists r.C)^\Imc &=  \{d\in \Delta^\Imc \mid \exists d' \in C^\Imc, (d,d')\in r^\Imc\}.
\end{align*}
An interpretation \Imc \emph{satisfies} a concept inclusion $C \sqsubseteq D$ if $C^\Imc \subseteq D^\Imc$; 
%it is a model of a concept $C$ if $C^\Imc \neq \emptyset$; 
it is a \emph{model} of a TBox \Tmc if it satisfies all CIs in \Tmc.
A concept    $C$ \emph{subsumes a concept $D$  relative to  a TBox \Tmc} if 
every model \Imc of \Tmc satisfies $C \sqsubseteq D$. We denote this by writing $\Tmc \models C \sqsubseteq D$.

\section{ $\EL$ with In-between and Natural Concepts } \label{secPlausibleInferenceDL}
We introduce the description logic $\EL^\bet$, which extends $\EL$ with \emph{in-between concepts} and \emph{natural concepts}.

\smallskip
\noindent \textbf{Syntax.} The main change is that we introduce the in-between constructor,  which allows us to describe the set of objects that are between two concepts. Specifically, we  write  $C \bet D $ to denote all  objects that are between the concepts $C$ and $D$. 
Further, because we will need to differentiate between concepts that are natural and concepts which are not, we will assume that $\mn{N_C}$ contains a distinguished  infinite set of \emph{natural concept names} $\mn{N^{Nat}_C}$. 
The syntax of \emph{$\EL^\bet$ concepts $C,D$} is thus defined  by the following grammar, where $A \in \mn{N_C}$, 
$A' \in \mn{N^{Nat}_C}$ and $r \in \mn{N_R}$:
\begin{align*} 
C,D &:= \top \mid A \mid C \sqcap D \mid \exists r. C \mid  N     \\
N,N'  &:= A' \mid N \sqcap N' \mid N \bet N'
\end{align*}
We will call concepts of the form $N,N'$  \emph{natural concepts}. Notably, we only allow the application of the $\bet$ constructor on natural concepts. The reason for this will become clearer once we have defined the semantics. 
% \textcolor{orange}{In other words, the concept inclusion $B \sqsubseteq C \bet D$ captures the knowledge that natural properties which hold for both $C$ and $D$ also hold for $B$. The nature of these properties, and what it means to be natural, will be clarified below. Essentially, we can think of concept names are properties, although at the semantic level there also might be other properties, for which no concept names have been introduced.}\nb{All this seems to be out of context-}
%\subsubsection*{$\EL^\bet$ TBoxes.}
%
An \emph{$\EL^\bet$ TBox} is a finite set of 
%natural concept assertions $\mn{Nat}(C)$ and 
concept inclusions $C\sqsubseteq D$, where  $C,D$ are $\EL^\bet$ concepts. 
%We define now the semantics.
%\begin{definition} Let $\Tmc$ be an $\EL^\bet_\bot$, 

\begin{example}\label{exDefTBox}
In the following, we will consider the $\EL^\bet$ TBox $\mathcal{T}$ containing the following concept inclusions:
\begin{align}
\textit{Rabbit} &\sqsubseteq \textit{Herbivore} \label{ci:RH}\\     
\textit{Giraffe} &\sqsubseteq \textit{Herbivore} \label{ci:GH}\\ 
\textit{Zebra} & \sqsubseteq \textit{Rabbit}\bet \textit{Giraffe}\label{ci:ZRbG}\\
\textit{Herbivore} & \sqsubseteq \exists \textit{eats}.\textit{Plant}\label{ci:plants}
\end{align}
such that $\textit{Rabbit}, \textit{Zebra}, \textit{Giraffe},\textit{Herbivore} \in \mn{N^{Nat}_C}$.
%as well as the following natural concept assertion:
% \begin{align*}
% \mn{Nat}(\textit{Herbivore})&
% \end{align*}
%\demo \nb{S: Is this triangle here to mark the end of the example? Is this notion common in DL papers (I've never seen it). Perhaps it is redundant given that the examples are already in italics?} 
\end{example}
Note that betweenness in the  proposed logic $\EL^\bet$ is modelled using a binary connective. In practice, however, the knowledge we have may relate to more concepts. Indeed, our general aim is to deal with knowledge of the form ``natural properties which hold for all of $A_1,...,A_k$ also hold for $B$'', or more precisely, that to derive $B \sqsubseteq N$ for a natural concept $N$, it is sufficient that $A_1 \sqsubseteq N,...,A_k \sqsubseteq N$ can be derived. However, in both of the semantics that we consider in this paper, the in-between operator will be associative. For $k\geq 2$, we can thus write $B \sqsubseteq A_1 \bet ... \bet A_k$ to encode such knowledge.

\smallskip
\noindent \textbf{Semantics.} Our aim is to characterize the semantics of naturalness and betweenness, in accordance with the idea of interpolation. For instance, given a TBox containing the axioms $B \sqsubseteq C \bet D$, $C \sqsubseteq N$, $D \sqsubseteq N$ with  $N$ a natural concept, we should be able to infer $B \sqsubseteq N$. This means that we need to distinguish between natural concepts and other concepts at the \emph{semantic level}, which is not possible if we simply interpret a concept as a set of objects. 
We will thus refine the usual first-order interpretations, such that we can characterize (i) which concepts are natural and (ii) which concepts are between which others. %Intuitively, natural concepts should be characterized by their properties, while concepts of the form $C \bet D$ should describe the set of elements that have all the properties which $C$ and $D$ have in common. 

We will consider two possible approaches to define such semantics. First, we will consider \emph{feature-enriched semantics}, which defines a semantics in the spirit of formal concept analysis \cite{wille1982restructuring}. In this case, at the semantic level we associate a set of features with each concept. Note that these features are semantic constructs, which have no direct counterpart at the syntactic level. A concept is then natural if it is completely characterized by these features, while $B$ is between $A$ and $C$ if the set of features associated with $B$ contains the intersection of the sets associated with $A$ and $C$. Second, we will consider \emph{geometric semantics}, which follows the tradition of G\"ardenfors~\cite{gardenfors2000conceptual}. In this case, concepts will be interpreted as regions from a vector space. A concept is then natural if it is interpreted as a convex region, while $B$ is between $A$ and $C$ if the region corresponding with $B$ is geometrically between the regions corresponding with $A$ and $C$ (i.e.\ in the convex hull of their union). In the following sections, we introduce these two types of semantics in more detail.

\section{Feature-Enriched Semantics}\label{secFeatureSemantics}
%\smallskip \noindent {\bf Semantics.}
In this section we introduce a refinement of the usual first-order interpretations, in which each individual is described using a set of features. Our main motivation here is to find the simplest possible semantics which is rich enough to capture betweenness and naturalness.  

%More sophisticated extensions such as those allowing to express disjointness or negative features are considered for future work. %\nb{V: future work, more expressive DL disjointeness, etc. cf Example 2}. 
 
%In order to account for these properties, we shall make use of a representation of the domain knowledge enriched with distinguishing properties. \nb{V: This will need intuition or examples. We will need to say (maybe in the intro/related work where these features come from and their intuitive meaning.)}
%

\subsection{Interpretations}
The following definition introduces a refinement of the usual DL interpretations, by introducing features in the spirit of formal concept analysis (FCA). %The exact link with FCA will be clarified below. 
\begin{definition}\label{defFeatureInterpretation}
A  \emph{feature-enriched} interpretation is a tuple $\Imf= ( \Imc,  \Fmc, \pi)$, such that
\begin{enumerate}
\item $\Imc = (\Delta^\Imc, \cdot^\Imc)$ is a classical DL interpretation;
\item $\Fmc$ is a non-empty finite set of \emph{features};
\item $\pi$ is a mapping assigning to every element $d\in\Delta^\Imc$ a proper subset of features $\pi(d) \subset \Fmc$;
%\item $\varphi$ is a mapping assigning to each concept a subset of features;
\item for each proper subset $\mathcal{F}' \subset \Fmc$, there exists an element $d\in\Delta^\Imc$ such that $\mathcal{F}'= \pi(d)$.
\end{enumerate}
\end{definition}
The last condition intuitively ensures that the different features are independent, by insisting that every combination of features (apart from $\mathcal{F}$ itself) is witnessed by some individual. The reason why this condition is needed relates to the fact that natural concepts will be characterized in terms of sets of features. For instance, it ensures that two concepts which are characterized by different sets of features cannot have the same extension. Note that only proper subsets of $\mathcal{F}$ are considered, such that we can associate $\mathcal{F}$ itself with the empty concept.

%\textcolor{orange}{Essentially, it is needed to ensure that betweenness is modelled in the intended way.}\nb{Y: move this to a later point}.

Under $\Imf$, a concept $C$ is interpreted as a pair $C^\Imf := \langle C^\Imc, \varphi(C)\rangle$ where $C^\Imc \subseteq \Delta^\Imc$ and $\varphi(C)$ is the set of all features from $\Fmc$ which the elements from $C^\Imc$ have in common: $$\varphi(C) := \bigcap_{d \in C^\Imc} \pi(d).$$ 
Intuitively,  we can think of $\varphi(C)$ as the set of necessary conditions that an individual needs to satisfy to belong to the concept $C$. The features from $\Fmc$ themselves can thus be seen as a set of primitive conditions that humans might rely on when categorizing individuals. However, note that the considered features do not play any role at the syntactic level, i.e.\ one cannot directly refer to them and it is not possible to specify them when encoding a TBox.

%Each concept name $A \in \mn{N_C}$ is interpreted in $\Imf$ as  a tuple  $A^\Imf := \langle A^\Imc, \varphi(A)\rangle$ such that $A^\Imc \subseteq \Delta^\Imc$ and $\varphi(A)$ is the set of all features in $\Fmc$ that all elements in $A^\Imc$ have in common,  
%$\{d \in \Delta^\Imc \mid \varphi(B) \subseteq \pi(d)\}$
 
%as well as the mapping $\varphi$. Let us first consider $\Imc$. 

For standard  \EL concepts $C$, the set  $C ^\Imc$ is defined as in Section~\ref{sec:back}. For concepts of the form $N \bet N'$,
we extend the definition of $\cdot^\Imc$ as follows.
%as before $\cdot^\Imc$ assigns to each concept name $A$ a subset $A^\Imc \subseteq \Delta^\Imc$. 
%The interpretation $\mathfrak I$ of complex concepts is defined by  extending the definition of $\cdot^\Imc$ from Sec.~\ref{sec:back} as to consider concepts of the form $N \bet N'$. 
%we define $(C \bet D)^\Imc$  in terms of $\varphi( C \bet  D )$ as follows:
$$
(N \bet  N' )^\Imc  = \{d \in \Delta^\Imc \mid   \varphi(N)\cap \varphi(N') \subseteq \pi(d)\}.
$$
Intuitively, $(N \bet N')^\Imc$ contains all elements from the domain that have all the features that are common to both $N$ and $N'$.
% The mapping $\varphi$ is extended to complex concepts as follows:
%  %
% \begin{align*}
% \varphi(\top) &= \emptyset, &
% \varphi(C \sqcap D) &= \varphi(C) \cup \varphi(D), \\
% %\varphi(\exists r. C) &= \{f \in \Fmc \mid f \in \pi(d) \text{ for all } d \in (\exists r. C)^\Imc \},\\
% \varphi(\exists r. C) &= \emptyset, &
% %\varphi(C) \cap \varphi(D)
% \varphi( C \bet  D ) &= \varphi(C) \cap \varphi(D).
% \end{align*}
Finally, each role name $r$ is interpreted as $r^\Imf := r^\Imc$.
%
% The definition of $\varphi(\top)$ simply reflects the fact that there are no necessary conditions associated with $\top$. 
% The definition of $\varphi(C \sqcap D)$ captures the intuition that a feature $f$ is a necessary condition for the concept $C \sqcap D$ iff it is a necessary condition for $C$ or $D$. \textcolor{orange}{The definition of $\varphi(\exists r. C)$ simply reflects the fact that there is no direct link between the features in $\varphi(C)$ and those in $\varphi(\exists r. C)$.}\nb{Y:This does not make sense} Finally, the definition of $\varphi( C \bet  D )$ encodes the intuition that the concept between $C$ and $D$ has all the features which $C$ and $D$ have in common. From that definition it immediately follows that $ C^\Imc \cup D^\Imc \subseteq  ( C \bet D )^\Imc$: every $d \in C^\Imc \cup D^\Imc$ is such that $\varphi(C) \cap  \varphi(D) \subseteq  \pi(d)$, and thus $d\in ( C \bet D )^\Imc$. \nb{Y: emphasize the fact that $\bet$ is the "natural closure of the disjunction, either here or later on when natural concepts are introduced}
%in particular $\varphi(C \bet  D ) = \varphi(C)\cap \varphi(D) \subseteq \pi(d)$. By the definition of $( C \bet D )^\Imc$ this means

\begin{definition}\label{defFeatureModel}
Let $\Tmc$ be an $\EL^\bet$ TBox and $\Imf= (\Imc, \Fmc, \pi)$ a feature-enriched interpretation. We say that  $\Imf$ is a \emph{model of $\Tmc$}, written $\Imf \models \Tmc$ if
the following hold.
\begin{enumerate}
    \item  $C^\Imc \subseteq D^\Imc$,
    %and $\varphi(D) \subseteq \varphi(C)$, 
    for every  $C \sqsubseteq D$ in \Tmc; 
    \item for every natural concept name $A$ in $\Tmc$, it holds that
    %\nb{S: I changed this definition; please check. Y: The problem with this is that now the definition is cyclic as $\varphi$ depends on $\cdot^\Imc$}
    $A^\Imc = \{d \in \Delta^\Imc \,|\, \varphi(A) \subseteq \pi(d)\}$.
    %there is no $d \in \Delta^\Imc \setminus A^\Imc$ such that $\varphi(A) \subseteq \pi(d)$. 
% 
%\item $C$ is natural  in \Imf, for every $\mn{Nat}(C)$ in $\Tmc$.
\end{enumerate}
 %characterized by the features that are  common to all its instances. 
% Note in particular that a concept inclusion $C \sqsubseteq D$ does not only mean that all instances of $C$ are also instances of $D$, as is the case for classical DLs, but also that all necessary features for $D$ must be necessary for $C$ as well. 

%\smallskip
%\noindent
\end{definition}

\begin{example}\label{exInterpretation}%\nb{Y: Add comments about the impossibility to express disjointness}
%\nb{Y: check the example, according to this, there are some objects that are both a rabbit and a zebra e.g., $d_{13}$. S: I made some changes to the example, but this issue remains (cf.\ email).}
Let us consider the following interpretation $\Imf= ( \Imc,  \Fmc, \pi )$ of the concept names from Example \ref{exDefTBox}. We define $\Fmc = \{f_1,f_2,f_3,f_4\}$,  $\Imc = (\Delta^\Imc, \cdot^\Imc)$ and $\Delta^\Imc= \{d_1,d_2,...,d_{15}\}$. Furthermore, $\pi$ is defined as:
\begin{align*}
\pi(d_1) &{=} \{f_1\} &  \pi(d_6) &{=} \{f_1,f_3\}  & \pi(d_{11})&{=} \{f_1,f_2,f_3\}\\
\pi(d_2) &{=} \{f_2\}  & \pi(d_7) &{=} \{f_1,f_4\} & \pi(d_{12})&{=} \{f_1,f_2,f_4\}\\
\pi(d_3) &{=} \{f_3\} & \pi(d_8) &{=} \{f_2,f_3\} & \pi(d_{13})&{=} \{f_1,f_3,f_4\}\\
\pi(d_4) &{=} \{f_4\} & \pi(d_9)&{=} \{f_2,f_4\} & \pi(d_{14})&{=} \{f_2,f_3,f_4\}\\
\pi(d_5) &{=} \{f_1,f_2\} & \pi(d_{10})&{=} \{f_3,f_4\} & \pi(d_{15})&{=} \emptyset
\end{align*}
Finally, $\varphi$ and $\cdot^\Imc$ are defined as follows:
\begin{align*}
\varphi(\textit{Rabbit}) & {=} \{f_1,f_2,f_3\} & \textit{Rabbit}^\Imc &{=} \{d_{11}\}\\
\varphi(\textit{Zebra}) & {=} \{f_2,f_3\} & \textit{Zebra}^\Imc &{=} \{d_8,d_{11},d_{14} \} \\
\varphi(\textit{Giraffe}) & {=} \{f_2,f_3,f_4\}& \textit{Giraffe}^\Imc &{=} \{d_{14}\} \\
\varphi(\textit{Herbivore}) & {=} \{f_3\} &
\textit{Herbivore}^\Imc &{=} \{d_3,d_6,d_8,\\
&&&  d_{10}, d_{11}, d_{13},d_{14}\}
%\\
%\varphi(\textit{Plant}) & = \{f_4\} & \textit{Plant}^\Imc &= \{d_4\}
\end{align*}
% and
% $$
% \textit{eats}^\Imc = \{(d_1,d_4),(d_2,d_4),(d_3,d_4)\} 
%$$
It is easy to verify that the conditions from Definition \ref{defFeatureInterpretation} are indeed satisfied by $\Imf$.
Furthermore, we have, for instance:
$$
\varphi(\textit{Rabbit}) \cap \varphi(\textit{Giraffe}) = \{f_2,f_3\}$$
and thus 
$$
(\textit{Rabbit} \bet \textit{Giraffe})^\Imc = \{d_8,d_{11},d_{14}\} 
$$
In particular, we have that $\mathfrak{I}$ is a model of \eqref{ci:RH}--\eqref{ci:ZRbG}. Note that for simplicity, we have not considered \eqref{ci:plants} in this example (which would require considering more objects and features). 
\end{example}
The model we considered in the previous example is rather counter-intuitive, as we would normally think of \textit{Rabbit}, \textit{Zebra} and \textit{Giraffe} as disjoint concepts. However, the feature-enriched semantics is rather restrictive when it comes to modelling disjoint concepts. While this is not an issue for the logic \EL, it shows that a different semantics would be needed for extensions of \EL in which disjointness can be expressed. The geometric semantics, which we discuss in Section \ref{secGeometricSemantics}, is more general in this respect.

\subsection{Natural Concepts}
Observe that Condition 2 in Definition~\ref{defFeatureModel} enforces that the extensions of natural concept names are completely determined by their features.
This is indeed in line with the intended semantics of \emph{natural concepts} explained above. This property extends to all natural concepts.
%As as we shall see, all the \emph{{}We now turn to the semantic characterization of natural concepts.
% \begin{definition}\label{defNaturalConcept}
% Let $\Imf= ( \Imc,  \Fmc, \pi)$ be a feature-enriched interpretation. A concept $C$ is \emph{natural in \Imf}
% if and only if there is no $d \in \Delta^\Imc \setminus C^\Imc$ such that 
% $\varphi(C) \subseteq \pi(d)$.
% %
% %$$C^\Imc = \{ d \in \Delta^\Imc \mid \varphi(C) \subseteq \pi(d) \}.$$ 
% We say that $C$ is a \emph{natural concept w.r.t.\ a TBox \Tmc} iff $C$ is natural in every model of \Tmc.
% %
% \end{definition}

%
% \begin{proposition}\label{lemma:NatCL}
% Let $C,D$ be natural concepts w.r.t. a given TBox \Tmc. Then both $C \sqcap D$
% and $C \bet D$ are natural w.r.t. \Tmc. 
% \end{proposition}
%\todo{Would it not be clearer to simply point out in Lemma 1 that naturalness is closed under $\sqcap$ and $\bet$; the current Lemma 1 would then still be an immediate consequence of the new Lemma 1. Y: Like that?}  % 
%
%
%natural concepts being \emph{closed under $\sqcap$ and $\bet$ in an interpretation \Imf}, i.e.\  whenever $C,D$ are natural in \Imf, then $C \sqcap D$ and $C \bet D$ are both natural concepts in $\Imf$.
%
% We will link the syntactic and the semantic definitions of natural concepts in the following. 
\begin{proposition}

 Let  \Tmc be an $\EL^\bet$ TBox and let
$\mn{Nat}(\Tmc)$ denote the smallest set of concepts such that 
%and let $C$ and $D$ be natural concepts in %\Imf.  
\begin{itemize}
\item $\top \in \mn{Nat}(\Tmc)$; 
\item every concept name $A \in \mn{N^{Nat}_C}$ occurring in \Tmc  belongs to $\mn{Nat}(\Tmc)$; 
    \item if $C,D \in \mn{Nat}(\Tmc)$, then  $C \sqcap D \in \mn{Nat}(\Tmc)$ and $C \bet D \in \mn{Nat}(\Tmc)$.
  %  \item $C \bet  D$ is natural in \Imf;
\end{itemize}
Then, for every $C \in \mn{Nat}(\Tmc)$ and every model $\Imf=(\Imc,\Fmc,\pi)$ of \Tmc
it holds that $C^\Imc = \{d \in \Delta^\Imc \mid \varphi(C) \subseteq \pi(d)\}$.
\end{proposition}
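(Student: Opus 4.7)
The plan is a straightforward structural induction on the construction of $C \in \mn{Nat}(\Tmc)$, following the three clauses of the definition. Throughout, the key property I will lean on is Condition~4 of Definition~\ref{defFeatureInterpretation}: every proper subset of $\Fmc$ is realized as $\pi(d)$ for some $d \in \Delta^\Imc$. This ``witness'' property is what forces $\varphi(C)$ to coincide with the natural ``expected'' feature set at every inductive step.

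For the base cases, if $C = \top$, then $C^\Imc = \Delta^\Imc$ and since some $d$ has $\pi(d)=\emptyset$ (by Condition~4 applied to $\emptyset \subsetneq \Fmc$), $\varphi(\top) = \emptyset$, so the identity $C^\Imc = \{d \mid \varphi(C)\subseteq \pi(d)\}$ holds trivially. If $C = A$ for $A \in \mn{N^{Nat}_C}$ occurring in $\Tmc$, the claim is exactly Condition~2 of Definition~\ref{defFeatureModel}.

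For the inductive step on $C = C_1 \sqcap C_2$ with $C_1,C_2 \in \mn{Nat}(\Tmc)$, I would first apply the induction hypothesis to rewrite $C^\Imc = C_1^\Imc \cap C_2^\Imc = \{d \mid \varphi(C_1)\cup\varphi(C_2) \subseteq \pi(d)\}$. It then suffices to show $\varphi(C) = \varphi(C_1)\cup\varphi(C_2)$; one inclusion is immediate from the definition of $\varphi$, and the reverse uses Condition~4 to produce a witness $d^*$ with $\pi(d^*) = \varphi(C_1)\cup\varphi(C_2)$ (whenever this union is a proper subset of $\Fmc$), giving $\varphi(C) \subseteq \pi(d^*)$. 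For $C = N \bet N'$, the semantics directly gives $C^\Imc = \{d \mid \varphi(N) \cap \varphi(N') \subseteq \pi(d)\}$, and an analogous argument (pick a witness $d^*$ with $\pi(d^*) = \varphi(N)\cap\varphi(N')$) shows $\varphi(C) = \varphi(N) \cap \varphi(N')$.

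The main obstacle, such as it is, is the edge case where the ``target'' feature set equals $\Fmc$ itself — namely when $\varphi(C_1)\cup\varphi(C_2) = \Fmc$ in the conjunction case, or $\varphi(N)\cap\varphi(N')=\Fmc$ in the betweenness case. Condition~4 does not supply a witness there, because $\pi$ only ranges over \emph{proper} subsets of $\Fmc$. I would handle this separately by noting that in that situation, no $d \in \Delta^\Imc$ satisfies the membership criterion, so $C^\Imc = \emptyset$, and consequently $\varphi(C) = \bigcap_{d\in\emptyset}\pi(d) = \Fmc$, making the equation $C^\Imc = \{d \mid \varphi(C)\subseteq\pi(d)\} = \emptyset$ hold vacuously. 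This degenerate case is precisely why Definition~\ref{defFeatureInterpretation} reserves $\Fmc$ itself for the empty concept, and flagging it cleanly is the one spot where care is needed.
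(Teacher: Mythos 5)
Your proof is correct and is essentially the intended argument: the paper states this proposition without an explicit proof, and your structural induction---base cases from Condition~2 of Definition~\ref{defFeatureModel}, inductive steps via Condition~4 witnesses realizing $\varphi(C_1)\cup\varphi(C_2)$ (resp.\ $\varphi(N)\cap\varphi(N')$)---is exactly the technique the paper itself deploys in the proofs of Lemma~\ref{prop:ConjFeat} and Proposition~\ref{featureselem}. Your treatment of the degenerate case where the target feature set equals $\Fmc$ (so that $C^\Imc=\emptyset$ and $\varphi(C)=\Fmc$) is likewise consistent with the paper's observation that $(\emptyset,\Fmc)$ is the least formal concept, so no gaps remain.
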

Intuitively, for a natural concept $C$, its associated set of features $\varphi(C)$ corresponds to necessary and sufficient conditions for an element to belong to the concept. A closely related property of natural concepts is that  concept inclusions can be characterized in terms of feature inclusion:
\begin{lemma}\label{prop:NatIncl}
Let $\Imf= (\Imc, \Fmc, \pi)$ be a feature-enriched interpretation and  $D$  a natural concept in $\Imf$. Then  for every concept $C$,  $\varphi(D) \subseteq \varphi(C)$ iff $C^\Imc \subseteq D^\Imc$. 
\end{lemma}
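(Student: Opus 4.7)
The plan is to prove the two directions separately, exploiting the fact that the preceding proposition gives us the characterization $D^{\Imc} = \{d \in \Delta^{\Imc} \mid \varphi(D) \subseteq \pi(d)\}$ for every natural concept $D$. Only the forward direction will actually use naturalness; the converse is just antitonicity of $\varphi$.

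For the direction $C^{\Imc} \subseteq D^{\Imc} \Rightarrow \varphi(D) \subseteq \varphi(C)$, I would simply observe that enlarging the index set of an intersection can only shrink it. Concretely, since
$\varphi(C) = \bigcap_{d \in C^{\Imc}} \pi(d)$ and $\varphi(D) = \bigcap_{d \in D^{\Imc}} \pi(d)$, and $C^{\Imc} \subseteq D^{\Imc}$, every feature common to all elements of $D^{\Imc}$ is in particular common to all elements of $C^{\Imc}$, hence $\varphi(D) \subseteq \varphi(C)$. This handles the edge case $C^{\Imc} = \emptyset$ automatically, since then $\varphi(C) = \Fmc$ and the inclusion is trivial.

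For the direction $\varphi(D) \subseteq \varphi(C) \Rightarrow C^{\Imc} \subseteq D^{\Imc}$, I would pick an arbitrary $d \in C^{\Imc}$. By definition of $\varphi$, we have $\varphi(C) \subseteq \pi(d)$, and together with the hypothesis $\varphi(D) \subseteq \varphi(C)$ this yields $\varphi(D) \subseteq \pi(d)$. Invoking the preceding proposition (which applies since $D$ is a natural concept in $\Imf$) gives $D^{\Imc} = \{d' \in \Delta^{\Imc} \mid \varphi(D) \subseteq \pi(d')\}$, so $d \in D^{\Imc}$.

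The only real subtlety worth flagging is that the forward direction genuinely requires $D$ to be natural: without naturalness, knowing $\varphi(D) \subseteq \pi(d)$ does not suffice to conclude $d \in D^{\Imc}$, because the extension of an arbitrary concept need not coincide with the set of objects carrying all its characteristic features. The rest is routine set-theoretic manipulation, and no appeal to condition 4 of Definition \ref{defFeatureInterpretation} is needed in the argument itself.
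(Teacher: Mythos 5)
Your proof is correct and follows essentially the same route as the paper's: the direction $C^\Imc \subseteq D^\Imc \Rightarrow \varphi(D)\subseteq\varphi(C)$ is the antitonicity of $\varphi$ (which the paper dismisses as trivial), and the converse picks $d\in C^\Imc$, observes $\varphi(D)\subseteq\varphi(C)\subseteq\pi(d)$, and concludes $d\in D^\Imc$ from the naturalness of $D$. One small attribution point: since the lemma concerns an arbitrary feature-enriched interpretation with no TBox in play, the characterization $D^\Imc=\{d\in\Delta^\Imc\mid\varphi(D)\subseteq\pi(d)\}$ is precisely what the hypothesis ``$D$ is natural in $\Imf$'' means, so it should be invoked directly from that hypothesis (as the paper does) rather than via the preceding proposition, which is stated for concepts in $\mn{Nat}(\Tmc)$ and models of a TBox $\Tmc$.
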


\subsection{In-between Concepts}
Feature-enriched interpretations allow us to define betweenness at the level of the objects in the domain of a given interpretation $(\Imc, \Fmc, \pi)$. For $d,d_1,d_2 \in \Delta^\Imc$, we will say that \emph{$d$ is between $d_1$ and $d_2$},  denoted by $\mn{bet}(d_1,d,d_2)$, if $\pi(d_1) \cap \pi(d_2) \subseteq \pi(d)$.
 
\begin{proposition}~\label{featureselem}
Let $\Imf= (\Imc, \Fmc, \pi)$ be a feature-enriched interpretation. 
For every pair of natural concepts $C,D$ in $\Imf$ such that $C^\Imc \neq \emptyset$ and $D^\Imc\neq \emptyset$, it holds that $(C\bet D)^\Imc$ is equal to the following set:
$$\Bbf = \{d \in \Delta^\Imc \mid \exists d_1 \in C^\Imc. \exists  d_2 \in D^\Imc \text{ s.t. }  \mn{bet}(d_1, d, d_2)\}.$$
\end{proposition}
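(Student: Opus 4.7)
The plan is to prove the two inclusions $\Bbf\subseteq (C\bet D)^\Imc$ and $(C\bet D)^\Imc\subseteq \Bbf$ separately, relying on the definition $(C\bet D)^\Imc=\{d\mid \varphi(C)\cap\varphi(D)\subseteq \pi(d)\}$ together with the naturalness characterization $C^\Imc=\{d\mid \varphi(C)\subseteq\pi(d)\}$ from the preceding proposition.

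For the direction $\Bbf\subseteq (C\bet D)^\Imc$, I would take an arbitrary $d\in\Bbf$ with witnesses $d_1\in C^\Imc$ and $d_2\in D^\Imc$ satisfying $\pi(d_1)\cap\pi(d_2)\subseteq\pi(d)$. Since $\varphi(C)=\bigcap_{e\in C^\Imc}\pi(e)$, we have $\varphi(C)\subseteq\pi(d_1)$ and symmetrically $\varphi(D)\subseteq\pi(d_2)$. Combining these yields $\varphi(C)\cap\varphi(D)\subseteq\pi(d_1)\cap\pi(d_2)\subseteq\pi(d)$, so $d\in (C\bet D)^\Imc$. This part does not use naturalness and is essentially a computation.

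The reverse direction $(C\bet D)^\Imc\subseteq\Bbf$ is where the hypotheses really do work, and is the step I expect to be slightly more delicate. Given $d\in(C\bet D)^\Imc$, i.e.\ $\varphi(C)\cap\varphi(D)\subseteq\pi(d)$, I need to exhibit concrete witnesses $d_1\in C^\Imc$ and $d_2\in D^\Imc$ with $\pi(d_1)\cap\pi(d_2)\subseteq\pi(d)$. The natural choice is to pick $d_1$ and $d_2$ so that $\pi(d_1)=\varphi(C)$ and $\pi(d_2)=\varphi(D)$, as this would make $\pi(d_1)\cap\pi(d_2)=\varphi(C)\cap\varphi(D)$ automatically. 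The main obstacle is ensuring such elements exist in $\Delta^\Imc$. By Condition 4 of Definition~\ref{defFeatureInterpretation}, any proper subset of $\Fmc$ is realized by some domain element, so it suffices to verify that $\varphi(C)\subsetneq \Fmc$ and $\varphi(D)\subsetneq\Fmc$. This is where the assumption $C^\Imc\neq\emptyset$ matters: picking any $e\in C^\Imc$, we have $\varphi(C)\subseteq\pi(e)$, and $\pi(e)\subsetneq\Fmc$ by Condition 3, hence $\varphi(C)\subsetneq\Fmc$; the same argument applies to $D$.

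With $d_1,d_2$ secured, naturalness does the rest: since $C\in\mn{Nat}(\Tmc)$, the previous proposition gives $d_1\in C^\Imc$ iff $\varphi(C)\subseteq \pi(d_1)$, which holds trivially by our choice $\pi(d_1)=\varphi(C)$, and similarly $d_2\in D^\Imc$. Finally $\pi(d_1)\cap\pi(d_2)=\varphi(C)\cap\varphi(D)\subseteq\pi(d)$ by the assumption on $d$, i.e.\ $\mn{bet}(d_1,d,d_2)$ holds, so $d\in\Bbf$. Combining both inclusions gives $(C\bet D)^\Imc=\Bbf$.
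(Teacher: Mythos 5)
Your proof is correct, and it follows the same two-inclusion skeleton as the paper's: the inclusion $\Bbf\subseteq(C\bet D)^\Imc$ is argued identically (a pure computation from $\varphi(C)\subseteq\pi(d_1)$, $\varphi(D)\subseteq\pi(d_2)$), and the converse uses Condition~4 to manufacture witnesses and naturalness to place them in $C^\Imc$ and $D^\Imc$. The genuine divergence is the choice of witnesses: the paper takes $\pi(d_1)=\pi(d)\cup\varphi(C)$ and $\pi(d_2)=\pi(d)\cup\varphi(D)$, whereas you take the minimal witnesses $\pi(d_1)=\varphi(C)$ and $\pi(d_2)=\varphi(D)$. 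Your choice buys two things. First, it makes explicit where the hypotheses $C^\Imc\neq\emptyset$ and $D^\Imc\neq\emptyset$ enter: any $e\in C^\Imc$ gives $\varphi(C)\subseteq\pi(e)\subsetneq\Fmc$, so $\varphi(C)$ is a \emph{proper} subset and Condition~4 applies — a check the paper leaves implicit. Second, your witnesses always exist under those hypotheses, while the paper's need not: $\pi(d)\cup\varphi(C)$ can equal $\Fmc$ even when both parts are proper (e.g.\ $\Fmc=\{f_1,f_2\}$, $\varphi(C)=\{f_2\}$, $\varphi(D)=\{f_1\}$, so that $\varphi(C)\cap\varphi(D)=\emptyset$ and every $d$ lies in $(C\bet D)^\Imc$, including $d$ with $\pi(d)=\{f_1\}$), in which case Condition~4 yields no element realizing $\pi(d)\cup\varphi(C)$; your minimal witnesses handle exactly such cases, so your variant in fact closes a small gap in the paper's own argument. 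One cosmetic point: you justify $d_1\in C^\Imc$ via the TBox-level proposition about $\mn{Nat}(\Tmc)$, but the statement here concerns concepts natural \emph{in} the interpretation $\Imf$, for which $C^\Imc=\{d\mid\varphi(C)\subseteq\pi(d)\}$ holds directly; the inference you need is the same either way, so this does not affect correctness.
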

% \begin{proof} 
% %Let $$\Bbf = \{d \in \Delta^\Imc \mid \exists d_1 \in C^\Imc. \exists  d_2 \in D^\Imc \text{s.t.\  }  \mn{bet}(d_1, d, d_2)\}.$$ 
% For every $d \in (C \bet D)^\Imc$,  by Condition 4 in Definition~\ref{defFeatureInterpretation}, there are $d_1, d_2 \in \Delta ^\Imc$ such that $\pi(d_1) = \pi(d) \cup \varphi(C)$ and $\pi(d_2)= \pi(d) \cup \varphi(D)$. Clearly, $\pi(d_1) \cap \pi(d_2) \subseteq \pi(d)$. Since $C$ and $D$ are both natural it holds that $d_1 \in C^\Imc$ and $d_2 \in D^\Imc$.  This shows $(C \bet D)^\Imc \subseteq \Bbf$. Now, let $d \in \Bbf$, and $d_1\in C^\Imc$ and  $d_2 \in D^\Imc$ be the elements witnessing this. We have that $\varphi(C) \subseteq \pi(d_1)$ and $\varphi(D) \subseteq \pi(d_2)$, and because $\pi(d_1) \cap \pi(d_2) \subseteq \pi(d)$ we can conclude $\varphi(C) \cap \varphi(D) \subseteq \pi(d)$. Therefore, $d \in (C \bet D)^\Imc$, showing that $\Bbf \subseteq (C\bet D)^\Imc$.  
% \end{proof}
Observe that $\Bbf$ provides an intuitive definition of betweenness and that the assumption that $C$ and $D$ are natural is crucial for showing $(C \bet D)^\Imc \subseteq \Bbf$. This justifies the syntactic restriction that $\bet$ is only applied to natural concepts.

\subsection{Link with FCA} There is a clear link between the notion of natural concept in an interpretation $(\Imc, \Fmc, \pi)$ and the notion of \textit{formal concept} from FCA. Let us consider the formal context $(\Delta^{\Imc},\Fmc, \iota)$, where the incidence relation $\iota$ is defined as $\iota(d,f)$ iff $f\in \pi(d)$, for $d\in \Delta^{\Imc}$ and $f\in \Fmc$. %Then we have the following. 
\begin{observation}
It holds that $C$ is a natural concept in $(\Imc, \Fmc, \pi)$  iff $(C^\Imc,\varphi(C))$ is a formal concept of the formal context $(\Delta^{\Imc},\Fmc, \iota)$.
\end{observation}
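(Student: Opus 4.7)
The plan is essentially a definition chase. I would begin by recalling the standard FCA definition: for a context $(G, M, I)$, the derivation operators are $A^{\uparrow} = \{m \in M \mid \forall g \in A,\, gIm\}$ and $B^{\downarrow} = \{g \in G \mid \forall m \in B,\, gIm\}$, and a pair $(A,B) \in 2^G \times 2^M$ is a \emph{formal concept} iff $A^{\uparrow} = B$ and $B^{\downarrow} = A$. Then I would instantiate this with the context $(\Delta^{\Imc}, \Fmc, \iota)$ defined in the excerpt, where $\iota(d, f)$ iff $f \in \pi(d)$, so that the derivation operators become $A^{\uparrow} = \bigcap_{d \in A}\pi(d)$ and $B^{\downarrow} = \{d \in \Delta^{\Imc} \mid B \subseteq \pi(d)\}$.

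Next, I would apply both operators to the candidate pair $(C^{\Imc}, \varphi(C))$. The first equation, $(C^{\Imc})^{\uparrow} = \varphi(C)$, unfolds to $\bigcap_{d \in C^{\Imc}} \pi(d) = \varphi(C)$, which holds by the very definition of $\varphi(C)$ given just after Definition~\ref{defFeatureInterpretation}. Hence this half is automatic and carries no content. The second equation, $(\varphi(C))^{\downarrow} = C^{\Imc}$, unfolds to the identity
\[
C^{\Imc} \;=\; \{d \in \Delta^{\Imc} \mid \varphi(C) \subseteq \pi(d)\},
\]
which is exactly the characterization of natural concepts established in the Proposition preceding Lemma~\ref{prop:NatIncl} (and which, for concept names, is built into Condition~2 of Definition~\ref{defFeatureModel}).

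Thus the forward direction follows: if $C$ is a natural concept in $(\Imc, \Fmc, \pi)$, the Proposition supplies the nontrivial equality $(\varphi(C))^{\downarrow} = C^{\Imc}$, while the other equality holds by construction, so $(C^{\Imc}, \varphi(C))$ is a formal concept. The converse direction is equally direct: if $(C^{\Imc}, \varphi(C))$ is a formal concept, then in particular $(\varphi(C))^{\downarrow} = C^{\Imc}$, which is precisely the property that makes $C$ natural in the given interpretation. Since the statement is called an observation, there is no real obstacle; the only thing to be careful about is making the correspondence between the FCA derivation operators and the semantic objects $\varphi$ and $\pi^{-1}$ fully explicit, so that the two halves of the formal-concept condition are seen to split cleanly into one trivial half and one half identical to naturalness.
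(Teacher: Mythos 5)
Your proof is correct and matches the paper's own argument: the paper likewise observes that the condition $\varphi(C) = \{f \in \Fmc \mid \iota(d,f) \text{ for all } d \in C^\Imc\}$ holds trivially by the definition of $\varphi$, while $C^\Imc = \{d \in \Delta^\Imc \mid \iota(d,f) \text{ for all } f \in \varphi(C)\}$ is exactly the defining property of a natural concept in the interpretation, so the observation is a pure definition chase in both directions. One cosmetic remark: you need not invoke the Proposition on $\mn{Nat}(\Tmc)$ (which concerns naturalness w.r.t.\ a TBox, i.e.\ in all models) — naturalness \emph{in a given interpretation} is itself the equality $(\varphi(C))^{\downarrow} = C^\Imc$, which is all the statement requires.
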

%\begin{proof}
Indeed the following two conditions are satisfied:
\begin{align}
C^\Imc &= \{d \in \Delta^\Imc \mid \iota(d,f) \text{ for all $f\in \varphi(C)$}\} \label{eqFormalConceptCondition1}\\
\varphi(C) &= \{f \in \Fmc \mid \iota(d,f) \text{ for all $d\in C^\Imc$}\}\label{eqFormalConceptCondition2}
\end{align}
Condition \eqref{eqFormalConceptCondition1}  follows from the definition of natural concept, while  \eqref{eqFormalConceptCondition2} follows from the definition of $\varphi(C)$.
% $$
% \varphi(C) \subseteq \{f \in \Fmc \mid \iota(d,f) \text{ for all $d\in C^\Imc$}\}
% $$
% since the features in $\varphi(C)$ are necessary for the instances of $C$, by the fourth condition from Definition \ref{defFeatureInterpretation}.
% To see why the equality holds, we need Condition 4 from Definition \ref{defFeatureInterpretation}. In particular, by this condition, we know that there exists some element $d$ for which $\pi(d)= \varphi(C)$. Since $C$ is natural, this means that $d\in C^\Imc$ and thus
% $$
% \varphi(C) \supseteq \{f \in \Fmc \mid \iota(d,f) \text{ for all $d\in C^\Imc$}\}
% $$
%\end{proof}
% Note that the link with FCA also clarifies why we want $\varphi(C)$ to be a proper subset of \Fmc: in this way $(\emptyset,\Fmc)$ is also a formal concept, being the least element of the concept lattice.

Furthermore, note that Conditions 3 and  4 in  Definition~\ref{defFeatureInterpretation} ensure that $(\emptyset, \Fmc)$ is also a formal concept, in fact the least element of the concept lattice. In other words, if $C$ is a natural concept in $\Imf$ such that $C^\Imc = \emptyset$, then $\varphi(C) = \Fmc$. One consequence of this property is the following. 
\begin{observation}\label{obs:NatBet} For every interpretation $(\Imc, \Fmc, \pi)$, it holds that $(C \bet D)^\Imc \neq \emptyset$ iff $C^\Imc \neq \emptyset$ or $D^\Imc \neq \emptyset$. 
\end{observation}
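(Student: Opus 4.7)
The plan is to prove both directions by case analysis, leveraging the definition $(C \bet D)^\Imc = \{d \in \Delta^\Imc \mid \varphi(C)\cap\varphi(D) \subseteq \pi(d)\}$ together with the structural conditions (3) and (4) from Definition~\ref{defFeatureInterpretation}. Since the $\bet$ constructor is only applied to natural concepts, $C$ and $D$ are natural, so $\varphi(C)$ and $\varphi(D)$ fully characterize them via Lemma~\ref{prop:NatIncl}.

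For the forward direction (contrapositive), I would assume $C^\Imc = \emptyset$ and $D^\Imc = \emptyset$ and show $(C \bet D)^\Imc = \emptyset$. Directly from the definition $\varphi(C) = \bigcap_{d \in C^\Imc} \pi(d)$, an empty intersection of subsets of $\Fmc$ evaluates to $\Fmc$, so $\varphi(C) = \varphi(D) = \Fmc$, and hence $\varphi(C) \cap \varphi(D) = \Fmc$. By Condition~3 of Definition~\ref{defFeatureInterpretation}, every $\pi(d)$ is a \emph{proper} subset of $\Fmc$, so no element $d$ can satisfy $\Fmc \subseteq \pi(d)$. Therefore $(C \bet D)^\Imc = \emptyset$, as required.

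For the backward direction, I would assume without loss of generality that $C^\Imc \neq \emptyset$ and exhibit a witness in $(C \bet D)^\Imc$. Pick any $d_1 \in C^\Imc$. By the definition of $\varphi$, we have $\varphi(C) \subseteq \pi(d_1)$, and hence $\varphi(C) \cap \varphi(D) \subseteq \varphi(C) \subseteq \pi(d_1)$. This immediately gives $d_1 \in (C \bet D)^\Imc$, regardless of whether $D^\Imc$ is empty or not. (Alternatively, when both extensions are non-empty, one can cite Proposition~\ref{featureselem} directly, since $\mn{bet}(d_1, d_1, d_2)$ holds trivially for any $d_2 \in D^\Imc$.)

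The only mildly delicate point is handling the asymmetric case where, say, $C^\Imc \neq \emptyset$ but $D^\Imc = \emptyset$: Proposition~\ref{featureselem} cannot be invoked here because it assumes both extensions are non-empty, so one has to fall back on the raw definition of $(C \bet D)^\Imc$, together with the empty-intersection observation $\varphi(D) = \Fmc$, to see why any $d_1 \in C^\Imc$ still lies in the in-between set. This is really the only subtlety; the rest is a direct unfolding of definitions.
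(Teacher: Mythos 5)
Your proof is correct and takes essentially the same route the paper intends: the paper gives no separate proof of Observation~\ref{obs:NatBet}, deriving it directly from the remark immediately preceding it (that $C^\Imc = \emptyset$ forces $\varphi(C) = \Fmc$, via Conditions 3 and 4 of Definition~\ref{defFeatureInterpretation}), which is exactly your contrapositive argument for the forward direction, and your backward direction is the same direct unfolding of $\varphi(C)\cap\varphi(D) \subseteq \varphi(C) \subseteq \pi(d_1)$. One minor remark: your opening appeal to naturalness and Lemma~\ref{prop:NatIncl} is never actually used (and would be dubious in an arbitrary interpretation, where the observation is stated without any model assumption), but since the rest of your argument relies only on the raw definitions and Condition 3, this is harmless.
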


\subsection{Interpolation in Feature-enriched Models}
The following example illustrates how the feature-enriched semantics  enables interpolative inferences.
\begin{example}
Consider again the TBox $\mathcal{T}$ from Example \ref{exDefTBox} and the interpretation $\Imf$ from Example \ref{exInterpretation}. It is easy to verify that $\Imf$ is a model of $\mathcal{T}$. Moreover, $\Imf$ also satisfies the following concept inclusion:
$$
\textit{Zebra} \sqsubseteq \textit{Herbivore},
$$
which follows the inference pattern explained in the introduction, given that $\textit{Herbivore}$ is a natural concept.
In fact, this concept inclusion is entailed by $\mathcal{T}$. To see this, note that in any model $(\Jmc, \Fmc, \pi)$ of $\Tmc$, because of the concept inclusions \eqref{ci:RH} and \eqref{ci:GH},
% \begin{align*}
% \textit{Rabbit} &\sqsubseteq \textit{Herbivore} & \textit{Giraffe} &\sqsubseteq \textit{Herbivore}
% \end{align*}
it holds that 
\begin{align*}
    \varphi(\textit{Herbivore}) &\subseteq \varphi(\textit{Rabbit}) & \varphi(\textit{Herbivore}) &\subseteq \textit{Giraffe}.
\end{align*}    
That is, $$\varphi(\textit{Herbivore}) \subseteq  \varphi(\textit{Rabbit}\bet \textit{Giraffe}).$$ 
By Lemma~\ref{prop:NatIncl}, and because of $\textit{Herbivore} \in \mn{N^{Nat}_C}$, we have that $(\textit{Rabbit}\bet \textit{Giraffe})^\Jmc$ $\subseteq$ $\textit{Herbivore}^\Jmc$. Finally by concept inclusion \eqref{ci:ZRbG} we can conclude $\textit{Zebra}^\Jmc \subseteq \textit{Herbivore}^\Jmc$. %\demo
\end{example}
Clearly, the arguments used in the above example generalize. We thus have the following result, which provides the soundness of \emph{interpolative inferences}. 
\begin{lemma} \label{lemma:int}
Let $\Tmc$ be an $\EL^\bet$ TBox, and $C,D,B$ be natural concepts w.r.t\ \Tmc. 
If $\Tmc \models \{C \sqsubseteq B, D\sqsubseteq B\}$ then 
$\Tmc \models C \bet D \sqsubseteq B$. 
\end{lemma}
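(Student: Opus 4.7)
The plan is a direct model-theoretic argument combining Lemma~\ref{prop:NatIncl} with the definition of the betweenness operator. Fix an arbitrary feature-enriched model $\mathfrak{I}=(\Imc,\Fmc,\pi)$ of $\Tmc$; it suffices to establish $(C\bet D)^\Imc \subseteq B^\Imc$.

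From the two hypotheses $\Tmc \models C \sqsubseteq B$ and $\Tmc \models D \sqsubseteq B$, applied to this particular model, I obtain $C^\Imc \subseteq B^\Imc$ and $D^\Imc \subseteq B^\Imc$. Since $B$ is natural, Lemma~\ref{prop:NatIncl} translates these extensional inclusions into feature-set inclusions in the opposite direction, namely $\varphi(B) \subseteq \varphi(C)$ and $\varphi(B) \subseteq \varphi(D)$. Hence $\varphi(B) \subseteq \varphi(C) \cap \varphi(D)$.

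Now pick any $d \in (C \bet D)^\Imc$. By the semantics of the in-between constructor, $\varphi(C) \cap \varphi(D) \subseteq \pi(d)$, and combining this with the previous inclusion gives $\varphi(B) \subseteq \pi(d)$. Because $B$ is natural, the preceding Proposition characterising $\mn{Nat}(\Tmc)$ concepts yields $B^\Imc = \{e \in \Delta^\Imc \mid \varphi(B) \subseteq \pi(e)\}$, so $d \in B^\Imc$ as required. Since $\mathfrak{I}$ was arbitrary, $\Tmc \models C \bet D \sqsubseteq B$.

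There is essentially no hard step here; the work has been done in advance by the earlier observations. The only thing to be careful about is that Lemma~\ref{prop:NatIncl} is applied with $B$ in the role of the natural concept (so that an extensional inclusion $C^\Imc \subseteq B^\Imc$ can be turned into $\varphi(B) \subseteq \varphi(C)$), and that the final extensional characterisation $B^\Imc = \{d \mid \varphi(B)\subseteq \pi(d)\}$ is available for arbitrary natural $B$, not just for natural concept \emph{names}; this is precisely the content of the Proposition about $\mn{Nat}(\Tmc)$, which is why the syntactic restriction to $C,D,B \in \mn{Nat}(\Tmc)$ in the lemma statement is the right one.
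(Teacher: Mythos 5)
Your proof is correct and takes essentially the same route as the paper's (which is given as the generalization of the worked \emph{Zebra}/\emph{Herbivore} example): fix a model, pass from the entailed inclusions to $\varphi(B)\subseteq\varphi(C)\cap\varphi(D)$, and convert back to an extensional inclusion using the naturalness of $B$. The only cosmetic difference is that where the paper closes by applying the nontrivial direction of Lemma~\ref{prop:NatIncl} to $C\bet D$, you inline that step pointwise via the characterization $B^\Imc=\{d\in\Delta^\Imc\mid\varphi(B)\subseteq\pi(d)\}$ from the proposition on $\mn{Nat}(\Tmc)$ together with the semantics of $\bet$ --- the same reasoning, correctly noting that only $B$'s naturalness is doing semantic work.
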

However, the applicability of this lemma is limited, as it requires  specific knowledge about $C \bet D$. For example, consider the TBox $\Tmc'$ containing the following assertions: 
\begin{align*}
 A \sqcap C &\sqsubseteq B  &
& A \sqcap D \sqsubseteq B & % \label{eqInterpolationConjunctionEx1}\\
X &\sqsubseteq C\bet D &
%&\mn{Nat}(B)
\end{align*}
with $B$ a natural concept w.r.t\ $\Tmc'$.
Since $C \bet D$ is characterised by all common features of $C$ and $D$,  a plausible inference from $\Tmc'$ is that  
% While we might expect that $\mathcal{T}$ entails $ A \sqcap X \sqsubseteq B$ by interpolation, this is in fact not the case. In particular, as Lemma \ref{lemma:int} makes clear, this only follows if we can establish that $ A \sqcap C$ and $ A \sqcap D$ are non-empty concepts. This can be done by introducing a fresh role name $r$ and adding the following concept inclusions:
% \begin{align}
% \top &\sqsubseteq \exists r.(A \sqcap C) &
% \top &\sqsubseteq \exists r.  (A \sqcap D)\label{eqInterpolationConjunctionEx2}
% \end{align}
$ A \sqcap (C \bet D) \sqsubseteq B$ holds, which in turns allows us to draw the conclusion that $A \sqcap X \sqsubseteq B$. However, using Lemma~\ref{lemma:int} 
%does not justify this inference. 
%\smallskip
%\noindent \textbf{Interaction between $\bet$ and $\sqcap$. } 
%Indeed, using Lemma~\ref{lemma:int} in the example above
%However, note that 
we can only soundly infer that $\Tmc' \models (A \sqcap C) \bet (A \sqcap D) \sqsubseteq B$, provided that $A\sqcap C$ and $A \sqcap D$ are both natural w.r.t.\ $\Tmc'$.
%In order to enable such inferences 
Thus, we shall investigate under which conditions $A \sqcap (C \bet D) \sqsubseteq (A\sqcap C) \bet (A \sqcap D)$ holds.

\begin{lemma}\label{prop:ConjFeat}
Let $C,D$ be natural concepts w.r.t.\ a given TBox \Tmc. For every model $\Imf$ of $\Tmc$, it holds that $\varphi(C \sqcap D) = \varphi(C) \cup \varphi(D)$. 
\end{lemma}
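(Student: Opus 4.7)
The plan is to use the characterization from the preceding Proposition: since $C$ and $D$ are natural w.r.t.\ $\mathcal{T}$, in any model $\mathfrak{I}=(\mathcal{I},\mathcal{F},\pi)$ we have $C^\mathcal{I}=\{d\mid \varphi(C)\subseteq\pi(d)\}$ and $D^\mathcal{I}=\{d\mid \varphi(D)\subseteq\pi(d)\}$, so
\[
(C\sqcap D)^\mathcal{I} \;=\; \{d\in\Delta^\mathcal{I}\mid \varphi(C)\cup\varphi(D)\subseteq\pi(d)\}.
\]
From this equality both inclusions follow, and I would prove them separately.

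For the inclusion $\varphi(C)\cup\varphi(D)\subseteq\varphi(C\sqcap D)$, I would argue directly from the definition $\varphi(C\sqcap D)=\bigcap_{d\in(C\sqcap D)^\mathcal{I}}\pi(d)$: any $d\in(C\sqcap D)^\mathcal{I}$ lies in both $C^\mathcal{I}$ and $D^\mathcal{I}$, so $\pi(d)$ contains $\varphi(C)$ and $\varphi(D)$, hence their union, and intersecting over all such $d$ preserves this.

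The reverse inclusion is where Condition~4 of Definition~\ref{defFeatureInterpretation} becomes essential, and it is the step I expect to be the crux. I would argue by contradiction: suppose some $f\in\varphi(C\sqcap D)$ satisfies $f\notin\varphi(C)\cup\varphi(D)$. Let $\mathcal{F}' := \varphi(C)\cup\varphi(D)$; then $\mathcal{F}'\subsetneq\mathcal{F}$, since $f\in\mathcal{F}\setminus\mathcal{F}'$. By Condition~4 there exists $d\in\Delta^\mathcal{I}$ with $\pi(d)=\mathcal{F}'$. Then $\varphi(C)\subseteq\pi(d)$ and $\varphi(D)\subseteq\pi(d)$, so by the natural-concept characterization $d\in C^\mathcal{I}\cap D^\mathcal{I}=(C\sqcap D)^\mathcal{I}$. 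But $f\in\varphi(C\sqcap D)$ would force $f\in\pi(d)=\mathcal{F}'$, contradicting the choice of $f$.

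The main obstacle is really just recognizing that the feature-independence assumption (Condition~4) is what drives the non-trivial direction; without it, there could be features in $\varphi(C\sqcap D)$ that are merely ``accidentally'' common to all witnesses of $\varphi(C)\cup\varphi(D)$. Once one sees that the witness $d$ with $\pi(d)$ exactly equal to $\varphi(C)\cup\varphi(D)$ is guaranteed to exist in the domain, the contradiction is immediate.
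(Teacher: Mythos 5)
Your proof is correct and takes essentially the same route as the paper's: the inclusion $\varphi(C)\cup\varphi(D)\subseteq\varphi(C\sqcap D)$ directly from the definition of $\varphi$, and the reverse inclusion by contradiction using Condition~4 together with the naturalness of $C$ and $D$. The only (cosmetic, if anything slightly cleaner) difference is that you instantiate the Condition~4 witness at $\pi(d)=\varphi(C)\cup\varphi(D)$ directly, whereas the paper picks an arbitrary $d\in(C\sqcap D)^{\mathcal{I}}$ and passes to a witness with feature set $\pi(d)\setminus\{f\}$, which tacitly assumes $(C\sqcap D)^{\mathcal{I}}\neq\emptyset$ --- a corner case your version handles automatically.
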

% \begin{proof}
% Since  every $d \in (C \sqcap D)^\Imc$  is such that $\varphi(C) \subseteq \pi(d)$ and $\varphi(D) \subseteq \pi(d)$, we have that $\varphi(C) \cup \varphi(D) \subseteq \varphi(C \sqcap D)$. 

% Now, to see that $\varphi(C \sqcap D) \subseteq \varphi(C) \cup \varphi(D)$, assume towards a contradiction that there is some $f \in \varphi(C\sqcap D)$ such that  $f \not \in \varphi(C)$, $f \not \in \varphi(D)$. Let $d \in (C \sqcap D)^\Imc$, we have that $f \in \pi(d)$. By Condition 4 in Definition~\ref{defFeatureInterpretation}, there is some $d' \in \Delta^\Imc$ such that $\pi(d')= \pi(d) \setminus \{f\}$. Moreover, $\varphi(C) \subseteq \pi(d')$ and $\varphi(D) \subseteq \pi(d')$. But since both $C$ and $D$ are natural, this means $d' \in C^\Imc \cap D^\Imc$, i.e.\ $d' \in (C\sqcap D)^\Imc$.  This yields a contradiction since it implies $f \not \in \varphi(C \sqcap D)$.  
% \end{proof}
%As the previous lemma shows, the proposed semantics enables interpolative inferences. However,
We use this property to show that interpolation pattern exemplified above is indeed sound for natural concepts. 
%
%The following is an easy consequence of Proposition~\ref{prop:natIncl}.
\begin{theorem}\label{lemma:intcj}
Let \Tmc be an $\EL^\bet$-TBox, 
%let $ A$ either be a concept occurring in \Tmc or $\top$, 
and let $A,B,C,D$ be natural concepts w.r.t.\ \Tmc.
If $\Tmc \models \{ A \sqcap C \sqsubseteq B,  A\sqcap D \sqsubseteq B \}$
%\end{itemize}
then $\Tmc \models  A \sqcap  (C \bet D)  \sqsubseteq B$.
\end{theorem}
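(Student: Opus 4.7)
The plan is to work semantically: fix an arbitrary model $\Imf = (\Imc,\Fmc,\pi)$ of $\Tmc$ and show $(A \sqcap (C \bet D))^\Imc \subseteq B^\Imc$. Since $B$ is natural, by Lemma~\ref{prop:NatIncl} it will suffice to show that every $d \in (A \sqcap (C \bet D))^\Imc$ satisfies $\varphi(B) \subseteq \pi(d)$.

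First I would translate the two given subsumptions into feature-level statements. Since $A, C, D$ are natural w.r.t.\ $\Tmc$, so are $A \sqcap C$ and $A \sqcap D$ (closure of $\mn{Nat}(\Tmc)$ under $\sqcap$). Applying Lemma~\ref{prop:NatIncl} to the premises and then Lemma~\ref{prop:ConjFeat} to unfold the conjunctions gives
\begin{align*}
\varphi(B) &\subseteq \varphi(A \sqcap C) = \varphi(A) \cup \varphi(C), \\
\varphi(B) &\subseteq \varphi(A \sqcap D) = \varphi(A) \cup \varphi(D).
\end{align*}
Intersecting these two upper bounds, a feature $f \in \varphi(B)$ must either lie in $\varphi(A)$ or lie in both $\varphi(C)$ and $\varphi(D)$, so
$$\varphi(B) \subseteq \varphi(A) \cup \bigl(\varphi(C) \cap \varphi(D)\bigr).$$

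Next I would exploit this inclusion objectwise. Let $d \in (A \sqcap (C \bet D))^\Imc$. Because $A$ is natural, $d \in A^\Imc$ gives $\varphi(A) \subseteq \pi(d)$. By the semantics of $\bet$, $d \in (C \bet D)^\Imc$ gives $\varphi(C) \cap \varphi(D) \subseteq \pi(d)$. Combining,
$$\varphi(A) \cup \bigl(\varphi(C) \cap \varphi(D)\bigr) \subseteq \pi(d),$$
and therefore $\varphi(B) \subseteq \pi(d)$. Since $B$ is natural, condition~2 of Definition~\ref{defFeatureModel} (extended to all natural concepts by the preceding proposition) yields $d \in B^\Imc$, which is what we wanted.

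The only delicate step is the distributivity argument that extracts $\varphi(A) \cup (\varphi(C) \cap \varphi(D))$ from the two premise inclusions; this is what ultimately licenses pushing the common $A$ through the in-between operator and is the reason naturalness of all four concepts is needed. Once Lemma~\ref{prop:ConjFeat} is in hand the rest is a straightforward chase through the definitions.
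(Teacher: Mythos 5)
Your proof is correct, but it takes a genuinely different route from the paper's. The paper factors the argument through two pieces: it first invokes Lemma~\ref{lemma:int} to obtain $\Tmc \models (A \sqcap C) \bet (A \sqcap D) \sqsubseteq B$, and then proves the auxiliary subsumption $\Tmc \models A \sqcap (C \bet D) \sqsubseteq (A\sqcap C)\bet(A\sqcap D)$ by computing, in an arbitrary model, the chain of feature-set equalities $\varphi(A \sqcap (C\bet D)) = \varphi(A) \cup (\varphi(C)\cap\varphi(D)) = (\varphi(A)\cup\varphi(C))\cap(\varphi(A)\cup\varphi(D)) = \varphi((A\sqcap C)\bet(A\sqcap D))$ and then applying Lemma~\ref{prop:NatIncl}. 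You bypass Lemma~\ref{lemma:int} and the intermediate concept $(A\sqcap C)\bet(A\sqcap D)$ entirely: you lift both premises to the feature level via Lemma~\ref{prop:NatIncl} and Lemma~\ref{prop:ConjFeat}, getting $\varphi(B) \subseteq \varphi(A)\cup\varphi(C)$ and $\varphi(B)\subseteq\varphi(A)\cup\varphi(D)$, intersect, and then check membership pointwise using the semantics of $\bet$ together with naturalness of $B$. The pivotal computation is the same in both proofs --- the distributivity $(\varphi(A)\cup\varphi(C))\cap(\varphi(A)\cup\varphi(D)) = \varphi(A)\cup(\varphi(C)\cap\varphi(D))$, which is where naturalness of $A$, $C$, $D$ enters through Lemma~\ref{prop:ConjFeat} --- so the two arguments are mathematically equivalent, but yours is more elementary and self-contained: it effectively inlines Lemma~\ref{lemma:int} and avoids having to note that $C\bet D$ and $(A\sqcap C)\bet(A\sqcap D)$ are themselves natural, which the paper's proof needs in order to apply Lemma~\ref{prop:ConjFeat} and Lemma~\ref{prop:NatIncl} to those compound concepts. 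What the paper's factorization buys in exchange is the isolated, reusable subsumption $A\sqcap(C\bet D)\sqsubseteq(A\sqcap C)\bet(A\sqcap D)$, which is precisely the principle motivated in the discussion preceding the theorem (and whose failure under the geometric semantics is what later forces the introduction of non-interference assertions). One cosmetic remark: $d\in A^\Imc$ yields $\varphi(A)\subseteq\pi(d)$ directly from the definition of $\varphi$ as an intersection over $A^\Imc$, with no appeal to naturalness of $A$; naturalness of $A$ is genuinely needed only inside Lemma~\ref{prop:ConjFeat}.
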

\section{Geometric Semantics}\label{secGeometricSemantics}
We now turn to a different approach for defining the semantics of $\bet$ and natural concepts, which is inspired by conceptual spaces~\cite{gardenfors2000conceptual}. The main idea is that concepts are represented as regions in a Euclidean space, with natural concepts corresponding to convex regions. One important advantage of the geometric semantics is that it is closer to the vector space embeddings that are commonly used when learning concept representations from data. In other words, if knowledge about conceptual betweenness is learned from vector space representations, then it seems natural to define the semantics in a similar way. Another advantage is that the geometric semantics avoids some of the counter-intuitive restrictions of the feature-enriched semantics, in terms of how betweenness and disjointness interact. This means that the geometric semantics can also be used for extensions of \EL in which disjointness can be expressed, although we leave a detailed study of the computational properties of interpolation in such extensions as a topic for future work. On the other hand, as we will see in the next section, these advantages come at a computational cost, even when staying within the context of \EL.

One key issue of the geometric semantics is that, unlike for the feature-enriched semantics, $X \sqsubseteq C \bet D$ does not imply $X \sqcap A \sqsubseteq (C \sqcap A) \bet (D\sqcap A)$, even when all of the concepts involved are natural. For this reason, we extend the language of $\EL^{\bet}$ with assertions of the form $A \lessdot (C,D)$, where $A$ is a natural concept and $C,D$ are natural concept names. 
%\steven{Intuitively, $A \lessdot (C,D)$ means that the properties needed to define $A$, at the semantic level, are independent from the properties that are needed to define $C$. In the terminology of conceptual spaces, it is intended to capture that the concepts $A$ and $C$ are defined in different domains.}
We will refer to these expressions as \emph{non-interference assertions}. Their aim is to encode how  $C\bowtie D$ interacts with intersections with $A$ (explained in more detailed below). We will refer to the resulting logic as $\EL^\bet_{\textit{reg}}$. In particular, \emph{$\EL^\bet_{\textit{reg}}$ TBoxes} are finite sets of concept inclusions and non-interference assertions.

%(i) natural concept assertions $\mn{Nat}(C)$, 
 
%In conceptual spaces, concepts are modelled as convex regions in some geometric space.  Basically, a conceptual space is a multidimensional space in which each dimension represents a cognitively meaningful feature of that which is being represented. This is formalized as the Cartesian product  $Q_1 \times \cdots \times Q_m$ of quality dimensions, each of which corresponds to a feature or quality. 

%One motivation behind conceptual spaces\cite{} was to provide new tools for manipulating explicit representations of concepts. In particular, geometric spaces provide representational capacities and analysis tools not naturally available in the symbolic representational forms. 

\subsection{Interpretations}
Geometric interpretations represent concepts as regions, where individuals are intuitively represented as points. In addition to specifying these regions, however, geometric interpretations also specify some additional mappings, which will be needed to formalize the idea of non-interference.

In what follows, we use  
$\mn{conv}(X)$ to denote  the \emph{convex hull} of $X$, that is the intersection of all the convex sets that contain $X$, and $\oplus$ to denote the concatenation of vectors.

\begin{definition}[Geometric interpretation]
Let $\Sigma \subset \mn{N_C} \cup \mn{N_R}$.
%and $\Omc$ a non-empty (possibly infinite) set of objects. 
An \emph{$m$-dimensional geometric $\Sigma$-interpretation $\Imc$} 
%$\langle \Sigma,\Omc \rangle$ 
assigns to 
%assigns to
%every object $o \in \Omc$ a vector $\Imf(o) \in \mathbb{R}^m$, to 
every concept name $A \in \Sigma$ a region $\mn{reg}_\Imc(A) \subseteq \mathbb{R}^m$ and 
to every role $r\in \Sigma$ a region $\mn{reg}_\Imc(r) \subseteq \mathbb{R}^{2\cdot m}$. Furthermore, $\mathcal{I}$ specifies for all natural concept names $A,B$ in $\Sigma \cap \mn{N^{Nat}_C}$, a mapping $\kappa_{(A,B)}^{\mathcal{I}}$ from $\mn{conv}(\text{reg}_{\mathcal{I}} (A) \cup \text{reg}_{\mathcal{I}} (B))$ to $\text{reg}_{\mathcal{I}} (A) \times \text{reg}_{\mathcal{I}} (B)$ such that for  every $p \in \mn{conv}(\text{reg}_{\mathcal{I}} (A) \cup \text{reg}_{\mathcal{I}} (B))$ with $\kappa_{(A,B)}^{\mathcal{I}}(p)=(p_1,p_2)$ it holds that
\begin{itemize}
\item  $p$ is between $p_1$ and $p_2$, i.e.\ $p= \lambda p_1 + (1-\lambda)p_2$ for some $\lambda\in[0,1]$;
\item $\kappa_{(B,A)}^{\mathcal{I}}(p)=(p_2,p_1)$.
\end{itemize}
\end{definition}
%
%Let us denote with $\Delta$ the set of vectors $\{\Imf(o) \mid o \in \Omc\}$.  For convenience, for every $o \in \Omc$, we will write $\vec{o}$ to denote $\Imf(o)$. 
%
Note that for a point $p\in \mn{conv}(\text{reg}_{\mathcal{I}} (A) \cup \text{reg}_{\mathcal{I}} (B))$ it is always possible to find points $p_1 \in \text{reg}_{\mathcal{I}} (A)$ and $p_2 \in \text{reg}_{\mathcal{I}} (B)$ such that $p$ is between $p_1$ and $p_2$. Intuitively, however, the mapping $\kappa_{(A,B)}$ selects the pair $(p_1,p_2)$ which is most ``similar'' to $p$. This intuition will be made explicit when discussing the semantics of non-interference assertions below.

The interpretation of complex $\EL^\bet$ concepts is defined as follows.
\begin{align*}
%   &  \mn{reg}_\Imc(\bot)  = \emptyset \\
   & \mn{reg}_\Imc(\top)  = \mathbb{R} \\
   & \mn{reg}_\Imc(C \sqcap D) = \mn{reg}_\Imc(C) \cap \mn{reg}_\Imc(D)\\
    %\item $\Imf(C \sqcup D) = \Imf(C) \cup \Imf(D)$
   & \mn{reg}_\Imc(\exists r. C)  = \{ p \in \mathbb{R}^m \mid \exists p' \in \mn{reg}_\Imc(C),  p {\oplus} p' \in \mn{reg}_\Imc(r) \}\\
    %\item $\Imf(\forall r. C) = \{ \vec{v} \in \Delta \mid
    %\forall \vec{v}' \in \mathbb{R}, \vec{v}\oplus \vec{v}' \in \Imf(r) \textnormal{ implies } \vec{v}' \in \Imc(C) \}$
   & \mn{reg}_{\Imc}( C_1 \bet C_2)  = \mn{conv}(\mn{reg}_\Imc(C_1) \cup \mn{reg}_\Imc(C_2))
   \end{align*}
%
%where $\oplus$ denotes the concatenation of vectors and 
%$\mn{conv}(X)$ denotes the \emph{convex hull} of $X$, that is the intersection of all the convex sets that contain $X$. 
Note how the definition of $\mn{reg}_{\Imc}( C_1 \bet C_2)$ defines conceptual betweenness in terms of geometric betweenness, i.e.\ the instances of the concept $C_1 \bet C_2$ are intuitively those individuals which are modelled by points that are geometrically between the regions modelling $C_1$ and $C_2$.

\begin{figure}
\centering
    \includegraphics[width=150pt]{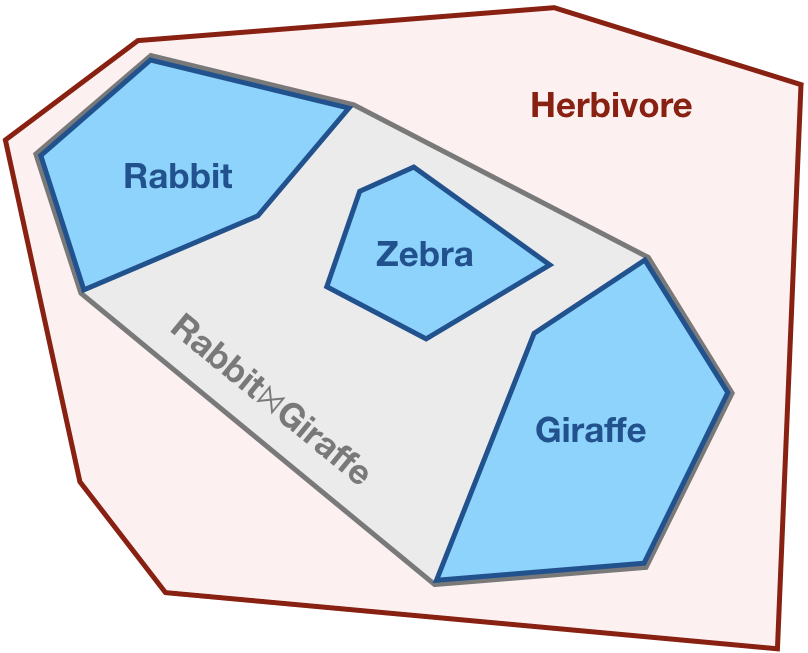}
    \caption{Illustration of a two-dimensional geometric $\Sigma$-interpretation.}
    \label{figGeometricEx}
\end{figure}

\begin{example}
Figure \ref{figGeometricEx} depicts a two-dimensional $\Sigma$-interpretation of the concepts \textit{Rabbit}, \textit{Zebra}, \textit{Giraffe} and \textit{Herbivore} from Example \ref{exDefTBox}.
\end{example}
The semantics of $\EL^\bet_{\textit{reg}}$ TBox assertions is defined as follows.
An m-dimensional  $\Sigma$-interpretation $\Imc$ \emph{satisfies a concept inclusion $C \sqsubseteq D$}, for $C,D \in \Sigma$, if 
$\mn{reg}_\Imc(C) \subseteq \mn{reg}_\Imc(D)$. 
%It \emph{satisfies $\mn{Nat}(C)$} if $\mn{reg}_\Imc(C)$ is \emph{convex}. 
%To define the semantics of non-interference assertions, we first need to introduce some terminology. Given an $m$-dimensional interpretation $\Imc$ and a concept  $C$,we say that a coordinate $i \in \{1,\dots, m\}$ is \emph{irrelevant for $C$ in $\Imc$} if whenever $(x_1, \dots, x_m) \in \mn{reg}_\Imc(C)$ then it also holds that $(x_1, \dots,x_{i-1},y, x_{i+1}, x_m) \in \mn{reg}_\Imc(C)$, for every $y\in \mathbb{R}$. A coordinate $i$ is said to be \emph{relevant for $C$ in \Imc} if is not irrelevant. We denote the   set of \emph{relevant coordinates for $C$ in \Imc} by $\mn{cr}_\Imc(C)$.  The interpretation $\Imc$ \emph{satisfies the non-interference assertion $C\#D$} if $\mn{cr}_\Imc(C) \cap \mn{cr}_\Imc(D) = \emptyset$.
%\steven{The interpretation $\mathcal{I}$ \emph{satisfies the non-interference assertion $X\lessdot(A,B)$} if for all $p\in \text{reg}_{\mathcal{I}} (X)  \cap \mn{conv}(\text{reg}_{\mathcal{I}} (A) \cup \text{reg}_{\mathcal{I}} (B))$ with $\kappa_{(A,B)}^{\mathcal{I}}(p)=(p_1,p_2)$ it holds that $p_1 \in \text{reg}_{\mathcal{I}} (X)$. }
%
The interpretation $\mathcal{I}$ \emph{satisfies the non-interference assertion $X\lessdot(A,B)$} if for all $p\in (X  \sqcap (A \bet B))^\Imc$,  whenever $\kappa_{(A,B)}^{\mathcal{I}}(p)=(p_1,p_2)$,  it holds that $p_1 \in X^\Imc$. 

%Similarly, $\mathcal{I}$ \emph{satisfies the non-interference assertion $X\boxdot(A,B)$} if for all $p\in \text{reg}_{\mathcal{I}} (X)  \cap \mn{conv}(\text{reg}_{\mathcal{I}} (A) \cup \text{reg}_{\mathcal{I}} (B))$ with $\kappa_{(A,B)}^{\mathcal{I}}(p)=(p_1,p_2)$ it holds that $p_1,p_2 \in \text{reg}_{\mathcal{I}} (X)$. 

The intuition behind non-interference relies on the notion of domains from the theory of conceptual spaces. For instance, if $X \sqsubseteq \textit{Red} \bowtie \textit{Blue}$ then we would expect that $(X \sqcap \textit{Small}) \sqsubseteq (\textit{Red} \sqcap \textit{Small}) \bowtie (\textit{Blue} \sqcap \textit{Small})$ also holds. This is because $\textit{Red}$ and $\textit{Blue}$ are  defined in the color domain, whereas \textit{Small} is defined in the size domain. Concepts that rely on disjoint sets of domains intuitively cannot interfere with betweenness assertions. In our setting, we only have a single vector space, whereas in the theory of conceptual spaces each domain corresponds to a separate vector space. Instead, as was argued in \cite{DBLP:conf/ecai/JameelS16}, we can think of such domains as sub-spaces of $\mathbb{R}^m$. The intended intuition is that $\kappa_{(A,B)}^{\mathcal{I}}(p)$ selects a pair of points $(p_1,p_2)$ which only differ from $p$ in the sub-spaces of the domains that are relevant to $A$ and $B$. The statement $X\lessdot(A,B)$ then intuitively asserts that the domains that are relevant for $A$ and $B$ are disjoint from the domains that are relevant for $X$.

\begin{figure}
    \centering
    \includegraphics[width=100pt]{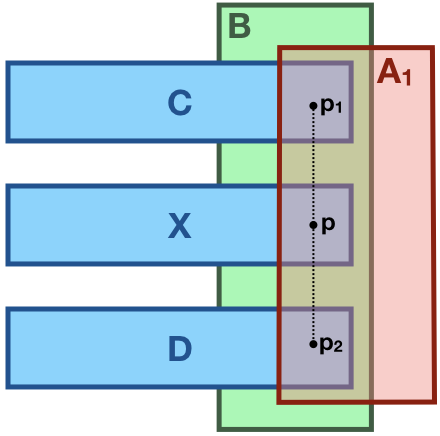}
    \hfill
    \includegraphics[width=100pt]{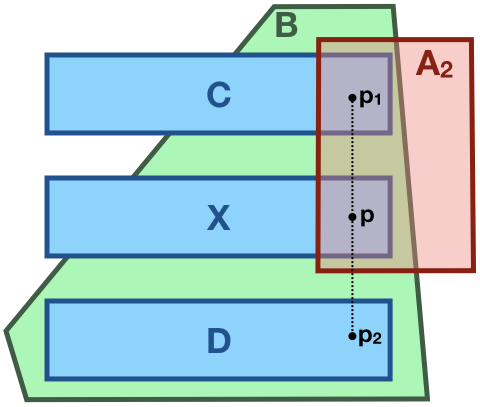}
    \caption{A configuration in which both $A_1 \lessdot (C,D)$ and $A_1 \lessdot (D,C)$ can be satisfied  (left), and a configuration in which $A_2 \lessdot (C,D)$ can be satisfied but not $A_2 \lessdot (D,C)$ (right)}.
    \label{figNonInterference}
\end{figure}

\begin{example}
The left-hand side of Figure \ref{figNonInterference} depicts a configuration in which the non-interference assertions $A_1 \lessdot (C,D)$ and $A_1 \lessdot (D,C)$ can be satisfied. In particular, a suitable mapping $\kappa_{(A,B)}(p)=(p_1,p_2)$ can be found by choosing $p_1 \in \text{reg}_{\mathcal{I}}(C)$ and $p_2 \in \text{reg}_{\mathcal{I}}(D)$ such that $p_1$ and $p_2$ share their first coordinate with $p$. Similarly, the right-hand side of Figure \ref{figNonInterference} shows a configuration in which $A_2 \lessdot (C,D)$ can be satisfied, but not $A_2 \lessdot (D,C)$.
\end{example}

\begin{definition}
Let $\Tmc$ be an $\EL^{\bet}_{\textit{reg}}$ TBox.  
An $m$-dimensional geometric $\Sigma$-interpretation $\Imc$ is an \emph{$m$-dimensional $\Sigma$-model of  \Tmc} if the following are satisfied: 
\begin{enumerate}
    \item all the concept and role names appearing in $\Tmc$ are included in $\Sigma$;
    \item \Imc satisfies every concept inclusion in $\Tmc$;
\item \Imc satisfies every 
%natural concept assertion $\mn{Nat}(C)$, 
non-interference assertion in $\Tmc$; 
\item  $\mn{reg}_\Imc(A)$ is a convex region for every concept name $A \in \Sigma \cap \mn{N^{Nat}_C}$.
\end{enumerate}
We refer to $\Imc$ as a geometric model, or simply a model, if $m$ and $\Sigma$ are clear from the context.
\end{definition}
%
%
%%%%%%%
%%%%%%%
\subsection{Interpolation in Geometric Models}
\begin{figure}
    \centering
    \includegraphics[width=100pt]{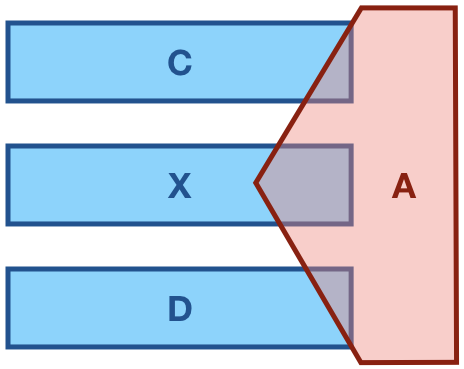}
    \hfill
    \includegraphics[width=100pt]{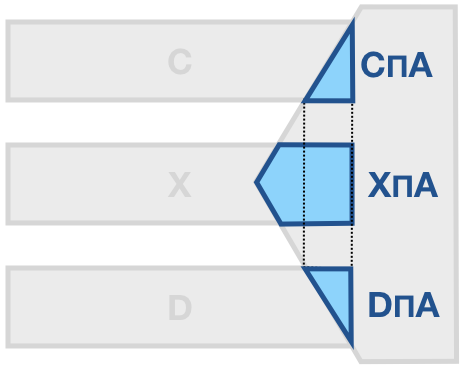}
    \caption{Even though the region representing $X$ is between those representing $C$ and $D$ (left), the region for $X\sqcap A$ is not between those for $C\sqcap A$ and $D\sqcap A$ (right).}
    \label{figInterference}
\end{figure}
%\subsubsection{Importance of non-interference.} 
It is easy to see that Lemma \ref{lemma:int} also holds for the geometric semantics, showing that basic interpolative inferences are sound. The reason why we need to consider non-interference is related to the interaction between $\bet$ and $\sqcap$. Recall from Section \ref{secFeatureSemantics} that interpolation with inclusions of the form  $A \sqcap C \sqsubseteq B$ and $A \sqcap D \sqsubseteq B$ required   $A, C $ and  $D$ to be natural concepts in order to have e.g.,  that $\varphi(A \sqcap C) = \varphi(A) \cup \varphi(C)$. 
 In the case of the geometric semantics, naturalness alone is not sufficient to allow us to derive $\Tmc \models A \sqcap (C\bet D) \sqsubseteq B$ from $\Tmc \models \{ A \sqcap C \sqsubseteq B,  A\sqcap D \sqsubseteq B \}$. To see this, consider the $2$-dimensional interpretation illustrated in Figure \ref{figInterference}.  We have that $\mn{reg}_\Imc(X)$ is between $\mn{reg}_\Imc(C)$ and $\mn{reg}_\Imc(D)$ and that $\mn{reg}_\Imc(C\sqcap A)$ and $\mn{reg}_\Imc(D\sqcap A)$ are convex (which is the geometric characterization of naturalness). Nonetheless, we can see that $\mn{reg}_\Imc(X\sqcap A)$ is not between $\mn{reg}_\Imc(C\sqcap A)$ and $\mn{reg}_\Imc(D\sqcap A)$. 
A way to enable interpolative reasoning between conjunctions of concepts is to require that the concepts involved are non-interfering. This is formalized in the following lemma.

\begin{proposition}\label{propSoundnessGeoLeftNI}
Let \Tmc be an $\EL^\bet_{\textit{reg}}$-TBox.
If $\mathcal{T}\models \{A\lessdot(C,D), A \sqcap C \sqsubseteq B, D \sqsubseteq B\}$, with $A,B$ natural concepts and $C,D \in \mn{N^{Nat}_C}$, then $\mathcal{T} \models A\sqcap (C \bowtie D) \sqsubseteq B$. 
\end{proposition}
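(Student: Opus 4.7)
The plan is to fix an arbitrary geometric model $\mathcal{I}$ of $\mathcal{T}$ and show that every point $p \in \text{reg}_{\mathcal{I}}(A \sqcap (C \bet D))$ lies in $\text{reg}_{\mathcal{I}}(B)$. The key structural ingredient will be the auxiliary mapping $\kappa^{\mathcal{I}}_{(C,D)}$ attached to any geometric interpretation, together with the non-interference assertion $A \lessdot (C,D)$, which jointly let us factor such a $p$ through a point of $(A \sqcap C)^{\mathcal{I}}$ and a point of $D^{\mathcal{I}}$.

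First I would record a preliminary observation: for every natural concept $E$ (in the sense of Section~4), $\text{reg}_{\mathcal{I}}(E)$ is convex. This goes by induction on the build-up of $E$: natural concept names are convex by clause (4) of the definition of a model; intersections of convex sets are convex, handling $E_1 \sqcap E_2$; and $\text{reg}_{\mathcal{I}}(E_1 \bet E_2) = \text{conv}(\text{reg}_{\mathcal{I}}(E_1) \cup \text{reg}_{\mathcal{I}}(E_2))$ is convex by definition. In particular $\text{reg}_{\mathcal{I}}(B)$ is convex, which is what we will need at the end.

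Next, take any $p \in \text{reg}_{\mathcal{I}}(A \sqcap (C \bet D))$. Because $C,D \in \mn{N^{Nat}_C}$, the mapping $\kappa^{\mathcal{I}}_{(C,D)}$ is defined on $\text{conv}(\text{reg}_{\mathcal{I}}(C) \cup \text{reg}_{\mathcal{I}}(D)) = \text{reg}_{\mathcal{I}}(C \bet D)$, so $\kappa^{\mathcal{I}}_{(C,D)}(p) = (p_1, p_2)$ is well defined with $p_1 \in \text{reg}_{\mathcal{I}}(C)$, $p_2 \in \text{reg}_{\mathcal{I}}(D)$, and $p = \lambda p_1 + (1-\lambda) p_2$ for some $\lambda \in [0,1]$. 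Since $\mathcal{I} \models A \lessdot (C, D)$ and $p \in (A \sqcap (C \bet D))^{\mathcal{I}}$, the non-interference clause delivers $p_1 \in \text{reg}_{\mathcal{I}}(A)$. Combined with $p_1 \in \text{reg}_{\mathcal{I}}(C)$ this yields $p_1 \in (A \sqcap C)^{\mathcal{I}}$, and then $A \sqcap C \sqsubseteq B$ gives $p_1 \in \text{reg}_{\mathcal{I}}(B)$; analogously, $D \sqsubseteq B$ gives $p_2 \in \text{reg}_{\mathcal{I}}(B)$. Finally, convexity of $\text{reg}_{\mathcal{I}}(B)$ and the fact that $p$ lies on the segment between $p_1$ and $p_2$ imply $p \in \text{reg}_{\mathcal{I}}(B)$, completing the argument.

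The only real subtlety, and the place I expect to have to be careful, is the interplay between the two defining conditions of $\kappa^{\mathcal{I}}_{(C,D)}$ and the non-interference assertion: one must ensure that the specific $(p_1,p_2)$ selected by $\kappa^{\mathcal{I}}_{(C,D)}$ is exactly the pair forced into $A^{\mathcal{I}}$ by $A \lessdot (C,D)$, rather than some other admissible pair of witnesses of $p$'s betweenness. The statement of the non-interference semantics is phrased precisely so that this matches, which is why the pair $(p_1,p_2)$ obtained from $\kappa^{\mathcal{I}}_{(C,D)}$ can be used directly without having to choose witnesses independently. Everything else is routine convex-geometry and model-chasing.
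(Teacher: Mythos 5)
Your proof is correct and follows essentially the same route as the paper's: apply $\kappa_{(C,D)}$ to a point of $\mn{reg}_\Imc(A\sqcap(C\bet D))$, use the non-interference assertion to place $p_1$ in $\mn{reg}_\Imc(A)\cap\mn{reg}_\Imc(C)$, push $p_1$ and $p_2$ into $\mn{reg}_\Imc(B)$ via the two concept inclusions, and conclude by convexity of $\mn{reg}_\Imc(B)$. The only difference is that you explicitly prove, by induction on the structure of natural concepts, that complex natural concepts such as $B$ denote convex regions --- a step the paper's proof uses silently (its model definition only requires convexity for natural concept \emph{names}) --- which is a welcome tightening rather than a departure.
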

% \begin{proof}
% Let $\mathcal{I}$ be a geometric model of $\mathcal{T}$.
% Let $p\in \text{reg}_{\mathcal{I}}(A\sqcap (C\bowtie D))$ and let us write $\kappa_{(C,D)}(p)= (p_1,p_2)$. Since $p\in \text{reg}_{\mathcal{I}}(A)$ and $\mathcal{I}\models A\lessdot(C,D)$, it follows that $p_1 \in \text{reg}_{\mathcal{I}}(A) \cap \text{reg}_{\mathcal{I}}(C)$, and thus, since $\mathcal{I}\models A \sqcap C \sqsubseteq B$, we also have $p_1\in \text{reg}_{\mathcal{I}}(B)$. Since $p_2\in \text{reg}_{\mathcal{I}}(D)$ and $\mathcal{I}\models D \sqsubseteq B$, we also have $p_2\in \text{reg}_{\mathcal{I}}(B)$. Since $p_1,p_2 \in \text{reg}_{\mathcal{I}}(B)$, $p$ is between $p_1$ and $p_2$, and $\text{reg}_{\mathcal{I}}(B)$ is convex, we can conclude $p\in \text{reg}_{\mathcal{I}}(B)$.
% \end{proof}

\noindent Analogously, we can show the following result.

\begin{proposition}\label{propSoundnessGeoBoxNI}
Let \Tmc be an $\EL^\bet_{\textit{reg}}$-TBox.
If $\mathcal{T}\models \{A\lessdot(C,D), A\lessdot(D,C), A \sqcap C \sqsubseteq B,A \sqcap  D \sqsubseteq B\}$, with $A,B$ natural concepts and $C,D \in \mn{N^{Nat}_C}$, then $\mathcal{T} \models A\sqcap (C\bowtie D) \sqsubseteq B$. 
\end{proposition}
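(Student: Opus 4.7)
The plan is to mimic the proof sketch of Proposition \ref{propSoundnessGeoLeftNI}, exploiting the symmetry afforded by having \emph{both} non-interference assertions $A \lessdot (C,D)$ and $A \lessdot (D,C)$. Let $\mathcal{I}$ be an arbitrary geometric model of $\mathcal{T}$ and fix a point $p \in (A \sqcap (C \bowtie D))^{\mathcal{I}}$. I want to show $p \in B^{\mathcal{I}}$. Since $p \in (C \bowtie D)^{\mathcal{I}} = \mn{conv}(\mn{reg}_{\mathcal{I}}(C) \cup \mn{reg}_{\mathcal{I}}(D))$, the mapping $\kappa^{\mathcal{I}}_{(C,D)}$ is defined at $p$; write $\kappa^{\mathcal{I}}_{(C,D)}(p) = (p_1,p_2)$, so $p_1 \in \mn{reg}_{\mathcal{I}}(C)$, $p_2 \in \mn{reg}_{\mathcal{I}}(D)$, and $p = \lambda p_1 + (1-\lambda) p_2$ for some $\lambda \in [0,1]$.

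First I would use $A \lessdot (C,D)$: because $p \in (A \sqcap (C \bowtie D))^{\mathcal{I}}$, the definition of satisfaction of a non-interference assertion yields $p_1 \in A^{\mathcal{I}}$, hence $p_1 \in (A \sqcap C)^{\mathcal{I}}$, and applying the inclusion $A \sqcap C \sqsubseteq B$ gives $p_1 \in B^{\mathcal{I}}$. Next I apply the symmetric assertion $A \lessdot (D,C)$. By the defining property of $\kappa$ we have $\kappa^{\mathcal{I}}_{(D,C)}(p) = (p_2, p_1)$, and since $p \in (A \sqcap (D \bowtie C))^{\mathcal{I}}$ (the regions of $C \bowtie D$ and $D \bowtie C$ coincide, as the convex hull of a union is symmetric in its arguments), $A \lessdot (D,C)$ forces $p_2 \in A^{\mathcal{I}}$. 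Thus $p_2 \in (A \sqcap D)^{\mathcal{I}} \subseteq B^{\mathcal{I}}$.

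At this point both endpoints $p_1$ and $p_2$ lie in $B^{\mathcal{I}}$, and $p$ is a convex combination of them. To conclude $p \in B^{\mathcal{I}}$ I need that $B^{\mathcal{I}}$ is convex. This is where the hypothesis that $B$ is a natural concept is used: natural concepts are built from natural concept names (which are interpreted as convex regions by clause~4 of the definition of a geometric model) using $\sqcap$ and $\bowtie$, and both operations preserve convexity (intersections of convex sets are convex, and $\bowtie$ is interpreted as a convex hull, which is convex by construction). A straightforward induction on the structure of $B$ therefore gives that $\mn{reg}_{\mathcal{I}}(B)$ is convex, and the convex combination $p = \lambda p_1 + (1-\lambda) p_2$ lies in $B^{\mathcal{I}}$.

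The proof is essentially a symmetric strengthening of the argument for Proposition \ref{propSoundnessGeoLeftNI}: there, only one non-interference assertion was required because the second premise ($D \sqsubseteq B$) did not involve a conjunction with $A$, so it sufficed to control $p_1$. Here, since both premises are of the form $A \sqcap (\cdot) \sqsubseteq B$, we must also pin down $p_2$ inside $A^{\mathcal{I}}$, and this is exactly what the second non-interference assertion $A \lessdot (D,C)$ supplies via the coupling $\kappa^{\mathcal{I}}_{(D,C)}(p) = (p_2,p_1)$. The only subtlety worth flagging explicitly is verifying that $\mn{reg}_{\mathcal{I}}(C \bowtie D) = \mn{reg}_{\mathcal{I}}(D \bowtie C)$ so that the second non-interference assertion applies to $p$; this is immediate from the definition, and I do not expect any real obstacle beyond carefully marshalling these semantic conditions.
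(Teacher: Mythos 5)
Your proof is correct and is exactly the argument the paper intends: the paper omits a proof of Proposition~\ref{propSoundnessGeoBoxNI}, stating only that it follows ``analogously'' to Proposition~\ref{propSoundnessGeoLeftNI}, and your symmetric application of the two non-interference assertions via $\kappa_{(D,C)}^{\mathcal{I}}(p)=(p_2,p_1)$, followed by convexity of $\mn{reg}_{\mathcal{I}}(B)$, is that analogous argument. You even make explicit two details the paper leaves implicit, namely $\mn{reg}_{\mathcal{I}}(C \bowtie D)=\mn{reg}_{\mathcal{I}}(D \bowtie C)$ and the induction showing natural concepts have convex regions.
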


\begin{example}
The configurations in Figure \ref{figNonInterference} illustrate how sound interpolative inferences can be made when the conditions from Proposition \ref{propSoundnessGeoLeftNI} (right-hand side) or Proposition \ref{propSoundnessGeoBoxNI} (left-hand side) are satisfied.
\end{example}

\noindent Finally, the following result shows that non-interference is closed under intersection.
\begin{proposition}\label{prop:ConjGeo}
Let $A,B$ be natural concepts and $C,D$ natural concept names.
If $\mathcal{T} \models A\lessdot (C,D)$ and $\mathcal{T} \models B\lessdot (C,D)$ then $\mathcal{T} \models (A\sqcap B)\lessdot (C,D)$.
\end{proposition}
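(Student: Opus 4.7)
The plan is to unfold the definition of non-interference and observe that the crucial mapping $\kappa_{(C,D)}^{\mathcal{I}}$ depends only on $C$ and $D$, not on the ``outer'' concept that one intersects with. So the same selected pair $(p_1,p_2)$ witnesses the non-interference property for $A$, for $B$, and (as we want to show) for $A\sqcap B$ simultaneously.

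Concretely, I would fix an arbitrary geometric model $\Imc$ of $\Tmc$ and verify that $\Imc$ satisfies $(A\sqcap B)\lessdot(C,D)$. Take any $p\in\mn{reg}_{\Imc}\bigl((A\sqcap B)\sqcap (C\bet D)\bigr)$. By the semantics of $\sqcap$, this means $p\in\mn{reg}_{\Imc}(A)\cap \mn{reg}_{\Imc}(B)\cap\mn{reg}_{\Imc}(C\bet D)$. Let $\kappa_{(C,D)}^{\Imc}(p)=(p_1,p_2)$.

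Since $p\in \mn{reg}_{\Imc}\bigl(A\sqcap (C\bet D)\bigr)$ and $\Imc$ satisfies $A\lessdot (C,D)$, the non-interference assertion immediately gives $p_1\in\mn{reg}_{\Imc}(A)$. Applying the same argument with $B$ in place of $A$, using $p\in \mn{reg}_{\Imc}\bigl(B\sqcap (C\bet D)\bigr)$ and the assumption that $\Imc$ satisfies $B\lessdot (C,D)$, we obtain $p_1\in\mn{reg}_{\Imc}(B)$. Hence $p_1\in\mn{reg}_{\Imc}(A)\cap\mn{reg}_{\Imc}(B)=\mn{reg}_{\Imc}(A\sqcap B)$, establishing the non-interference condition for $A\sqcap B$.

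There is essentially no technical obstacle here: the proof reduces to pointwise set-theoretic bookkeeping, and the only real observation needed is that the choice of $(p_1,p_2)$ provided by $\kappa_{(C,D)}^{\Imc}$ is a property of the model (attached to the pair $(C,D)$), not of the outer concept. If anything, the only thing worth double-checking is the hypothesis on the signature of the assertion $X\lessdot(C,D)$: the statement requires $A$ and $B$ to be natural concepts (so that $A\sqcap B$ is also a natural concept and the expression $(A\sqcap B)\lessdot(C,D)$ is syntactically well-formed), and the closure of natural concepts under $\sqcap$ is already guaranteed by the grammar in Section~\ref{secPlausibleInferenceDL}, so this poses no difficulty.
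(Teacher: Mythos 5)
Your proof is correct and follows essentially the same route as the paper's: fix an arbitrary geometric model of $\Tmc$, take $p\in\mn{reg}_{\Imc}(A)\cap\mn{reg}_{\Imc}(B)\cap\mn{reg}_{\Imc}(C\bet D)$, and observe that the single pair $\kappa_{(C,D)}^{\Imc}(p)=(p_1,p_2)$ simultaneously witnesses both hypotheses, so that $p_1\in\mn{reg}_{\Imc}(A)\cap\mn{reg}_{\Imc}(B)=\mn{reg}_{\Imc}(A\sqcap B)$. If anything, your write-up is slightly cleaner, since the paper's appendix proof contains a notational slip (writing $\kappa_{(A,B)}$ and $p_1\in\mn{reg}_{\Imc}(B)\cap\mn{reg}_{\Imc}(D)$ where $\kappa_{(C,D)}$ and $p_1\in\mn{reg}_{\Imc}(B)\cap\mn{reg}_{\Imc}(C)$ are clearly intended) that you avoid.
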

% \begin{proof}
% Let $\mathcal{I}$ be a geometric model of $\mathcal{T}$.
%  Let $p\in \text{reg}_{\mathcal{I}}(A) \cap \text{reg}_{\mathcal{I}}(B) \cap \mn{conv}(\text{reg}_{\mathcal{I}}(C)\cup \text{reg}_{\mathcal{I}}(D))$, where we write $\kappa_{(A,B)}(p) = (p_1,p_2)$. If $\mathcal{I} \models \{A\lessdot (C,D),B\lessdot (C,D)\}$ then by definition we have $p_1 \in \text{reg}_{\mathcal{I}}(A) \cap \text{reg}_{\mathcal{I}}(C)$ and $p_1 \in \text{reg}_{\mathcal{I}}(B) \cap \text{reg}_{\mathcal{I}}(D)$, and thus also $p_1 \in \text{reg}_{\mathcal{I}}(A) \cap \text{reg}_{\mathcal{I}}(B) \cap \text{reg}_{\mathcal{I}}(C)$, from which we immediately find that $\mathcal{I}$ satisfies $(A\sqcap B)\lessdot (C,D)$.
% \end{proof}

%We can also show that non-interference assertions of the form $A\boxdot (C,D)$ are closed under intersection.

%%%%%%%%%%%%%%%%%%%%%%%%%%
\section{Complexity of Reasoning with Interpolation}~\label{sec:complex}
We next analyze  the computational complexity of reasoning in $\EL^\bet$ and $\EL^\bet_{\! \textit{reg}}$. We  show that the ability to perform interpolation increases the  complexity of concept subsumption relative to a TBox. 
\subsection{Concept Subsumption in  $\EL^\bet$}~\label{sec:feacompl}
 We start by studying  $\EL^\bet$ and establish that concept subsumption relative to $\EL^\bet$-TBoxes is \coNP-complete.  
 We show hardness by reducing non-entailment in propositional logic to concept subsumption in $\EL^\bet$. The main underlying idea is that a  concept inclusion of the form:
 $$
 X_1 \sqcap ... \sqcap X_n \sqsubseteq Y_1 \bowtie ...\bowtie Y_m
 $$
can be used to simulate a propositional clause of the following form:
 $$
 \neg y_1 \vee ... \vee \neg y_n \vee x_1 \vee ... \vee x_n
 $$
 where each atom $x_i$ or $y_i$ is associated with a natural concept name $X_i$ or $Y_i$. This correspondence allows us to reduce the problem of entailment checking in propositional logic to the problem of checking concept subsumption relative to $\EL^\bet$-TBoxes; the proof can be found in the online appendix.
 \begin{theorem}\label{thm:upperfeatures}
Concept subsumption relative to an $\EL^\bet$-TBox is \coNP-hard, even when restricting to TBoxes without any occurrences of existential restrictions.
 \end{theorem}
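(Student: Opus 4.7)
I would establish \coNP-hardness by a polynomial-time reduction from 3-CNF unsatisfiability. Given a 3-CNF $\phi$ over propositional atoms $p_1,\dots,p_n$, the goal is to construct an $\EL^\bet$-TBox $\Tmc_\phi$ (using no existential restrictions) together with concepts $C,D$ such that $\Tmc_\phi \models C \sqsubseteq D$ iff $\phi$ is unsatisfiable.

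\textbf{Feature-wise translation.} The cornerstone is an exact correspondence between $\EL^\bet$-CIs over natural concept names and propositional clauses. For natural concepts, Lemma~\ref{prop:ConjFeat} gives $\varphi(N_1 \sqcap N_2)=\varphi(N_1)\cup\varphi(N_2)$, and from the semantics of $\bet$ together with conditions~(3)--(4) of Definition~\ref{defFeatureInterpretation} one checks that $\varphi(N_1 \bet N_2)=\varphi(N_1)\cap\varphi(N_2)$. Combining these with Lemma~\ref{prop:NatIncl}, a CI of the form $P_{i_1}\sqcap\cdots\sqcap P_{i_k}\sqsubseteq P_{j_1}\bet\cdots\bet P_{j_l}$ (with all $P_\cdot$ natural concept names and $l\geq 1$) holds in a feature-enriched model $\Imf=(\Imc,\Fmc,\pi)$ iff $\bigcap_t \varphi(P_{j_t}) \subseteq \bigcup_s \varphi(P_{i_s})$, iff the propositional clause $\neg p_{j_1}\vee\cdots\vee\neg p_{j_l}\vee p_{i_1}\vee\cdots\vee p_{i_k}$ is satisfied by every feature-induced assignment $\alpha_f$, defined by $\alpha_f(p_i)=1$ iff $f\in\varphi(P_i)$. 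In particular, a set of such CIs behaves like a CNF, asserting that every feature-induced assignment is a model of the encoded formula.

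\textbf{Main obstacle and workaround.} The encoding fails for purely positive clauses: every CI has at least one concept on the right-hand side, which always contributes a negated literal, so no CI can represent a clause consisting entirely of positive literals. To circumvent this, I introduce a fresh propositional atom $z$ together with a corresponding natural concept name $Z$, and build $\phi'$ from $\phi$ by replacing each purely positive clause $p_{i_1}\vee p_{i_2}\vee p_{i_3}$ with $p_{i_1}\vee p_{i_2}\vee p_{i_3}\vee\neg z$. Since restricting $\phi'$ to $z=1$ recovers exactly the clauses of $\phi$, we have $\phi$ unsatisfiable iff $\phi'\wedge z$ is unsatisfiable iff $\phi'\models\neg z$. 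Every clause of $\phi'$ now contains a negative literal and is encodable as a CI by the recipe above; let $\Tmc_\phi$ be the resulting TBox. The target clause $\neg z$ corresponds exactly to the subsumption $\top \sqsubseteq Z$, which holds iff $\varphi(Z)\subseteq\varphi(\top)=\emptyset$, i.e., iff $\alpha_f(z)=0$ for every feature $f$.

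\textbf{Verification.} For the forward direction, assume $\phi$ is unsatisfiable; in any model $\Imf$ of $\Tmc_\phi$ every feature-induced assignment satisfies $\phi'$, and if some $\alpha_f$ had $\alpha_f(z)=1$ then it would also satisfy $\phi$, a contradiction, so $\varphi(Z)=\emptyset$ and $\top\sqsubseteq Z$ holds. For the converse, given a satisfying assignment $\beta$ of $\phi$, I construct a countermodel with $\Fmc=\{f_0,g\}$: set $\varphi(P_i)=\{f_0\}$ exactly when $\beta(p_i)=1$ and $\varphi(Z)=\{f_0\}$, take $\Delta^\Imc$ with one individual per proper subset of $\Fmc$, and let $\pi$ realise these subsets. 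Feature $f_0$ induces $\beta$ extended by $z=1$, which satisfies $\phi'$ because $\beta\models\phi$; feature $g$ induces the all-false assignment, which satisfies $\phi'$ vacuously since every clause of $\phi'$ has a negative literal. This interpretation meets conditions~(3)--(4) of Definition~\ref{defFeatureInterpretation} by construction, satisfies every CI of $\Tmc_\phi$, and has $\varphi(Z)\neq\emptyset$, refuting $\top\sqsubseteq Z$. The main conceptual hurdle is isolating the feature-wise translation (including the identity $\varphi(N_1\bet N_2)=\varphi(N_1)\cap\varphi(N_2)$) and spotting the purely-positive-clause gap together with its $z$-patch; the verification itself is then direct.
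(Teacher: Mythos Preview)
Your proposal is correct and follows essentially the same approach as the paper. Both arguments hinge on the identity $\varphi(Y_1)\cap\cdots\cap\varphi(Y_m)\subseteq\varphi(X_1)\cup\cdots\cup\varphi(X_n)$ as the feature-level meaning of $X_1\sqcap\cdots\sqcap X_n\sqsubseteq Y_1\bet\cdots\bet Y_m$, both identify the obstacle that purely positive clauses cannot be encoded, and both resolve it by a fresh atom appearing negatively (you add $\neg z$ only to the positive clauses and target $\top\sqsubseteq Z$, the paper adds $\neg x$ uniformly to all clauses and reduces from clausal entailment); the two-feature countermodel you build is also structurally identical to the paper's.
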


\noindent For the matching upper bound we provide a polynomial 
time guess-and-check procedure. We assume that $\EL^\bet$-TBoxes are in the following normal form. For a TBox to be in normal form, we require that every concept inclusion is of one of the forms $A\sqsubseteq B$, $A_1 \sqcap A_2  \sqsubseteq  B,$     $A \sqsubseteq \exists r. B,$     $\exists r. A \sqsubseteq B,$  $A \sqsubseteq  B_1 \bet B_2,$   $B_1 \bet B_2  \sqsubseteq A,$ where $A,A_1, A_2, B$ are concept names or the concept $\top$  and  $B_1,B_2$ are natural concept names. It is standard to show that every TBox can be transformed into this normal form in polynomial time such that (non-)subsumption between the concept names that occur in the original TBox is preserved.
%
% \begin{align*}
%  A_1 \sqcap A_2  \sqsubseteq  B,  \quad    A \sqsubseteq \exists r. B, \quad   \exists r. A \sqsubseteq B,\\
% %
%  %\top \sqsubseteq  A \bet  B , 
%  A \sqsubseteq  A_1 \bet A_2,   
%  \quad   A_1 \bet A_2  \sqsubseteq A,   %\quad \mn{Nat}(A) 
%  %
% \end{align*}
% %

We start by showing the following property of feature-enriched interpretations. 

\begin{lemma}\label{lema:boundedModel}
Let \Tmc be an $\EL^\bet$ TBox. For every model $\Imf = (\Imc, \Fmc, \pi)$ of \Tmc, there is a model 
$\widehat \Imf = (\Imc, \widehat \Fmc, \hat \pi)$ such that $|\widehat \Fmc | \leq \mn{poly}(\Tmc)$.  
\end{lemma}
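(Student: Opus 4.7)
The plan is to replace the possibly large feature set $\Fmc$ by a polynomial-size set $\widehat{\Fmc}$ that retains just enough information to distinguish the natural concept names and betweenness pairs actually mentioned by $\Tmc$. Put $\Tmc$ in the normal form described above the lemma. Let $N_1,\dots,N_k$ be the natural concept names in $\Tmc$, and let $(B_1,C_1),\dots,(B_\ell,C_\ell)$ enumerate the pairs of natural concept names for which some axiom of the form $A\sqsubseteq B_i\bet C_i$ or $B_i\bet C_i\sqsubseteq A$ appears in $\Tmc$; both $k$ and $\ell$ are at most $|\Tmc|$. Set
$$\widehat{\Fmc}\;=\;\{f_{N_i}:1\le i\le k\}\cup\{f_{B_i,C_i}:1\le i\le\ell\}\cup\{f_0\},$$
with $f_0$ a fresh dummy, and for each $d\in\Delta^\Imc$ put
$$\hat\pi(d)\;=\;\{f_{N_i}:d\in N_i^\Imc\}\cup\{f_{B_i,C_i}:d\in(B_i\bet C_i)^\Imc\}.$$
Then $|\widehat{\Fmc}|\le k+\ell+1$ is polynomial in $|\Tmc|$.

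The core step is to verify that $\widehat\Imf:=(\Imc,\widehat{\Fmc},\hat\pi)$ is still a model of $\Tmc$, i.e.\ that each natural concept of $\Tmc$ receives the same extension under $\hat\varphi,\hat\pi$ as it does under $\varphi,\pi$. A short calculation gives
$$\hat\varphi(N_i)\;=\;\{f_{N_j}:N_i^\Imc\subseteq N_j^\Imc\}\cup\{f_{B_j,C_j}:N_i^\Imc\subseteq(B_j\bet C_j)^\Imc\}.$$
Since $f_{N_i}\in\hat\varphi(N_i)$, the condition $\hat\varphi(N_i)\subseteq\hat\pi(d)$ immediately forces $d\in N_i^\Imc$; monotonicity of set inclusion gives the converse, so Condition~2 of Definition~\ref{defFeatureModel} is preserved for natural concept names. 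For the betweenness concepts, $f_{B_i,C_i}\in\hat\varphi(B_i)\cap\hat\varphi(C_i)$ because $B_i^\Imc\cup C_i^\Imc\subseteq(B_i\bet C_i)^\Imc$, giving one direction of $(B_i\bet C_i)^{\widehat\Imf}=(B_i\bet C_i)^\Imf$. In the other direction, any surviving feature $f_{N_j}$ or $f_{B_j,C_j}$ in $\hat\varphi(B_i)\cap\hat\varphi(C_i)$ corresponds to a natural concept $N$ with $B_i^\Imc\cup C_i^\Imc\subseteq N^\Imc$; by Lemma~\ref{prop:NatIncl} this gives $\varphi(N)\subseteq\varphi(B_i)\cap\varphi(C_i)$, so any $d\in(B_i\bet C_i)^\Imc$ automatically lies in $N^\Imc$. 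Hence the betweenness extensions coincide and every CI of $\Tmc$ is preserved.

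It remains to enforce the structural requirements of Definition~\ref{defFeatureInterpretation}. Condition~3 holds trivially because the dummy $f_0$ is never in any $\hat\pi(d)$. Condition~4 can fail, since $\Delta^\Imc$ need not realise every proper subset of $\widehat{\Fmc}$; this is repaired by adjoining, for each missing $\mathcal{F}'\subsetneq\widehat{\Fmc}$, a fresh element $d_{\mathcal{F}'}$ with $\hat\pi(d_{\mathcal{F}'})=\mathcal{F}'$, placed in each natural concept name $N_i$ exactly when $\hat\varphi(N_i)\subseteq\mathcal{F}'$ (and supplied with the role successors forced by existential axioms, reusing witnesses already available in $\Imf$ whenever $\hat\varphi(N_i)\subseteq\mathcal{F}'$ holds, which guarantees $N_i^\Imc\neq\emptyset$). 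The key observation is that every such $d_{\mathcal{F}'}$ added to $N_i^\Imc$ satisfies $\hat\pi(d_{\mathcal{F}'})\supseteq\hat\varphi(N_i)$, so $\hat\varphi(N_i)$ does not shrink and the natural-concept characterisation of $N_i^\Imc$ continues to hold.

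The main obstacle is the betweenness verification: with only polynomially many features at hand, we must argue that the intersection $\hat\varphi(B_i)\cap\hat\varphi(C_i)$ still faithfully encodes $\varphi(B_i)\cap\varphi(C_i)$ from the viewpoint of $\Tmc$. Lemma~\ref{prop:NatIncl} is exactly what makes this possible, as it translates extension inclusions between natural concepts into reverse inclusions between their feature sets, thereby guaranteeing that the compression does not drop any $\bet$-relevant information that $\Tmc$ can distinguish.
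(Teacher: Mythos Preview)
Your compression idea---one feature per natural concept name and one per betweenness pair occurring in $\Tmc$---is sound, and the argument that natural-concept extensions and $(B_i\bet C_i)$ extensions are preserved \emph{before} the repair step is correct and pleasantly direct. This differs from the paper, which builds the new feature assignment from scratch via an iterative completion on equivalence classes of sub-concepts in $\mn{sub}_\Tmc$, using \emph{two} features $f^1_X,f^2_X$ per concept $X$.

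The genuine gap is in your repair step. When you adjoin $d_{\mathcal{F}'}$ to realise a missing subset, you only specify its membership in \emph{natural} concept names; non-natural concept names are left untouched. But normal-form CIs may have non-natural names on the right, and these can break. Concretely, take $\Tmc=\{N_1\sqsubseteq A,\;N_2\sqsubseteq A\}$ with $N_1,N_2\in\mn{N^{Nat}_C}$ and $A\notin\mn{N^{Nat}_C}$, and a model $\Imf$ in which $N_1^\Imc,N_2^\Imc$ are nonempty and disjoint (possible: e.g.\ $\Fmc=\{g_1,g_2\}$, $\varphi(N_1)=\{g_1\}$, $\varphi(N_2)=\{g_2\}$). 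Then $\widehat\Fmc=\{f_{N_1},f_{N_2},f_0\}$, $\hat\varphi(N_1)=\{f_{N_1}\}$, $\hat\varphi(N_2)=\{f_{N_2}\}$, and the subset $\{f_{N_1},f_{N_2}\}$ is not realised (since $(N_1\sqcap N_2)^\Imc=\emptyset$). Your rule places the fresh element $d_{\{f_{N_1},f_{N_2}\}}$ in both $N_1^{\widehat\Imc}$ and $N_2^{\widehat\Imc}$ but not in $A^{\widehat\Imc}$, so $N_1\sqsubseteq A$ fails in $\widehat\Imf$. The same issue affects existential and conjunction CIs once non-natural names are involved; handling ``role successors forced by existential axioms'' does not help if the element was never placed in the (non-natural) left-hand concept in the first place, or if a CI of the form $\exists r.A\sqsubseteq B$ forces membership in a non-natural $B$.

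This is precisely what the paper's more elaborate construction is designed to handle: it assigns features to \emph{every} concept in $\mn{sub}_\Tmc$, not just natural ones, and places each new witness $d_F$ in $A^{\widehat\Imc}$ iff $\widehat\varphi(A)\subseteq F$, for all concept names $A$. The two-features-per-concept trick together with the completion rules (propagation along $\prec$, intersection for $\bet$, and splitting $f^1/f^2$ across natural conjuncts) is exactly what guarantees this assignment respects every normal-form CI shape. Your approach can probably be salvaged---for instance by placing $d_{\mathcal{F}'}$ in $A^{\widehat\Imc}$ whenever $\big(\bigsqcap\{N_i:\hat\varphi(N_i)\subseteq\mathcal{F}'\}\big)^\Imc\subseteq A^\Imc$, and letting $d_{\mathcal{F}'}$ inherit the role successors of a fixed element of that (necessarily nonempty) intersection---but this is not routine and must be argued against all CI shapes; you identified the wrong step as the ``main obstacle''.
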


Before presenting the decision procedure, we introduce some notions.
Let \Tmc  be an  $\EL^\bet$ TBox. A \emph{feature assignment for  \Tmc} from a set of features $\mathcal F$    is a mapping $\theta$ assigning to each concept name in \Tmc a subset $F \subseteq \mathcal F$. We say that a feature assignment $\theta$ for \Tmc is \emph{proper} if  the following conditions hold:
 
 \begin{enumerate}
 \item For every concept inclusion of the form $A_1 \sqcap A_2 \sqsubseteq B$ in $\Tmc$ with $A_1,A_2$ natural concept names,  it holds that  %$\in \mn{N^{Nat}_C}$
%\begin{enumerate}
%\item $\varphi(A \sqcap B) = \varphi(A) \cup \varphi(B)$,  and 
%\item 
$\widehat\theta(B) \subseteq \theta(A_1) \cup \theta(A_2)$;
%\end{enumerate}
\item for every concept inclusion of the form   $A  \sqsubseteq C$ in $\Tmc$,  it holds that $\widehat \theta(C) \subseteq \theta(A)$, 
\end{enumerate}
with $\widehat \theta(\cdot)$ defined as follows: $\widehat\theta(A) = \theta(A)$;
 \begin{align*}
 &  \widehat\theta(A \bet B) = \theta(A) \cap \theta(B);  &&\widehat\theta(\top) = \widehat\theta(\exists r. B) = \emptyset. 
\end{align*}

We are now ready to describe our guess-and-check procedure
 to decide non-subsumption in $\EL^\bet$. 
Given an $\EL^\bet$  TBox $\Tmc$,  
%and a subsumption query $C \sqsubseteq D$,  
we proceed as follows:  
\begin{enumerate}

 \item Guess a feature assignment $\theta$ for \Tmc from some set of features \Fmc (By Lemma~\ref{lema:boundedModel}, we can assume that $|\Fmc| \leq \mn{poly}(\Tmc)$).
% \item Guess a mapping $\varphi: \Fmc $ assigning to each concept name in $\Tmc$ to a set of features $F \subseteq \Fmc $, with $|\Fmc| \leq \mn{poly}(\Tmc)$. 
\item Add  concept inclusions $A \sqsubseteq C$ to $\Tmc$   if $\widehat \theta(C) \subseteq \theta(A)$, for $A\in \mn{N_C}$  and $C \in \mn{N_C^{Nat}}$ or  $C = B_1 \bet B_2$, occurring in $\Tmc$. Let   $\Tmc_{\theta}$  be the TBox obtained after this step. 
\item  Compute the completion $\Tmc'_\theta$ of $\Tmc_\theta$ using the classical completion algorithm for \EL~\cite{DBLP:books/daglib/0041477}, where concepts of the form $A \bet B$ are regarded as concept names. Let $\Tmc'_\theta$ be the TBox obtained after this step. 
\item  Check that $\theta$ is proper for  $\Tmc'_\theta$.
% \begin{enumerate}
% \item for every $A\sqcap B$ with $A,B \in \mn{N^{Nat}_C}$ such that $A \sqcap B \sqsubseteq C \in \Tmc'_\varphi$, it holds that 
% %\begin{enumerate}
% %\item $\varphi(A \sqcap B) = \varphi(A) \cup \varphi(B)$,  and 
% %\item 
% $\widehat\varphi(C) \subseteq \varphi(A) \cup \varphi(B)$;
% %\end{enumerate}
% \item for every concept name $A$ such that  $A  \sqsubseteq C \in \Tmc'_\varphi$,  $\widehat \varphi(C) \subseteq \varphi(A)$; 
% \end{enumerate}
  \end{enumerate}
%  where for  $A, B \in \mn{N_C}$, and $r \in \mn{N_R}$:
%  \begin{align*}
%  &\widehat\varphi(A) = \varphi(A)  && \widehat\varphi(A\sqcap B) = \varphi(A) \cup \varphi(B) \\ &\widehat\varphi(A \bet B) = \varphi(A) \cap \varphi(B) && \widehat\varphi(\top) = \widehat\varphi(\exists r. B) = \emptyset. 
% \end{align*}
%  We say that $\varphi$ is \emph{compatible} with $\Tmc'_\varphi$ if it satisfies conditions 1 and 2. 
 \begin{lemma}\label{lemma:coNPUpper}
 Let \Tmc be an $\EL^\bet$ TBox and $A,B$ concept names. Then, $\Tmc \not \models A \sqsubseteq B$ iff, after applying Steps 1-4 above, %there is a feature assignment $\varphi$ for \Tmc such that 
 $A \sqsubseteq B \not \in \Tmc'_\theta$. %and $\varphi$ is compatible with $\Tmc'_\varphi$.  
 \end{lemma}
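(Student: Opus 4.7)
I would prove both directions of the biconditional by constructing appropriate feature-enriched models.

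\emph{Direction $(\Rightarrow)$.} Assume $\Tmc\not\models A\sqsubseteq B$. By Lemma~\ref{lema:boundedModel}, fix a feature-enriched model $\Imf=(\Imc,\Fmc,\pi)$ of $\Tmc$ with $|\Fmc|$ polynomial in $|\Tmc|$ and $A^{\Imc}\not\subseteq B^{\Imc}$, and set $\theta(C):=\varphi(C)$ for every concept name $C$ occurring in $\Tmc$. By Lemma~\ref{prop:ConjFeat} together with the feature semantics of $\bet$ (and Definition~\ref{defFeatureInterpretation}(4), which gives $\varphi(B_1\bet B_2)=\varphi(B_1)\cap\varphi(B_2)$), the operator $\widehat{\theta}$ coincides with $\varphi$ on the subexpressions scrutinized in Step~2. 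Hence every CI added in Step~2 is semantically valid in $\Imf$: for natural $C$, use Lemma~\ref{prop:NatIncl}; for $C=B_1\bet B_2$, unfold $(B_1\bet B_2)^{\Imc}$ directly. So $\Imf$ is still a classical model of $\Tmc_\theta$, and by soundness of the classical EL completion also of $\Tmc'_\theta$; in particular $A\sqsubseteq B\notin\Tmc'_\theta$. The two properness clauses are then exactly the syntactic trace of Lemmas~\ref{prop:ConjFeat} and~\ref{prop:NatIncl}.

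\emph{Direction $(\Leftarrow)$.} Conversely, assume $\theta$ is proper for $\Tmc'_\theta$ and $A\sqsubseteq B\notin\Tmc'_\theta$. By completeness of classical EL completion (treating each $\bet$-subconcept as a fresh concept name), the canonical classical model $\Jmc$ of $\Tmc'_\theta$ satisfies $A^{\Jmc}\not\subseteq B^{\Jmc}$, and each $d\in\Delta^{\Jmc}$ has a representative concept name $X_d$ of $\Tmc$ such that $d\in C^{\Jmc}$ iff $X_d\sqsubseteq C\in\Tmc'_\theta$. I would build $\Imf$ on domain $\Delta^{\Jmc}\cup\{w_F : F\subsetneq\Fmc\}$ with $\pi(w_F):=F$ and $\pi(d):=\bigcup\{\theta(N)\mid N\text{ natural in }\Tmc,\ d\in N^{\Jmc}\}$ for $d\in\Delta^{\Jmc}$, enlarging $\Fmc$ by one dummy feature (not occurring in any $\theta$-value) to keep $\pi(d)\subsetneq\Fmc$. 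Concept-name extensions on $\Delta^{\Jmc}$ are inherited from $\Jmc$; a witness $w_F$ enters $C^{\Imf}$ iff $\theta(C)\subseteq F$; role edges are inherited from $\Jmc$ and supplemented on witnesses so that every existential CI in $\Tmc'_\theta$ is satisfied. The $\bet$-concepts are then interpreted by the feature semantics, and conditions 2--4 of Definition~\ref{defFeatureInterpretation} hold by construction.

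\emph{Main obstacle.} The crux is establishing Definition~\ref{defFeatureModel}(2) at original elements, i.e.\ that $\theta(N)\subseteq\pi(d)$ forces $d\in N^{\Jmc}$. By construction $\theta(N)\subseteq\theta(N_1)\cup\cdots\cup\theta(N_k)$ with $X_d\sqsubseteq N_i\in\Tmc'_\theta$ for each $i$; properness Condition~2 then gives $\theta(N_i)\subseteq\theta(X_d)$, hence $\theta(N)\subseteq\theta(X_d)$, and Step~2 places $X_d\sqsubseteq N\in\Tmc_\theta\subseteq\Tmc'_\theta$, yielding $d\in N^{\Jmc}$. With this in hand, the remaining CIs of $\Tmc$ on both original and witness elements are checked by a case analysis on the normal-form shape, using Proposition~\ref{featureselem} and Lemma~\ref{prop:NatIncl} to transport conjunctive and $\bet$-subsumptions through the feature semantics; CIs with $\bet$ on the right-hand side follow from properness Condition~2 applied to $A\sqsubseteq B_1\bet B_2$. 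Finally, $A^{\Imf}\not\subseteq B^{\Imf}$ transfers from $\Jmc$: any $d\in A^{\Jmc}\setminus B^{\Jmc}$ remains in $A^{\Imf}$, and the implication just proved prevents $d$ from entering $B^{\Imf}$ when $B$ is natural, while for non-natural $B$ this is immediate since $B^{\Imf}\cap\Delta^{\Jmc}=B^{\Jmc}$.
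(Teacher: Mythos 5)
Your $(\Rightarrow)$ direction is essentially the paper's own argument (take a countermodel, bound its feature set via Lemma~\ref{lema:boundedModel}, guess $\theta=\varphi$, and use Lemma~\ref{prop:NatIncl} for soundness of the Step-2 additions and Lemma~\ref{prop:ConjFeat} for properness), and it is fine. The gap is in your $(\Leftarrow)$ direction, specifically in the definition $\pi(d):=\bigcup\{\theta(N)\mid N \text{ natural in }\Tmc,\ d\in N^{\Jmc}\}$ on the original elements. Properness Condition~2 bounds $\widehat\theta(C)$ by $\theta(X_d)$, \emph{not} by $\pi(d)$, and with your definition $\pi(d)$ can be a proper subset of $\theta(X_d)$ whenever $X_d$ is non-natural. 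Consequently your step ``CIs with $\bet$ on the right-hand side follow from properness Condition~2'' fails. Concretely, take $\Tmc=\{A\sqsubseteq B_1\bet B_2\}$ with $A$ a non-natural concept name, $B_1,B_2\in\mn{N^{Nat}_C}$, and the proper guess $\theta(B_1)=\{f,g\}$, $\theta(B_2)=\{f,h\}$, $\theta(A)=\{f\}$. Step~2 adds only inclusions with $B_1\bet B_2$ on the right (since $\widehat\theta(B_i)\not\subseteq\theta(A)$), so no natural name $N$ with $A\sqsubseteq N\in\Tmc'_\theta$ exists and your $\pi(d_A)=\emptyset$. But in your model the witnesses $w_F$ force $\varphi(B_i)=\theta(B_i)$, so $\varphi(B_1)\cap\varphi(B_2)=\{f\}\not\subseteq\pi(d_A)$, i.e.\ $d_A\in A^{\Imf}\setminus(B_1\bet B_2)^{\Imf}$: your $\Imf$ is not a model of $\Tmc$, and the direction collapses.

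The paper avoids this by setting $\pi(d_{A'}):=\theta(A')$ for \emph{every} concept name $A'$, natural or not. Then properness Condition~2 immediately gives $\theta(B_1)\cap\theta(B_2)\subseteq\theta(A')=\pi(d_{A'})$ for inclusions $A'\sqsubseteq B_1\bet B_2\in\Tmc'_\theta$, while the requirement of Definition~\ref{defFeatureModel}(2) --- which your union-style $\pi$ was designed to secure --- is recovered from Step~2 instead: if $\theta(N)\subseteq\pi(d_{A'})=\theta(A')$ for a natural name $N$, then $A'\sqsubseteq N$ was added in Step~2, hence $d_{A'}\in N^{\Imc}$. Your ``main obstacle'' argument is exactly this observation and survives the corrected definition of $\pi$ unchanged (in fact it becomes shorter, since $\pi(d)=\theta(X_d)$ makes the first containment trivial). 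The remainder of your construction --- witness elements $w_F$ for all $F\subsetneq\Fmc$, role edges inherited from the canonical model and supplemented on witnesses for existential inclusions, and transferring the non-subsumption witness $d_A$ --- matches the paper's canonical-model construction, so the fix is local but necessary.
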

Summing up we obtain the following result.
\begin{theorem}~\label{thm:lowerfeatures}
Concept subsumption relative to  $\EL^\bet$-TBoxes is \coNP-complete.
\end{theorem}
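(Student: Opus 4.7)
The plan is to combine the \coNP-hardness from Theorem~\ref{thm:upperfeatures} with a matching \coNP upper bound. Since Lemma~\ref{lemma:coNPUpper} characterises non-subsumption $\Tmc \not\models A \sqsubseteq B$ as the existence of a proper feature assignment $\theta$ yielding a completed TBox $\Tmc'_\theta$ that does not contain $A \sqsubseteq B$, the strategy is to show that the guess-and-check procedure outlined immediately before Lemma~\ref{lemma:coNPUpper} runs in nondeterministic polynomial time, thereby placing non-subsumption in NP and hence subsumption in \coNP.

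The first step I would carry out is to justify that the guess in Step~1 has polynomial size. By Lemma~\ref{lema:boundedModel}, it is enough to consider a set of features $\Fmc$ with $|\Fmc|$ polynomial in $|\Tmc|$. A feature assignment $\theta$ then assigns a subset of $\Fmc$ to each of the polynomially many concept names occurring in $\Tmc$, so $\theta$ itself has polynomial size in $|\Tmc|$ and can be guessed in nondeterministic polynomial time. Without loss of generality I would assume $\Tmc$ is in the normal form described above, which is obtained in polynomial time while preserving subsumptions between original concept names.

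Next I would verify that Steps~2--4 can each be carried out deterministically in polynomial time. Step~2 iterates over the polynomially many pairs $(A,C)$ with $C \in \mn{N_C^{Nat}}$ or $C$ of the form $B_1 \bet B_2$ occurring in $\Tmc$, evaluating $\widehat\theta(C) \subseteq \theta(A)$ via set operations on subsets of the polynomial-size $\Fmc$; thus only polynomially many new inclusions are added. Step~3 invokes the classical $\EL$ completion algorithm on $\Tmc_\theta$, treating each $\bet$-concept as a fresh concept name, which runs in polynomial time. Step~4 then performs a polynomial number of subset checks over $\Fmc$ to verify properness against the inclusions appearing in $\Tmc'_\theta$.

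Combining these observations, non-subsumption is in NP, so subsumption is in \coNP, which together with Theorem~\ref{thm:upperfeatures} yields \coNP-completeness. The main obstacle behind the argument is precisely the bounded-model property of Lemma~\ref{lema:boundedModel}: without a polynomial bound on the number of relevant features, the guess for $\theta$ would potentially be of super-polynomial size and the NP membership argument would collapse. Correctness of the overall decision procedure is then inherited directly from Lemma~\ref{lemma:coNPUpper}.
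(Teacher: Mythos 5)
Your proposal is correct and takes essentially the same route as the paper: the paper also derives Theorem~\ref{thm:lowerfeatures} by combining the \coNP-hardness of Theorem~\ref{thm:upperfeatures} with the guess-and-check procedure, whose polynomially bounded guess rests on Lemma~\ref{lema:boundedModel} and whose correctness is Lemma~\ref{lemma:coNPUpper}. The explicit polynomial-time analysis of Steps~2--4 that you provide is left implicit in the paper but matches its intended argument.
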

%
%
%%%%%%%%%%%%%%%
\subsection{Concept Subsumption in $\EL^\bet_{\textit{reg}}$ \label{secHardnessGeometric}} 
We move now to investigate concept subsumption relative
to $\EL^\bet_{\textit{reg}}$ TBoxes (under the geometric semantics). In this case, we are able to show the following hardness result.

%For the hardness proof, similarly as for $\EL^\bet$, disjunction can be simulated in $\EL^\bet_{\textit{reg}}$. Indeed, let $\Tmc''$ be an $\EL^\bet_{\textit{reg}}$-TBox constructed  from the axioms in $\Tmc'$, \eqref{eq:Nconv1}-\eqref{eq:Nconv2} above, plus the non-interference assertions: $X\#G$, $X\#H$, $X\#I$, and  $X\#J$. Using similar arguments as above together with  Lemma~\ref{lemm:intGeo} (instead of Lemma~\ref{lemma:intcj}) it is not hard to  show that under the geometric semantics $\EL^\bet_{\textit{reg}}$ is non-convex. Using this as  stepping stone, one can easily adapt the reduction used in the proof of Theorem~\ref{thm:hardfeat}. We thus obtain the desired result.   
%
\begin{theorem}~\label{thm:lowergeo}
Concept subsumption relative to  $\EL^\bet_{\textit{reg}}$-TBoxes is \PSpace-hard.
\end{theorem}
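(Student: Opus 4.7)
The plan is to prove PSpace-hardness by reduction from validity of Quantified Boolean Formulas (QBF), which is canonically PSpace-complete. Given a QBF $\phi = Q_1 x_1 \ldots Q_n x_n.\psi(x_1,\ldots,x_n)$ with matrix $\psi$ in CNF, I would construct in polynomial time an $\EL^\bet_{\textit{reg}}$-TBox $\Tmc_\phi$ together with concept names $S,T$ such that $\phi$ is valid iff $\Tmc_\phi \models S \sqsubseteq T$. The general strategy is to use the layered structure of nested $\bet$-expressions to mimic the nested quantifier prefix, while using non-interference assertions to enforce that the choices made at outer levels propagate faithfully into inner levels.

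For each variable $x_i$ I would introduce two fresh natural concept names $P_i$ and $N_i$ encoding the assignments $x_i=1$ and $x_i=0$. Universal quantifiers are captured by subsumption itself: requiring an object to lie in $P_i \bet N_i$ forces it, in every geometric model, into $\mn{conv}(\mn{reg}(P_i) \cup \mn{reg}(N_i))$, and the subsumption test must succeed regardless of which of $P_i$ or $N_i$ the chosen witness pair inhabits. Existential quantifiers are dually handled by the fact that a counter-model is free to place its witness in either $P_i$ or $N_i$ (geometrically, to pick any $\lambda \in [0,1]$ on the chord). The CNF matrix is encoded by a collection of inclusions $L_{c_1} \sqcap \cdots \sqcap L_{c_k} \sqsubseteq T$, one per clause, where $L_{c_j}$ is $P_j$ or $N_j$ according to the literal's polarity, so that $T$ is entailed at the innermost level exactly when the assignment recorded by the chosen $P_i/N_i$-regions satisfies $\psi$.

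The non-interference assertions are the crucial ingredient that allows the nested layers to communicate. To propagate a fixed outer assignment through inner $\bet$-operators without collapsing the quantifier structure, I would include assertions of the form $P_i \lessdot (P_j,N_j)$ and $N_i \lessdot (P_j,N_j)$ (together with the reversed pairs) for all $i<j$. By Propositions~\ref{propSoundnessGeoLeftNI} and~\ref{propSoundnessGeoBoxNI} these assertions guarantee precisely the distributivity laws of $\sqcap$ over $\bet$ needed to fix $x_i$ while recursing into the subformula $Q_{i+1} x_{i+1} \ldots Q_n x_n.\psi$. Since the number of variables and clauses is polynomial, the total number of concept names, inclusions, and non-interference assertions in $\Tmc_\phi$ is polynomial in $|\phi|$.

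The main obstacle will be proving correctness in both directions. The forward direction (counter-model yields falsifying strategy) requires extracting, from an alleged counter-example $p \in S^{\Imc} \setminus T^{\Imc}$, a concrete assignment by repeatedly applying the $\kappa$-mappings to decompose $p$ into witnesses in $P_i$ or $N_i$; the non-interference constraints then ensure this assignment is globally coherent. The harder reverse direction (falsifying strategy yields counter-model) requires building a concrete geometric interpretation: I expect to place each pair $(P_i,N_i)$ along its own orthogonal coordinate direction in a high-dimensional $\mathbb{R}^m$, which makes all required non-interference assertions automatically satisfiable via coordinate-preserving $\kappa$-mappings, and then to place the critical witness so as to realise the strategy. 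Verifying that every inclusion of $\Tmc_\phi$ is satisfied in this constructed interpretation, and that no unintended subsumption holds, is the technically delicate part, since convex hulls can create spurious elements and one must prevent them from accidentally witnessing $T$.
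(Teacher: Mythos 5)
Your reduction has a genuine gap: it contains no working mechanism for existential quantification, and without one it cannot climb above \coNP. In the entailment problem $\Tmc_\phi \models S \sqsubseteq T$, the quantification over models --- including the adversarially chosen $\kappa$-mappings, which the model itself specifies --- is universal, and the only way the logic lets you push a $\bet$-concept below $T$ is conjunctively through \emph{both} endpoints: Propositions~\ref{propSoundnessGeoLeftNI} and~\ref{propSoundnessGeoBoxNI} infer $A \sqcap (C \bet D) \sqsubseteq B$ only from $A\sqcap C \sqsubseteq B$ together with (at least) $D \sqsubseteq B$, and conversely a counter-model can always place part of the hull outside $T$ whenever one endpoint inclusion fails. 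So in your encoding every variable is effectively universally quantified: to derive $S \sqsubseteq T$ from $S \sqsubseteq (P_1 \bet N_1) \sqcap \cdots \sqcap (P_n \bet N_n)$ you would need the clause-level consequence for \emph{all} $2^n$ literal combinations, i.e.\ you decide validity of a purely universal QBF, which is only \coNP-hard --- essentially the clause-entailment encoding the paper already uses for Theorem~\ref{thm:upperfeatures} under the feature semantics. The freedom you invoke for existentials (``the counter-model may pick any $\lambda \in [0,1]$ on the chord'') acts in the wrong direction: freedom of the counter-model makes the entailment \emph{harder}, hence again encodes a $\forall$. To encode $\exists x_j$ you would have to hard-wire into $\Tmc_\phi$ a witness choice depending on the outer universal variables, i.e.\ a Skolem function; but deciding whether such a function exists \emph{is} the QBF problem, so a polynomial-time reduction cannot supply it, and your non-interference assertions $P_i \lessdot (P_j,N_j)$ only license distributing $\sqcap$ over $\bet$ --- they create no such choice dependency.

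The paper sidesteps alternation entirely: it reduces from dominance in consistent generalized CP-nets, a succinctly represented \emph{reachability} problem that is \PSpace-complete. Each improving flip is simulated by a single sound interpolation step, via inclusions $\tau(\eta_i \wedge q_i) \sqsubseteq W_i \bet X_i$ together with non-interference assertions $A_j \lessdot (W_i, X_i)$ for the atoms untouched by the rule; hardness then comes from flip sequences of exponential \emph{length}, a source of complexity fully compatible with the conjunctive, Horn-like proof machinery available here. The delicate converse direction --- ruling out unintended subsumptions --- is handled by one explicit geometric model whose dimension equals the number of outcomes (exponential, which is permissible since models need not be polynomial): outcomes live on the standard simplex, each $X_i$ is the convex hull of signed flip vectors, and $Z$ is the hull of the initial outcome together with the $X_i$. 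That one-coordinate-per-outcome design is precisely what eliminates the ``spurious hull points'' you correctly worry about, and consistency of the GCP-net (no flip cycles) is what makes the extraction of a flip sequence from a convex combination go through. If you want to rescue a QBF-based route, you would first need to exhibit a genuinely new mechanism in $\EL^\bet_{\textit{reg}}$ for existential choice; nothing in the semantics as defined appears to provide one.
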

Inspired by \cite{schockaert2013interpolative}, the proof proceeds by reduction from the dominance problem in generalized CP-nets (GCP-nets). We briefly sketch this reduction. First we recall the dominance problem. A GCP-net over  a set of propositional atoms $\mathcal{A}=\{a_1,...,\allowbreak a_m\}$ is specified by a set of so-called \emph{conditional preference (CP) rules} $\rho_i$ ($i\in \{1,...,n\}$) of the following form:
\begin{align}\label{eqCPrule}
\eta_i: q_i > \neg q_i
\end{align}
where $\eta_i$ is a conjunction of literals and $q_i$ is a literal (over $\mathcal{A}$). The intuition of this rule is that whenever $\eta_i$ is true, then it is better to have $q_i$ true than to have $q_i$ false, assuming that everything else stays the same (i.e.\ \textit{ceteris paribus}). An \emph{outcome} is defined as a tuple of literals $(l_1,...,l_m)$ where $l_i$ is either $a_i$ or $\neg a_i$. Outcomes thus encode possible worlds. Let $\omega_1 = (l_1,...,\neg q_i,...,l_m)$ and $\omega_2=(l_1,...,q_i,...,l_m)$ be outcomes which only differ in the truth value they assign to $q_i$ and which both satisfy the condition $\eta_i$. Then we say that the rule $\rho_i$ \emph{sanctions an improving flip} from $\omega_1$ to $\omega_2$. Moreover, we say that an outcome $\omega$ \emph{dominates} an outcome $\omega'$, written $\omega' \prec \omega$ if there exists a sequence of improving flips from $\omega'$ to $\omega$. A GCP-net is \emph{consistent} if there are no cycles of improving flips, i.e.\ there are no outcomes $\omega$ for which $\omega\prec \omega$. It was shown in \cite{DBLP:journals/jair/GoldsmithLTW08} that the problem of checking whether some outcome $\omega$ dominates an outcome $\omega'$ is \PSpace-complete, even when restricted to consistent GCP-nets.

The proposed reduction is defined as follows. Let the initial outcome be given by $(l_1,...,l_m)$.
The corresponding $\EL^\bet_{\textit{reg}}$ TBox $\mathcal{T}$ contains the following corresponding concept inclusion:
\begin{align} \label{eqInitialELrule}
\tau(l_1) \sqcap ... \sqcap \tau(l_m) \sqsubseteq Z
\end{align}
where the mapping $\tau$ is defined by $\tau(a_i) = A_i$ and $\tau(\neg a_i) = \overline{A_i}$, with $A_i$, $\overline{A_i}$ and $Z$ natural concept names. We furthermore extend this mapping to conjunctions of literals as $\tau(l_1 \wedge ... \wedge l_k) = \tau(l_1) \sqcap ... \sqcap \tau(l_k)$. For each CP-rule $\rho_i$, we have that $\mathcal{T}$ contains the following concept inclusions:
\begin{align}
 X_i &\sqsubseteq Z \label{eqELrule2}\\
 \tau(\eta_i \wedge q_i) &\sqsubseteq W_i\bowtie X_i\label{eqELrule3}\\
 W_i &\sqsubseteq \tau(\eta_i  \wedge \neg q_i) \label{eqELruleWi1}\\
 \tau(\eta_i \wedge \neg q_i) &\sqsubseteq W_i\label{eqELruleWi2}
\end{align}
where $W_i$ and $X_i$ are natural concept names. Furthermore, for each rule $\eta_i: q_i > \neg q_i$ and each $a_j$, such that neither $a_j$ nor $\neg a_j$ occurs in $\eta_i$ or $\{q_i, \neg q_i\}$, we add the non-interference assertions $A_j \lessdot (W_i,X_i)$ and $\overline{A_j} \lessdot (W_i,X_i)$.

Then we can show that $(r_1,...,r_m)$ dominates $(l_1,...,l_m)$, with $(r_1,...,r_m)\neq (l_1,...,l_m)$ iff
$$
\mathcal{T} \models \tau(r_1 \wedge  ... \wedge r_m) \sqsubseteq Z
$$
We conjecture that a matching \PSpace upper bound can be found, although this currently remains an open question.

%We conjecture that as for $\EL^\bet$  a reduction to $\mathcal{ALCIH}_{\it{reg}}$ can be devised to get a matching \exptime upper bound, but a  careful treatment of non-interference axioms is needed. %Further assumptions on the type of regions used in geometric interpretations might be required as well.

% We obtain thus obtain the following results.
% \begin{theorem}~\label{thm:uppergeo}
%  Concept subsumption relative to $\EL^\bet_{\textit{reg}}$-TBoxes  is in \textsc{ExpTime}.
%  \end{theorem}
%  %
% \begin{corollary}
% Concept subsumption relative to $\EL^\bet_{\textit{reg}}$-TBoxes  is  \textsc{ExpTime}-complete.
% \end{corollary}

%
\section{Related Work}
The problem of automated knowledge base completion has received significant attention in recent years. Most of the work in this area has focused on completing knowledge graphs by learning a suitable vector space representation of the entities and relations involved \cite{NIPS20135071}. However, some work has also focused on description logic ontologies. An early example is \cite{DBLP:conf/semweb/dAmatoFFGL09}, which proposed to use a similarity metric between individuals to find plausible answers to queries. More recently, Bouraoui and Schockaert~\shortcite{DBLP:conf/ijcai/BouraouiS18} proposed a method for finding plausible missing ABox assertions, by representing each concept as a Gaussian distribution in a vector space, while Kulmanov et al.~\shortcite{DBLP:conf/ijcai/KulmanovLYH19} proposed a method to learn a vector space embedding of $\mathcal{EL}$ ontologies for this purpose. The problem of completing TBoxes using vector space representations was considered in \cite{DBLP:conf/aaai/BouraouiS19}.

The previously mentioned approaches are essentially heuristic, focusing on the empirical performance of the considered strategies, without introducing a corresponding model-theoretic semantics or studying the formal properties of associated reasoning tasks (e.g.\ computational complexity). The problem of formally combining logics and similarity is addressed in \cite{CSL2005,SheremetWZ10}, where an operator is introduced to express that a concept $A$ is more similar to some concept $B$ than to some concept $C$. By focusing on comparative similarity, the problem of dealing with numerical degrees is avoided. We can thus think of comparative similarity and conceptual betweenness as two complementary approaches for reasoning about similarity in a qualitative way. 
%A third possibility for dealing with similarity in a qualitative way was considered in \cite{DBLP:journals/ai/SchockaertP11}, where the main idea is to focus on conceptual neighborhood, which essentially means that the regions modelling two properties are thought of as being adjacent in a conceptual space. 
% Beyond qualitative approaches, it is also possible to directly model degrees of similarity. For instance, \citet{esteva1997modal} considered a graded modal logic which formalizes a form of similarity based reasoning. Fuzzy and rough description logics can also be viewed from this angle \cite{straccia2001reasoning,BCE+-15,SchlobachKP07,KleinMS07,JiangTWT09,PenalozaZ13}.
%
Related to concept  betweenness is the notion of \emph{least common subsumer (LCS)} which has been broadly studied in the context of DLs  as means for supporting inductive inference~\cite{CohenBH92,KustersB01,BaaderST07,EckeT12,ZarriessT13,JLW-AAAI20}. Similarly to the LCS of $A$ and $B$, $lcs(A,B)$,  $A \bet B$ subsumes both $A$ and $B$,  thus generalizing them. However, $lcs(A,B)$ is minimal w.r.t.\ the extensions of $A$ and $B$, whereas $A \bet B$ is minimal w.r.t.\ their intent under the feature-based semantics, i.e., it is the least common `natural subsumer'. The latter is arguably, closer to to the cognitive notion of the least common subsumer of $A$ and $B$ as the concept capturing their commonalities. Further, in \EL, where a syntactic description of LCS is not guaranteed to exist, betweeness provides such description.  
%Something about features being cognitively more transparent, so feature-based semantics more appropriate.  Something about LCS not always existing relative to \EL TBoxes under the standard FO semantics?}
%
Beyond qualitative approaches, it is also possible to directly model degrees of similarity. For instance, Esteva et al.~\shortcite{esteva1997modal} considered a graded modal logic which formalizes a form of similarity based reasoning. Fuzzy and rough description logics can also be viewed from this angle \cite{straccia2001reasoning,BCE+-15,SchlobachKP07,KleinMS07,JiangTWT09,PenalozaZ13,LisiS13}. Within a broader context, \cite{DBLP:conf/ismis/LietoP18} is also motivated by the idea of combining description logics with ideas from cognitive science, although their focus is on modelling typicality effects and compositionality, e.g.\ inferring the meaning of \emph{pet fish} from the meanings of \emph{pet} and \emph{fish}, which is a well-known challenge for cognitive systems since typical pet fish are neither typical pets nor typical fish.

The idea of providing a semantics for description logics in which concepts correspond to convex regions from a conceptual space was already considered in \cite{OzcepM13}.  Guti\'errez-Basulto and Schockaert~\shortcite{DBLP:conf/kr/Gutierrez-Basulto18} also studied a semantics based on conceptual spaces for existential rules. %showing that not all ontologies which are satisfiable in the usual sense can be realised using their geometric models. 
The idea of linking description logic concepts to feature based models has been previously considered as well. For instance, Porello et al.~\shortcite{DBLP:conf/dlog/PorelloKRTGM19} introduced a syntactic construct to define description logic concepts in terms of weighted combinations of properties, although unlike in this paper, their properties/features are also syntactic objects. Description logics with \emph{concrete domains} provide  means to refer to concrete objects, such as numbers or spatial regions~\cite{Lutz02,LutzM07}, but (unlike in our case) they come equipped with syntax to access and impose constraints on these domains.  

The study of the link between DLs and formal concept analysis has received considerable attention, see e.g.~\cite{BaaderM00,Rudolph2006,BaaderGSS07,Sertkaya10,Distel2011} and references therein. However, unlike in these works, our main objective in this paper is to use features to characterize natural concepts, and provide semantics capturing the idea of interpolation.

\section{Conclusions and Future Work}
Our central aim in this paper was to formalize interpolative reasoning, a commonsense inference pattern that underpins a cognitively plausible model of induction, in the context of description logics. To this end, we have studied extensions of the description logic $\EL$ in which we can encode that one concept is between two other concepts.  In particular, we have studied two approaches to formally define the semantics of betweenness and the related notion of naturalness: one inspired by formal concept analysis and one inspired by conceptual spaces. We furthermore showed that reasoning in the considered extensions of $\EL$ is \coNP-complete under the featured-enriched semantics, and \PSpace-hard under the geometric semantics. 

There are several important avenues for future work. First, at the foundational level, we believe that our framework can be used as a basis for integrating inductive and deductive reasoning more broadly. Essentially, inductive reasoning requires two things: (i) we need knowledge about the representation of concepts in a suitable feature space and (ii) we need to make particular restrictions on how concepts are represented in that feature space. In this paper, (i) was addressed by providing knowledge about conceptual betweenness whereas (ii) was addressed by the notion of naturalness. However, there may be several other mechanisms to encode knowledge about the feature space. One possibility is to have assertions that relate to analogical proportions (i.e.\ assertions of the form ``$a$ is to $b$ what $c$ is to $d$''), which can be formalized in terms of discrete features or geometric representations \cite{miclet2008analogical,DBLP:series/sci/PradeR14a}. 
%Other possibilities are to consider constructs to encode knowledge about relative similarity, as in \cite{CSL2005}, and even constructs that allow us to explicitly encode knowledge about the features themselves \cite{DBLP:conf/dlog/PorelloKRTGM19}. 
%Similarly, apart from our considered notion of naturalness, there may be other kinds of regularity that need to be considered. For instance, reasoning with analogical proportions in the feature-based semantics would require that concepts are characterized by so-called affine-Boolean functions \cite{DBLP:conf/ijcai/CouceiroHPR17}.

Another important line for future work relates to applying the proposed framework in practice, e.g.\ for ontology completion or for plausible query answering. This will require two additional contributions. First, we either need a tractable fragment of the considered logics or an efficient approximate inference technique. Second, we need practical mechanisms to deal with the noisy nature of the available knowledge about betweenness (which typically would be learned from data) and the inconsistencies that may arise from applying interpolation (e.g.\ because concepts that were assumed to be natural may not be). To this end, we plan to study probabilistic or non-monotonic extensions of our framework.

\paragraph{Acknowledgments}
Yazm\'in  Ib\'a\~nez-Garc\'ia and Steven Schockaert have been supported by ERC Starting Grant 637277.

\bibliographystyle{kr}

\bibliography{references}

\begin{thebibliography}{}

\bibitem[\protect\citeauthoryear{Baader and Molitor}{2000}]{BaaderM00}
Baader, F., and Molitor, R.
\newblock 2000.
\newblock Building and structuring description logic knowledge bases using
  least common subsumers and concept analysis.
\newblock In {\em {ICCS}}, volume 1867 of {\em Lecture Notes in Computer
  Science},  292--305.
\newblock Springer.

\bibitem[\protect\citeauthoryear{Baader \bgroup et al\mbox.\egroup
  }{2007}]{BaaderGSS07}
Baader, F.; Ganter, B.; Sertkaya, B.; and Sattler, U.
\newblock 2007.
\newblock Completing description logic knowledge bases using formal concept
  analysis.
\newblock In {\em {IJCAI}},  230--235.

\bibitem[\protect\citeauthoryear{Baader \bgroup et al\mbox.\egroup
  }{2017}]{DBLP:books/daglib/0041477}
Baader, F.; Horrocks, I.; Lutz, C.; and Sattler, U.
\newblock 2017.
\newblock {\em An Introduction to Description Logic}.
\newblock Cambridge Uni. Press.

\bibitem[\protect\citeauthoryear{Baader, Horrocks, and
  Sattler}{2004}]{baader2004description}
Baader, F.; Horrocks, I.; and Sattler, U.
\newblock 2004.
\newblock Description logics.
\newblock In {\em Handbook on Ontologies}.
\newblock  3--28.

\bibitem[\protect\citeauthoryear{Baader, Sertkaya, and
  Turhan}{2007}]{BaaderST07}
Baader, F.; Sertkaya, B.; and Turhan, A.
\newblock 2007.
\newblock Computing the least common subsumer w.r.t. a background terminology.
\newblock {\em J. Applied Logic} 5(3):392--420.

\bibitem[\protect\citeauthoryear{Bobillo \bgroup et al\mbox.\egroup
  }{2015}]{BCE+-15}
Bobillo, F.; Cerami, M.; Esteva, F.; Garc{\'{i}}a-Cerda{\~{n}}a, {\`{A}}.;
  Pe{\~n}aloza, R.; and Straccia, U.
\newblock 2015.
\newblock Fuzzy description logic.
\newblock In {\em Handbook of Mathematical Fuzzy Logic Volume 3}, volume~58 of
  {\em Studies in Logic}. College Publications.

\bibitem[\protect\citeauthoryear{Bordes \bgroup et al\mbox.\egroup
  }{2013}]{NIPS20135071}
Bordes, A.; Usunier, N.; Garcia-Duran, A.; Weston, J.; and Yakhnenko, O.
\newblock 2013.
\newblock Translating embeddings for modeling multi-relational data.
\newblock In {\em Proc.\ of {NIPS}}.
\newblock  2787--2795.

\bibitem[\protect\citeauthoryear{Bouraoui and
  Schockaert}{2018}]{DBLP:conf/ijcai/BouraouiS18}
Bouraoui, Z., and Schockaert, S.
\newblock 2018.
\newblock Learning conceptual space representations of interrelated concepts.
\newblock In {\em Proc.\ of {IJCAI}},  1760--1766.

\bibitem[\protect\citeauthoryear{Bouraoui and
  Schockaert}{2019}]{DBLP:conf/aaai/BouraouiS19}
Bouraoui, Z., and Schockaert, S.
\newblock 2019.
\newblock Automated rule base completion as bayesian concept induction.
\newblock In {\em Proc.\ of {AAAI}}.

\bibitem[\protect\citeauthoryear{Camacho-Collados, Pilehvar, and
  Navigli}{2016}]{camacho2016nasari}
Camacho-Collados, J.; Pilehvar, M.~T.; and Navigli, R.
\newblock 2016.
\newblock Nasari: Integrating explicit knowledge and corpus statistics for a
  multilingual representation of concepts and entities.
\newblock {\em Artificial Intelligence} 240:36--64.

\bibitem[\protect\citeauthoryear{Cohen, Borgida, and Hirsh}{1992}]{CohenBH92}
Cohen, W.~W.; Borgida, A.; and Hirsh, H.
\newblock 1992.
\newblock Computing least common subsumers in description logics.
\newblock In {\em {AAAI}},  754--760.
\newblock {AAAI} Press / The {MIT} Press.

\bibitem[\protect\citeauthoryear{Craig}{1957}]{Craig57a}
Craig, W.
\newblock 1957.
\newblock Three uses of the herbrand-gentzen theorem in relating model theory
  and proof theory.
\newblock {\em J. Symb. Log.} 22(3):269--285.

\bibitem[\protect\citeauthoryear{d'Amato \bgroup et al\mbox.\egroup
  }{2009}]{DBLP:conf/semweb/dAmatoFFGL09}
d'Amato, C.; Fanizzi, N.; Fazzinga, B.; Gottlob, G.; and Lukasiewicz, T.
\newblock 2009.
\newblock Combining semantic web search with the power of inductive reasoning.
\newblock In {\em Proc.\ of {URSW}},  15--26.

\bibitem[\protect\citeauthoryear{d'Amato}{2020}]{DBLP:journals/semweb/dAmato20}
d'Amato, C.
\newblock 2020.
\newblock Machine learning for the semantic web: Lessons learnt and next
  research directions.
\newblock {\em Semantic Web} 11(1):195--203.

\bibitem[\protect\citeauthoryear{Demeester, Rockt{\"{a}}schel, and
  Riedel}{2016}]{DBLP:conf/emnlp/DemeesterRR16}
Demeester, T.; Rockt{\"{a}}schel, T.; and Riedel, S.
\newblock 2016.
\newblock Lifted rule injection for relation embeddings.
\newblock In {\em Proc.\ of {EMNLP}}.

\bibitem[\protect\citeauthoryear{Derrac and
  Schockaert}{2015}]{DBLP:journals/ai/DerracS15}
Derrac, J., and Schockaert, S.
\newblock 2015.
\newblock Inducing semantic relations from conceptual spaces: {A} data-driven
  approach to plausible reasoning.
\newblock {\em Artificial Intelligence} 228:66--94.

\bibitem[\protect\citeauthoryear{Distel}{2011}]{Distel2011}
Distel, F.
\newblock 2011.
\newblock {\em Learning description logic knowledge bases from data using
  methods from formal concept analysis}.
\newblock Ph.D. Dissertation, Dresden University of Technology.

\bibitem[\protect\citeauthoryear{Dubois \bgroup et al\mbox.\egroup
  }{1997}]{dubois1997logical}
Dubois, D.; Prade, H.; Esteva, F.; Garcia, P.; and Godo, L.
\newblock 1997.
\newblock A logical approach to interpolation based on similarity relations.
\newblock {\em International Journal of Approximate Reasoning} 17(1):1--36.

\bibitem[\protect\citeauthoryear{Ecke and Turhan}{2012}]{EckeT12}
Ecke, A., and Turhan, A.
\newblock 2012.
\newblock Role-depth bounded least common subsumers for {EL+} and {ELI}.
\newblock In {\em Proc.\ of Description Logics}.

\bibitem[\protect\citeauthoryear{Esteva \bgroup et al\mbox.\egroup
  }{1997}]{esteva1997modal}
Esteva, F.; Garcia, P.; Godo, L.; and Rodr{\'\i}guez, R.
\newblock 1997.
\newblock A modal account of similarity-based reasoning.
\newblock {\em International Journal of Approximate Reasoning}
  16(3-4):235--260.

\bibitem[\protect\citeauthoryear{Faruqui \bgroup et al\mbox.\egroup
  }{2015}]{DBLP:conf/naacl/FaruquiDJDHS15}
Faruqui, M.; Dodge, J.; Jauhar, S.~K.; Dyer, C.; Hovy, E.~H.; and Smith, N.~A.
\newblock 2015.
\newblock Retrofitting word vectors to semantic lexicons.
\newblock In {\em Proceedings of NAACL-HLT},  1606--1615.

\bibitem[\protect\citeauthoryear{G{\"a}rdenfors}{2000}]{gardenfors2000conceptual}
G{\"a}rdenfors, P.
\newblock 2000.
\newblock {\em Conceptual spaces: The geometry of thought}.
\newblock MIT press.

\bibitem[\protect\citeauthoryear{G{\"a}rdenfors}{2014}]{gardenfors2014geometry}
G{\"a}rdenfors, P.
\newblock 2014.
\newblock {\em The geometry of meaning: Semantics based on conceptual spaces}.
\newblock MIT press.

\bibitem[\protect\citeauthoryear{Goldsmith \bgroup et al\mbox.\egroup
  }{2008}]{DBLP:journals/jair/GoldsmithLTW08}
Goldsmith, J.; Lang, J.; Truszczynski, M.; and Wilson, N.
\newblock 2008.
\newblock The computational complexity of dominance and consistency in
  {CP}-nets.
\newblock {\em Journal of Artificial Intelligence Research} 33:403--432.

\bibitem[\protect\citeauthoryear{Goodman}{1955}]{goodman1983fact}
Goodman, N.
\newblock 1955.
\newblock {\em Fact, fiction, and forecast}.
\newblock Harvard University Press.

\bibitem[\protect\citeauthoryear{Guo \bgroup et al\mbox.\egroup
  }{2016}]{guo2016jointly}
Guo, S.; Wang, Q.; Wang, L.; Wang, B.; and Guo, L.
\newblock 2016.
\newblock Jointly embedding knowledge graphs and logical rules.
\newblock In {\em Proc.\ of {EMNLP}},  192--202.

\bibitem[\protect\citeauthoryear{Guti{\'{e}}rrez{-}Basulto and
  Schockaert}{2018}]{DBLP:conf/kr/Gutierrez-Basulto18}
Guti{\'{e}}rrez{-}Basulto, V., and Schockaert, S.
\newblock 2018.
\newblock From knowledge graph embedding to ontology embedding? an analysis of
  the compatibility between vector space representations and rules.
\newblock In {\em Proc.\ of {KR}}.

\bibitem[\protect\citeauthoryear{Jameel and
  Schockaert}{2016}]{DBLP:conf/ecai/JameelS16}
Jameel, S., and Schockaert, S.
\newblock 2016.
\newblock Entity embeddings with conceptual subspaces as a basis for plausible
  reasoning.
\newblock In {\em Proceedings of the 22nd European Conference on Artificial
  Intelligence},  1353--1361.

\bibitem[\protect\citeauthoryear{Jameel, Bouraoui, and
  Schockaert}{2017}]{DBLP:conf/sigir/JameelBS17}
Jameel, S.; Bouraoui, Z.; and Schockaert, S.
\newblock 2017.
\newblock Member: Max-margin based embeddings for entity retrieval.
\newblock In {\em Proceedings of the 40th International {ACM} {SIGIR}
  Conference on Research and Development in Information Retrieval},  783--792.

\bibitem[\protect\citeauthoryear{Jiang \bgroup et al\mbox.\egroup
  }{2009}]{JiangTWT09}
Jiang, Y.; Tang, Y.; Wang, J.; and Tang, S.
\newblock 2009.
\newblock Reasoning within intuitionistic fuzzy rough description logics.
\newblock {\em Inf. Sci.} 179(14):2362--2378.

\bibitem[\protect\citeauthoryear{Jung, Lutz, and Wolter}{2020}]{JLW-AAAI20}
Jung, J.~C.; Lutz, C.; and Wolter, F.
\newblock 2020.
\newblock Least general generalizations in description logic: Verification and
  existence.
\newblock In {\em Proc.\ of {AAAI-20}}.

\bibitem[\protect\citeauthoryear{Klein, Mika, and Schlobach}{2007}]{KleinMS07}
Klein, M. C.~A.; Mika, P.; and Schlobach, S.
\newblock 2007.
\newblock Rough description logics for modeling uncertainty in instance
  unification.
\newblock In {\em Proc.\ of {URSW}}.

\bibitem[\protect\citeauthoryear{Kulmanov \bgroup et al\mbox.\egroup
  }{2019}]{DBLP:conf/ijcai/KulmanovLYH19}
Kulmanov, M.; Liu{-}Wei, W.; Yan, Y.; and Hoehndorf, R.
\newblock 2019.
\newblock {EL} embeddings: Geometric construction of models for the description
  logic {EL++}.
\newblock In {\em Proc.\ of {IJCAI}}.

\bibitem[\protect\citeauthoryear{K{\"{u}}sters and Borgida}{2001}]{KustersB01}
K{\"{u}}sters, R., and Borgida, A.
\newblock 2001.
\newblock What's in an attribute? consequences for the least common subsumer.
\newblock {\em J. Artif. Intell. Res.} 14:167--203.

\bibitem[\protect\citeauthoryear{Li, Bouraoui, and
  Schockaert}{2019}]{DBLP:conf/semweb/LiBS19}
Li, N.; Bouraoui, Z.; and Schockaert, S.
\newblock 2019.
\newblock Ontology completion using graph convolutional networks.
\newblock In {\em Proceedings of the 18th International Semantic Web
  Conference},  435--452.

\bibitem[\protect\citeauthoryear{Lieto and
  Pozzato}{2018}]{DBLP:conf/ismis/LietoP18}
Lieto, A., and Pozzato, G.~L.
\newblock 2018.
\newblock A description logic of typicality for conceptual combination.
\newblock In {\em Proceedings of the 24th International Symposium on the
  Foundations of Intelligent Systems},  189--199.

\bibitem[\protect\citeauthoryear{Lisi and Straccia}{2013}]{LisiS13}
Lisi, F.~A., and Straccia, U.
\newblock 2013.
\newblock A system for learning {GCI} axioms in fuzzy description logics.
\newblock In {\em Description Logics}, volume 1014 of {\em {CEUR} Workshop
  Proceedings},  760--778.
\newblock CEUR-WS.org.

\bibitem[\protect\citeauthoryear{Lutz and Milicic}{2007}]{LutzM07}
Lutz, C., and Milicic, M.
\newblock 2007.
\newblock A tableau algorithm for description logics with concrete domains and
  general tboxes.
\newblock {\em J. Autom. Reasoning} 38(1-3):227--259.

\bibitem[\protect\citeauthoryear{Lutz and Wolter}{2011}]{LutWo-IJCAI11}
Lutz, C., and Wolter, F.
\newblock 2011.
\newblock Foundations for uniform interpolation and forgetting in expressive
  description logics.
\newblock In {\em Proceedings of the 22nd International Joint Conference on
  Artificial Intelligence (IJCAI-11)}.

\bibitem[\protect\citeauthoryear{Lutz}{2002}]{Lutz02}
Lutz, C.
\newblock 2002.
\newblock {\em The complexity of description logics with concrete domains}.
\newblock Ph.D. Dissertation, {RWTH} Aachen University, Germany.

\bibitem[\protect\citeauthoryear{Miclet, Bayoudh, and
  Delhay}{2008}]{miclet2008analogical}
Miclet, L.; Bayoudh, S.; and Delhay, A.
\newblock 2008.
\newblock Analogical dissimilarity: definition, algorithms and two experiments
  in machine learning.
\newblock {\em Journal of Artificial Intelligence Research} 32:793--824.

\bibitem[\protect\citeauthoryear{Mikolov \bgroup et al\mbox.\egroup
  }{2013}]{DBLP:journals/corr/abs-1301-3781}
Mikolov, T.; Chen, K.; Corrado, G.; and Dean, J.
\newblock 2013.
\newblock Efficient estimation of word representations in vector space.
\newblock In {\em International Conference on Learning Representations}.

\bibitem[\protect\citeauthoryear{Neelakantan and
  Chang}{2015}]{DBLP:conf/naacl/NeelakantanC15}
Neelakantan, A., and Chang, M.
\newblock 2015.
\newblock Inferring missing entity type instances for knowledge base
  completion: New dataset and methods.
\newblock In {\em Proc.\ of {NAACL}},  515--525.

\bibitem[\protect\citeauthoryear{Osherson \bgroup et al\mbox.\egroup
  }{1990}]{osherson1990category}
Osherson, D.~N.; Smith, E.~E.; Wilkie, O.; Lopez, A.; and Shafir, E.
\newblock 1990.
\newblock Category-based induction.
\newblock {\em Psychological review} 97(2):185--200.

\bibitem[\protect\citeauthoryear{{\"{O}}z{\c{c}}ep and
  M{\"{o}}ller}{2013}]{OzcepM13}
{\"{O}}z{\c{c}}ep, {\"{O}}.~L., and M{\"{o}}ller, R.
\newblock 2013.
\newblock Spatial semantics for concepts.
\newblock In {\em In Proc.\ of {DL}}.

\bibitem[\protect\citeauthoryear{Pe{\~{n}}aloza and Zou}{2013}]{PenalozaZ13}
Pe{\~{n}}aloza, R., and Zou, T.
\newblock 2013.
\newblock Roughening the envelope.
\newblock In {\em Proc.\ of {FroCos}}.

\bibitem[\protect\citeauthoryear{Porello \bgroup et al\mbox.\egroup
  }{2019}]{DBLP:conf/dlog/PorelloKRTGM19}
Porello, D.; Kutz, O.; Righetti, G.; Troquard, N.; Galliani, P.; and Masolo, C.
\newblock 2019.
\newblock A toothful of concepts: Towards a theory of weighted concept
  combination.
\newblock In {\em Proc.\ of {DL}}.

\bibitem[\protect\citeauthoryear{Prade and
  Richard}{2014}]{DBLP:series/sci/PradeR14a}
Prade, H., and Richard, G.
\newblock 2014.
\newblock From analogical proportion to logical proportions: {A} survey.
\newblock In {\em Computational Approaches to Analogical Reasoning: Current
  Trends}.
\newblock  217--244.

\bibitem[\protect\citeauthoryear{Quine}{1953}]{quine}
Quine, W.
\newblock 1953.
\newblock {\em From a Logical Point of View}.
\newblock Harvard University Press.

\bibitem[\protect\citeauthoryear{Rizzo \bgroup et al\mbox.\egroup
  }{2013}]{DBLP:conf/semweb/0001dFE13}
Rizzo, G.; d'Amato, C.; Fanizzi, N.; and Esposito, F.
\newblock 2013.
\newblock Assertion prediction with ontologies through evidence combination.
\newblock In {\em International Workshop on Uncertainty Reasoning for the
  Semantic Web},  282--299.

\bibitem[\protect\citeauthoryear{Rudolph}{2006}]{Rudolph2006}
Rudolph, S.
\newblock 2006.
\newblock {\em Relational exploration: combining description logics and formal
  concept analysis for knowledge specification}.
\newblock Ph.D. Dissertation, Dresden University of Technology, Germany.

\bibitem[\protect\citeauthoryear{Schlobach, Klein, and
  Peelen}{2007}]{SchlobachKP07}
Schlobach, S.; Klein, M. C.~A.; and Peelen, L.
\newblock 2007.
\newblock Description logics with approximate definitions - precise modeling of
  vague concepts.
\newblock In {\em Proc.\ of {IJCAI}}.

\bibitem[\protect\citeauthoryear{Schockaert and
  Prade}{2013}]{schockaert2013interpolative}
Schockaert, S., and Prade, H.
\newblock 2013.
\newblock Interpolative and extrapolative reasoning in propositional theories
  using qualitative knowledge about conceptual spaces.
\newblock {\em Artificial Intelligence} 202:86--131.

\bibitem[\protect\citeauthoryear{Sertkaya}{2010}]{Sertkaya10}
Sertkaya, B.
\newblock 2010.
\newblock A survey on how description logic ontologies benefit from {FCA}.
\newblock In {\em Proc.\ of {CLA}}.

\bibitem[\protect\citeauthoryear{Sheremet \bgroup et al\mbox.\egroup
  }{2005}]{CSL2005}
Sheremet, M.; Tishkovsky, D.; Wolter, F.; and Zakharyaschev, M.
\newblock 2005.
\newblock Comparative similarity, tree automata, and diophantine equations.
\newblock In {\em Proc.\ of {LPAR-2005}}.

\bibitem[\protect\citeauthoryear{Sheremet, Wolter, and
  Zakharyaschev}{2010}]{SheremetWZ10}
Sheremet, M.; Wolter, F.; and Zakharyaschev, M.
\newblock 2010.
\newblock A modal logic framework for reasoning about comparative distances and
  topology.
\newblock {\em Ann. Pure Appl. Logic} 161(4):534--559.

\bibitem[\protect\citeauthoryear{Straccia}{2001}]{straccia2001reasoning}
Straccia, U.
\newblock 2001.
\newblock Reasoning within fuzzy description logics.
\newblock {\em Journal of Artificial Intelligence Research} 14:137--166.

\bibitem[\protect\citeauthoryear{Tversky}{1977}]{tversky1977features}
Tversky, A.
\newblock 1977.
\newblock Features of similarity.
\newblock {\em Psychological review} 84(4):327.

\bibitem[\protect\citeauthoryear{van Harmelen and ten
  Teije}{2019}]{DBLP:journals/jwe/HarmelenT19}
van Harmelen, F., and ten Teije, A.
\newblock 2019.
\newblock A boxology of design patterns forhybrid learningand reasoning
  systems.
\newblock {\em J. Web Eng.} 18(1-3):97--124.

\bibitem[\protect\citeauthoryear{Wille}{1982}]{wille1982restructuring}
Wille, R.
\newblock 1982.
\newblock Restructuring lattice theory: An approach based on hierarchies of
  concepts.
\newblock In {\em Ordered Sets}.
\newblock  445--470.

\bibitem[\protect\citeauthoryear{Xie \bgroup et al\mbox.\egroup
  }{2016}]{DBLP:conf/aaai/XieLJLS16}
Xie, R.; Liu, Z.; Jia, J.; Luan, H.; and Sun, M.
\newblock 2016.
\newblock Representation learning of knowledge graphs with entity descriptions.
\newblock In {\em Proceedings of the Thirtieth {AAAI} Conference on Artificial
  Intelligence},  2659--2665.

\bibitem[\protect\citeauthoryear{Xu \bgroup et al\mbox.\egroup
  }{2014}]{xu2014rc}
Xu, C.; Bai, Y.; Bian, J.; Gao, B.; Wang, G.; Liu, X.; and Liu, T.-Y.
\newblock 2014.
\newblock {RC}-net: A general framework for incorporating knowledge into word
  representations.
\newblock In {\em Proc.\ of {CIKM}}.

\bibitem[\protect\citeauthoryear{Zarrie{\ss} and Turhan}{2013}]{ZarriessT13}
Zarrie{\ss}, B., and Turhan, A.
\newblock 2013.
\newblock Most specific generalizations w.r.t. general el-tboxes.
\newblock In {\em Proc.\ of {IJCAI}-12}.

\end{thebibliography}

%***************************************************************************************************************************************
\clearpage
 \appendix
 \section{\Large Appendix}

\section*{Proofs for Section~\ref{secFeatureSemantics}}

{\bf Lemma~\ref{prop:NatIncl} }{ \it
Let $\Imf= (\Imc, \Fmc, \pi)$ be a feature-enriched interpretation and  $D$  a natural concept in $\Imf$. Then  for every concept $C$,  $\varphi(D) \subseteq \varphi(C)$ iff $C^\Imc \subseteq D^\Imc$. }

\smallskip \noindent 
\begin{proof}
For the ``if" direction, we start by noting that for every $d \in C^\Imc$ it holds that $\pi(d) \supseteq \varphi(C) \supseteq \varphi(D)$. Then,  since $D$ is natural in $\Imf$ it holds that  $d \in \{e \in\Delta^\Imc \mid \varphi(D) \subseteq \pi(e)\} = D^\Imc$. Therefore, $C^\Imc \subseteq D^\Imc$.
%the following. In every model \Imf interpreting $B$ as a natural concept $\varphi(B) \subseteq \varphi(D)$ implies $D^\Imc \subseteq B^\Imc$ for every concept $D$. Indeed,
The ``only if'' direction is trivial.  
\end{proof}

\smallskip \noindent{\bf Proposition~\ref{featureselem} }
{\it
Let $\Imf= (\Imc, \Fmc, \pi)$ be a feature-enriched interpretation. 
For every pair of natural concepts $C,D$ in $\Imf$ such that $C^\Imc \neq \emptyset$ and $D^\Imc\neq \emptyset$, it holds that $(C\bet D)^\Imc$ is equal to the following set:
$$\Bbf = \{d \in \Delta^\Imc \mid \exists d_1 \in C^\Imc. \exists  d_2 \in D^\Imc \text{ s.t. }  \mn{bet}(d_1, d, d_2)\}$$}

\smallskip \noindent 
\begin{proof} 
%Let $$\Bbf = \{d \in \Delta^\Imc \mid \exists d_1 \in C^\Imc. \exists  d_2 \in D^\Imc \text{s.t.\  }  \mn{bet}(d_1, d, d_2)\}.$$ 
For every $d \in (C \bet D)^\Imc$,  by Condition 4 in Definition~\ref{defFeatureInterpretation}, there are $d_1, d_2 \in \Delta ^\Imc$ such that $\pi(d_1) = \pi(d) \cup \varphi(C)$ and $\pi(d_2)= \pi(d) \cup \varphi(D)$. Clearly, $\pi(d_1) \cap \pi(d_2) \subseteq \pi(d)$. Since $C$ and $D$ are both natural it holds that $d_1 \in C^\Imc$ and $d_2 \in D^\Imc$.  This shows $(C \bet D)^\Imc \subseteq \Bbf$. Now, let $d \in \Bbf$, and $d_1\in C^\Imc$ and  $d_2 \in D^\Imc$ be the elements witnessing this. We have that $\varphi(C) \subseteq \pi(d_1)$ and $\varphi(D) \subseteq \pi(d_2)$, and because $\pi(d_1) \cap \pi(d_2) \subseteq \pi(d)$ we can conclude $\varphi(C) \cap \varphi(D) \subseteq \pi(d)$. Therefore, $d \in (C \bet D)^\Imc$, showing that $\Bbf \subseteq (C\bet D)^\Imc$.  
\end{proof}

\smallskip \noindent 
{\bf Lemma~\ref{prop:ConjFeat}} {\it
Let $C,D$ be natural concepts w.r.t.\ a given TBox \Tmc. For every model $\Imf$ of $\Tmc$, it holds that $\varphi(C \sqcap D) = \varphi(C) \cup \varphi(D)$.}

\smallskip \noindent
\begin{proof}
Since  every $d \in (C \sqcap D)^\Imc$  is such that $\varphi(C) \subseteq \pi(d)$ and $\varphi(D) \subseteq \pi(d)$, we have that $\varphi(C) \cup \varphi(D) \subseteq \varphi(C \sqcap D)$. 

Now, to see that $\varphi(C \sqcap D) \subseteq \varphi(C) \cup \varphi(D)$, assume towards a contradiction that there is some $f \in \varphi(C\sqcap D)$ such that  $f \not \in \varphi(C)$, $f \not \in \varphi(D)$. Let $d \in (C \sqcap D)^\Imc$, we have that $f \in \pi(d)$. By Condition 4 in Definition~\ref{defFeatureInterpretation}, there is some $d' \in \Delta^\Imc$ such that $\pi(d')= \pi(d) \setminus \{f\}$. Moreover, $\varphi(C) \subseteq \pi(d')$ and $\varphi(D) \subseteq \pi(d')$. But since both $C$ and $D$ are natural, this means $d' \in C^\Imc \cap D^\Imc$, i.e.\ $d' \in (C\sqcap D)^\Imc$.  This yields a contradiction since it implies $f \not \in \varphi(C \sqcap D)$.  
\end{proof}
%As the previous lemma shows, the proposed semantics enables interpolative inferences. However,
%We use this property to show that interpolation pattern exemplified above is indeed sound for natural concepts. 
%
%The following is an easy consequence of Proposition~\ref{prop:natIncl}.
%\begin{theorem}

\smallskip \noindent
{\bf Theorem~\ref{lemma:intcj}}
{\it Let \Tmc be an $\EL^\bet$-TBox, 
%let $ A$ either be a concept occurring in \Tmc or $\top$, 
and let $A,B,C,D$ be natural concepts w.r.t.\ \Tmc.
If $\Tmc \models \{ A \sqcap C \sqsubseteq B,  A\sqcap D \sqsubseteq B \}$
%\end{itemize}
then $\Tmc \models  A \sqcap  (C \bet D)  \sqsubseteq B$.}

\smallskip \noindent
\begin{proof}  
By Lemma~\ref{lemma:int}, we have that $\Tmc \models (A \sqcap C) \bet (A \sqcap D) \sqsubseteq B$. Showing that $\Tmc \models A \sqcap (C \bet D) \sqsubseteq (A \sqcap C) \bet (A \sqcap D)$ yields the required conclusion. Let $\Imf =(\Imc, \Fmc, \pi)$ be a model of \Tmc.  Given that $C\bet D$ is natural in \Imf, the following equality holds by Lemma~\ref{prop:ConjFeat}:
\begin{align*}
\varphi(A \sqcap (C \bet D)) & = \varphi(A) \cup\varphi(C \bet D)\\
& = \varphi( A) \cup (\varphi(C)\cap\varphi(D))  \\
&=  (\varphi( A) \cup \varphi(C)) \cap (\varphi( A) \cup \varphi(D)) \\
& = \varphi((A \sqcap C) \bet (A \sqcap D))
%& = \varphi( A) \cup (\varphi(C)\cap\varphi(D))  \\ &= \varphi(A) \cup\varphi(C \bet D) \label{eq:featAnd} \\
%&= 
% \varphi((A \sqcap C) \bet (A \sqcap D)) &= (\varphi( A) \cup \varphi(C)) \cap (\varphi( A) \cup \varphi(D)) \\
% & = \varphi( A) \cup (\varphi(C)\cap\varphi(D))  \\ &= \varphi(A) \cup\varphi(C \bet D) \label{eq:featAnd} \\
% &= \varphi(A \sqcap (C \bet D))
\end{align*}
Finally, since  $(A\sqcap C) \bet (A\sqcap D)$ is also natural in $\Imf$, we can conclude using  Lemma~\ref{prop:NatIncl} that $(A \sqcap (C \bet D))^\Imc \subseteq ((A \sqcap C) \bet (A \sqcap D))^\Imc.$ 
%Since,  natural concepts are closed under $\sqcap$ and $\bet$, 
%using Proposition~\ref{prop:NatIncl}, we can conclude $(A \sqcap (C \bet D))^\Imc = ((A \sqcap C) \bet (A \sqcap D))^\Imc$.
%
% %
% From this, since $( A \sqcap C)^\Imc \neq \emptyset$  and $( A \sqcap D)^\Imc \neq \emptyset$, 
% it then  follows that $\varphi(A \sqcap C) = \varphi(A) \cup \varphi(C)$
% and $\varphi(A \sqcap D) = \varphi(A) \cup \varphi(D)$, which together with \eqref{eq:featAnd} allow us to conclude
% $\varphi(  A \sqcap C) \cap \varphi(  A \sqcap D) \subseteq \pi(d)$. Finally, using: $\Imf  \models \{ A \sqcap C \sqsubseteq B,  A\sqcap D \sqsubseteq B\}$, we can conclude $  \pi(d) \supseteq\varphi(B)$.    
\end{proof}

\section*{Proofs for Section~\ref{secGeometricSemantics}}

{\bf Proposition~\ref{propSoundnessGeoLeftNI}}
{\it Let \Tmc be an $\EL^\bet_{\textit{reg}}$-TBox.
If $\mathcal{T}\models \{A\lessdot(C,D), A \sqcap C \sqsubseteq B, D \sqsubseteq B\}$, with $A,B$ natural concepts and $C,D \in \mn{N^{Nat}_C}$, then $\mathcal{T} \models A\sqcap (C \bowtie D) \sqsubseteq B$. 
}

\smallskip\noindent
\begin{proof}
Let $\mathcal{I}$ be a geometric model of $\mathcal{T}$.
Let $p\in \text{reg}_{\mathcal{I}}(A\sqcap (C\bowtie D))$ and let us write $\kappa_{(C,D)}(p)= (p_1,p_2)$. Since $p\in \text{reg}_{\mathcal{I}}(A)$ and $\mathcal{I}\models A\lessdot(C,D)$, it follows that $p_1 \in \text{reg}_{\mathcal{I}}(A) \cap \text{reg}_{\mathcal{I}}(C)$, and thus, since $\mathcal{I}\models A \sqcap C \sqsubseteq B$, we also have $p_1\in \text{reg}_{\mathcal{I}}(B)$. Since $p_2\in \text{reg}_{\mathcal{I}}(D)$ and $\mathcal{I}\models D \sqsubseteq B$, we also have $p_2\in \text{reg}_{\mathcal{I}}(B)$. Since $p_1,p_2 \in \text{reg}_{\mathcal{I}}(B)$, $p$ is between $p_1$ and $p_2$, and $\text{reg}_{\mathcal{I}}(B)$ is convex, we can conclude $p\in \text{reg}_{\mathcal{I}}(B)$.
\end{proof}

\medskip\noindent{\bf Proposition~\ref{prop:ConjGeo}} {\it 
Let $A,B$ be natural concepts and $C,D$ natural concept names.
If $\mathcal{T} \models A\lessdot (C,D)$ and $\mathcal{T} \models B\lessdot (C,D)$ then $\mathcal{T} \models (A\sqcap B)\lessdot (C,D)$}

\smallskip\noindent
\begin{proof}
Let $\mathcal{I}$ be a geometric model of $\mathcal{T}$.
 Let $p\in \text{reg}_{\mathcal{I}}(A) \cap \text{reg}_{\mathcal{I}}(B) \cap \mn{conv}(\text{reg}_{\mathcal{I}}(C)\cup \text{reg}_{\mathcal{I}}(D))$, where we write $\kappa_{(A,B)}(p) = (p_1,p_2)$. If $\mathcal{I} \models \{A\lessdot (C,D),B\lessdot (C,D)\}$ then by definition we have $p_1 \in \text{reg}_{\mathcal{I}}(A) \cap \text{reg}_{\mathcal{I}}(C)$ and $p_1 \in \text{reg}_{\mathcal{I}}(B) \cap \text{reg}_{\mathcal{I}}(D)$, and thus also $p_1 \in \text{reg}_{\mathcal{I}}(A) \cap \text{reg}_{\mathcal{I}}(B) \cap \text{reg}_{\mathcal{I}}(C)$, from which we immediately find that $\mathcal{I}$ satisfies $(A\sqcap B)\lessdot (C,D)$.
\end{proof}

\section*{Proof of Theorem \ref{thm:upperfeatures}}

\begin{lemma}
Let $\alpha_1,...,\alpha_n,\beta$ be propositional clauses. Let $x$ be a fresh atom, then we have
\begin{align}\label{eqPropEntailment1}
\{\alpha_1,...,\alpha_n\} \models \beta
\end{align}
iff
\begin{align}\label{eqPropEntailment2}
\{\alpha_1\vee \neg x,...,\alpha_n \vee \neg x\} \models \beta \vee \neg x
\end{align}
\end{lemma}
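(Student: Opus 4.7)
The plan is to prove both directions by direct model-theoretic arguments, exploiting the fact that $x$ is a fresh propositional atom, i.e.\ that $x$ does not occur in any $\alpha_i$ nor in $\beta$. Throughout, I will write $v \models \varphi$ to mean that valuation $v$ satisfies formula $\varphi$.

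For the forward direction, assume \eqref{eqPropEntailment1} and let $v$ be an arbitrary valuation satisfying every $\alpha_i \vee \neg x$. I distinguish two cases according to the truth value of $x$ under $v$. If $v(x) = \textit{false}$, then $v \models \neg x$, so trivially $v \models \beta \vee \neg x$. If $v(x) = \textit{true}$, then $v \not\models \neg x$, and hence in each clause $\alpha_i \vee \neg x$ the disjunct $\alpha_i$ must be satisfied; so $v \models \{\alpha_1,\ldots,\alpha_n\}$. By \eqref{eqPropEntailment1}, $v \models \beta$, and again $v \models \beta \vee \neg x$.

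For the backward direction, assume \eqref{eqPropEntailment2} and let $v$ be any valuation satisfying $\{\alpha_1,\ldots,\alpha_n\}$. I construct an extension $v'$ of $v$ by setting $v'(x) = \textit{true}$ and $v'(y) = v(y)$ for every other atom $y$. Since $x$ does not occur in any $\alpha_i$, the valuation $v'$ still satisfies each $\alpha_i$, and hence satisfies each $\alpha_i \vee \neg x$. By \eqref{eqPropEntailment2}, $v' \models \beta \vee \neg x$. However $v'(x) = \textit{true}$ implies $v' \not\models \neg x$, so we must have $v' \models \beta$. Finally, because $x$ does not occur in $\beta$, the truth value of $\beta$ under $v'$ agrees with its truth value under $v$, and therefore $v \models \beta$, as required.

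There is essentially no obstacle here: the entire argument hinges on the freshness of $x$, which lets us (i) ignore the extra disjunct $\neg x$ whenever $x$ is set to true, and (ii) freely flip $x$ to true in any valuation without disturbing the truth values of the $\alpha_i$'s and $\beta$. The only thing to be a little careful about is stating the extension step cleanly, so that the reader sees why the freshness assumption is used in both directions.
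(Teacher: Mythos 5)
Your proof is correct and follows essentially the same route as the paper's: the forward direction by case analysis on the truth value of $x$, and the backward direction by extending a model of $\{\alpha_1,\ldots,\alpha_n\}$ with $x$ set to true, using the freshness of $x$ in exactly the same way. No substantive differences.
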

\begin{proof}
Indeed, assume that \eqref{eqPropEntailment1} holds. Let $\omega$ be a model of $\{\alpha_1\vee \neg x,...,\alpha_n \vee \neg x\}$. We need to show that $\omega\models \beta\vee \neg x$. If $\omega \models \neg x$ this is trivial. However, if $\omega \models x$, then from $\omega \models \{\alpha_1\vee \neg x,...,\alpha_n \vee \neg x\}$ we obtain $\omega \models \{\alpha_1,...,\alpha_n\}$. Since we assumed \eqref{eqPropEntailment1} holds, this means $\omega\models \beta$, and thus in particular $\omega\models \beta \vee \neg x$.

%Let us write $\omega'$ for the restriction of $\omega$ to the atoms that occur in $\{\alpha_1,...,\alpha_n,\beta\}$. Then we also have $\omega' \models \{\alpha_1,...,\alpha_n\}$. Since we assumed \eqref{eqPropEntailment1} holds, this means $\omega'\models \beta$, and thus also $\omega\models \beta$, and in particular $\omega\models \beta \vee \neg x$.

Conversely, assume that \eqref{eqPropEntailment2} holds. Let $\omega'$ be a model of $\{\alpha_1,...,\alpha_n\}$ and let $\omega$ be the extension of $\omega'$ to $x$, such that $\omega\models x$. Since $\omega' \models \{\alpha_1,...,\alpha_n\}$ it follows that $\omega \models \{\alpha_1\vee \neg x,...,\alpha_n \vee \neg x\}$ and thus by the fact that we assumed \eqref{eqPropEntailment2}, we have $\omega \models \beta \vee \neg x$. Since $\omega(x)=T$ this means in particular $\omega\models \beta$ and thus also $\omega'\models \beta$.
\end{proof}

\noindent The significance of the above lemma is that to show \coNP-hardness, it is sufficient to show that we can reduce propositional entailment checking problems in which each clause has at least one negative literal. As will become clear in the following lemma, the reason why we need to restrict ourselves to such problems is related to the fact that $\bot$ does not appear in the language of $\EL^\bet$. Theorem \ref{thm:upperfeatures} follows immediately from the following lemma.

\begin{lemma}
Let $\alpha_1,...,\alpha_n,\beta$ be propositional clauses containing at least one negative literal. For such a clause $\alpha = x_1  \vee ... \vee x_n \vee \neg y_1 \vee ... \vee \neg y_m$ we define its translation $\tau(\alpha)$ as:
$$
X_1 \sqcap ... \sqcap X_n \sqsubseteq Y_1 \bowtie ...\bowtie Y_m
$$
where each atom $x$ occurring in $\{\alpha_1,...,\alpha_n,\beta\}$ is associated with a natural concept name $X$. In the particular case where there are no positive literals, the translation is given by:
$$
\top \subseteq Y_1 \bowtie ...\bowtie Y_m
$$
Then it holds that 
\begin{align}\label{eqPropEntailment3}
\{\alpha_1,...,\alpha_n\} \models \beta
\end{align}
iff 
\begin{align}\label{eqELBEntailment1}
\{\tau(\alpha_1),...,\tau(\alpha_n)\} \models \tau(\beta)
\end{align}
%where $\mathcal{N}$ contains for each atom $x$ occurring in $\{\alpha_1,...,\alpha_n,\beta\}$ the natural concept assertion $\mn{Nat}(X)$.
\end{lemma}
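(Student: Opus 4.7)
The plan is to prove both directions of the biconditional by contraposition. For the ``if'' direction (i.e.\ that the $\EL^\bet$ entailment implies the propositional one) I will start from a feature-enriched counter-model and extract a propositional counter-valuation; for the ``only if'' direction I will build a small feature-enriched counter-model from a propositional counter-valuation. The bridge in both cases is that truth values of atoms correspond to the membership of a single distinguished feature $f$ in the sets $\varphi(X)$ attached to the (natural) concept names.

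For the ``if'' direction, suppose $\Imf=(\Imc,\Fmc,\pi)$ satisfies every $\tau(\alpha_i)$ but not $\tau(\beta)=X_1\sqcap\cdots\sqcap X_k\sqsubseteq Y_1\bowtie\cdots\bowtie Y_l$. Then there is an element $d$ with $\varphi(X_i)\subseteq\pi(d)$ for every $i$ and some feature $f\in\bigcap_j\varphi(Y_j)\setminus\pi(d)$. I define the valuation by $\omega(z)=T$ iff $f\in\varphi(Z)$. For $\beta$ one immediately has $\omega(x_i)=F$ for all $i$ (since $\varphi(X_i)\subseteq\pi(d)$ while $f\notin\pi(d)$) and $\omega(y_j)=T$ for all $j$, so $\beta$ is falsified. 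For each $\alpha_i=x'_1\vee\cdots\vee x'_p\vee\neg y'_1\vee\cdots\vee\neg y'_q$, suppose toward contradiction that $\omega$ falsifies $\alpha_i$; then $f\notin\varphi(X'_s)$ for all $s$ and $f\in\varphi(Y'_t)$ for all $t$. By Condition~4 of Definition~\ref{defFeatureInterpretation}, there exists $e\in\Delta^\Imc$ with $\pi(e)=\bigcup_s\varphi(X'_s)$, which is a proper subset of $\Fmc$ because it excludes $f$. Naturalness of the $X'_s$ gives $e\in(X'_1\sqcap\cdots\sqcap X'_p)^\Imc$, so $\Imf\models\tau(\alpha_i)$ forces $\bigcap_t\varphi(Y'_t)\subseteq\pi(e)$, contradicting $f\in\bigcap_t\varphi(Y'_t)\setminus\pi(e)$. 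The degenerate case $p=0$ is handled identically by the element with $\pi(e)=\emptyset$.

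For the ``only if'' direction, fix a propositional valuation $\omega$ satisfying every $\alpha_i$ but falsifying $\beta$. I take $\Fmc=\{f,g\}$ and $\Delta^\Imc=\{d_\emptyset,d_f,d_g\}$ with $\pi$ assigning each element its namesake set, so Condition~4 of Definition~\ref{defFeatureInterpretation} holds automatically. For each concept name $Z$ in the input, set $\varphi(Z)=\{f\}$ when $\omega(z)=T$ and $\varphi(Z)=\emptyset$ otherwise, defining $Z^\Imc=\{d\in\Delta^\Imc\mid\varphi(Z)\subseteq\pi(d)\}$ so that $Z$ is natural. A short calculation shows that at the informative witness $d_\emptyset$ one has $d_\emptyset\in(X_1\sqcap\cdots\sqcap X_n)^\Imc$ iff $\omega(x_i)=F$ for every $i$, and $d_\emptyset\in(Y_1\bowtie\cdots\bowtie Y_m)^\Imc$ iff $\omega(y_j)=F$ for some $j$; the elements $d_f$ and $d_g$ satisfy every such inclusion trivially (since any intersection of sets drawn from $\{\emptyset,\{f\}\}$ is contained in both $\{f\}$ and in any superset of what is forced at $d_\emptyset$). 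Hence $\Imf\models\tau(\alpha_i)$ iff $\omega\models\alpha_i$, and $\tau(\beta)$ is falsified at $d_\emptyset$, yielding the desired counter-model.

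The main technical obstacle is ensuring that the mapping between feature membership and truth values respects both naturalness of concept names and the intersection-based semantics of $\bowtie$. The crucial ingredient in both directions is Condition~4 of Definition~\ref{defFeatureInterpretation}: in the first direction it supplies the witnessing element $e$ whose feature set is exactly $\bigcup_s\varphi(X'_s)$, and in the second direction it dictates the three-element domain that makes the verification uniform. Finally, the restriction that every clause has at least one negative literal is essential precisely because $\EL^\bet$ lacks $\bot$: without it the right-hand side of $\tau(\alpha)$ would be empty, which is not expressible.
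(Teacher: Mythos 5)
Your proof is correct and takes essentially the same route as the paper's: one direction extracts a propositional counter-valuation via $\omega(z)=T$ iff $f\in\varphi(Z)$ for a feature $f$ witnessing the failure of $\tau(\beta)$ (your Condition-4 witness with $\pi(e)=\bigcup_s\varphi(X'_s)$ just inlines what the paper gets from Lemma~\ref{prop:NatIncl} and Lemma~\ref{prop:ConjFeat}), and the other builds the same two-feature, three-element counter-model ($d_\emptyset,d_f,d_g$ matching the paper's $d_1,d_2,d_3$). One cosmetic slip: your ``if''/``only if'' labels are swapped relative to the arguments you actually give (the counter-model-to-valuation paragraph proves \eqref{eqPropEntailment3}${}\Rightarrow{}$\eqref{eqELBEntailment1} by contraposition, and the other paragraph the converse), but since both directions are proved the biconditional stands.
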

\begin{proof}
\begin{itemize}
\item First suppose that \eqref{eqPropEntailment3} holds. Let $\mathfrak{I} = (\mathcal{I},\mathcal{F},\pi)$ be a feature-enriched interpretation that satisfies $\{\tau(\alpha_1),...,\tau(\alpha_n)\}$. Note that because all concept names are natural, it holds that $\mathfrak{I} \models X_1 \sqcap ... \sqcap X_n \sqsubseteq Y_1 \bowtie ...\bowtie Y_m$ iff
$$
\varphi(Y_1 \bowtie ...\bowtie Y_m) \subseteq \varphi(X_1 \sqcap ... \sqcap X_n)
$$
which is equivalent to:
\begin{align}\label{eqConstraintFeaturesClause}
\varphi(Y_1) \cap ...\cap \varphi(Y_m) \subseteq \varphi(X_1) \cup ... \cup \varphi(X_n)
\end{align}
Now assume that there exists some interpretation $\mathfrak{I}$ such that $\mathfrak{I}\models \{\tau(\alpha_1),...,\allowbreak\tau(\alpha_n)\}$ but $\mathfrak{I}\not \models \tau(\beta)$. Let us write $\beta = a_1 \vee ... \vee a_s \vee \neg b_1 \vee ... \vee \neg b_t$. Then  $\mathfrak{I}\not \models \tau(\beta)$ means:
$$
\varphi(B_1) \cap ...\cap \varphi(B_t) \not\subseteq \varphi(A_1) \cup ... \cup \varphi(A_s)
$$
Now let $f$ be a feature that occurs in $\varphi(B_1) \cap ...\cap \varphi(B_t)$ but not in $\varphi(A_1) \cup ... \cup \varphi(A_s)$. Let us write $x_1,...,x_k$ for the set of all atoms that occur in $\{\alpha_1,...,\alpha_n,\beta\}$. Now we construct a propositional interpretation $\omega$ as follows:
\begin{align*}
\omega(x_i)= 
\begin{cases}
T & \text{if $f\in \varphi(X_i)$}\\
F & \text{otherwise}
\end{cases}
\end{align*}
First note that $\omega \models \{\alpha_1,...,\alpha_n\}$. Indeed, for a clause $\alpha_i$ of the form $x_1\vee ... \vee x_n \vee \neg y_1 \vee ... \vee \neg y_m$, since $\mathfrak{I}\models \tau(\alpha_i)$ we have that \eqref{eqConstraintFeaturesClause} holds, hence either we have $f \notin \varphi(Y_1) \cap ...\cap \varphi(Y_m)$ or we have $f\in \varphi(X_1) \cup ... \cup \varphi(X_n)$. We thus have:
\begin{align*}
&(f\notin \varphi(Y_1)) \vee ... \vee (f\notin \varphi(Y_m))\\
&\quad\quad\quad\vee (f\in \varphi(X_1)) \vee ... \vee (f\in \varphi(X_n))
\end{align*}
which is clearly equivalent to $\omega \models \alpha_i$. Since we assumed that \eqref{eqPropEntailment3} holds, the fact that we have $\omega \models \{\alpha_1,...,\alpha_n\}$ implies that we also have $\omega \models \beta$. But this is clearly not possible if $f \in (\varphi(B_1) \cap ...\cap \varphi(B_t)) \setminus (\varphi(A_1) \cup ... \cup \varphi(A_s))$, which is a contradiction. It thus follows that any model $\mathfrak{I}$ of $\{\tau(\alpha_1),...,\allowbreak\tau(\alpha_n)\}$ also has to be a model of $\tau(\beta)$.
%-------------------
\item Now conversely suppose that \eqref{eqELBEntailment1} holds. Let $\omega$ be a model of $\{\alpha_1,...,\alpha_n\}$, and let $x_1,...,x_k$ again be the atoms that occur in $\{\alpha_1,...,\alpha_n,\beta\}$. We now construct a feature enriched interpretation $\mathfrak = (\mathcal{I},\mathcal{F},\pi)$ as follows. We have $\Delta^{\mathcal{I}} = \{d_1,d_2,d_3\}$ and $\mathcal{F} = \{f_1,f_2\}$, where $\pi$ is defined as follows:
\begin{align*}
\pi(d_1) &= \emptyset&
\pi(d_2) &= \{f_1\}&
\pi(d_3) &= \{f_2\}
\end{align*}
Finally, $\mathcal{I}$ is defined as follows:
\begin{align*}
X_i^{\mathcal{I}} &= 
\begin{cases}
\{d_2\} & \text{if $\omega\models x_i$}\\
\{d_1,d_2,d_3\} & \text{otherwise}\\
\end{cases} 
\end{align*}
Note that $\varphi(X_i)=\{f_1\}$ if $\omega\models x_i$ and $\varphi(X_i)=\emptyset$ otherwise. It is straightforward to verify that $\omega\models \alpha_i$ implies $\mathfrak{I} \models \tau(\alpha_i)$. Since we assumed that \eqref{eqELBEntailment1} holds, this implies $\mathfrak{I}\models \tau(\beta)$. Finally, it is straightforward to verify that the latter implies $\omega \models \beta$. 
\end{itemize}
\end{proof}

%*************************************************
\section*{Proof of Theorem \ref{thm:lowerfeatures}}
\begin{figure*}
    \centering
\begin{align*}
&\inference[\Rule{1}]{}{A \sqsubseteq A}
&&\inference[\Rule{2}]{}{A \sqsubseteq \top} \\[10pt]
& \inference[\Rule{3}]{A_1 \sqsubseteq A_2 \ \    A_2 \sqsubseteq A_3 }{A_1 \sqsubseteq A_3 } 
&& \inference[\Rule{4}]{A \sqsubseteq A_1 & A \sqsubseteq A_2 & A_1\sqcap A_2 \sqsubseteq C  }{ A \sqsubseteq C }\\
&\inference[\Rule{5}]{A \sqsubseteq B & B \sqsubseteq \exists r. C}{A \sqsubseteq\exists r. C}
&&\inference[\Rule{6}]{A \sqsubseteq \exists r. A_1 & A_1 \sqsubseteq B & \exists r. B_1 \sqsubseteq B}{A \sqsubseteq B}
%&\inference[\Rule{3}]{K \sqsubseteq \exists r . K' \ \  K  \sqsubseteq \forall r. A}{K \sqsubseteq \exists r . (K'\sqcap A)}
%&& \inference[\mn{R4}]{K \sqsubseteq \exists r. K' \ \  K' \sqsubseteq \forall r^-. A}{K \sqsubseteq A} 
\end{align*}
\caption{Saturation rules for \EL}
    \label{fig:rules}
\end{figure*}
{\bf Lemma~\ref{lema:boundedModel}} { \it 
Let \Tmc be an $\EL^\bet$ TBox. For every model $\Imf = (\Imc, \Fmc, \pi)$ of \Tmc, there is a model 
$\Jmf = (\Imc, \widehat \Fmc, \hat \pi)$ such that $|\widehat \Fmc | \leq \mn{poly}(\Tmc)$.  }

\medskip \noindent 
%\begin{proof}
{\bf Proof Sketch.}
Let $\Imf = (\Imc, \Fmc, \pi)$ be a model of \Tmc with $\Imc = (\Delta^\Imc, \cdot^\Imc)$. We will construct a feature-based interpretation $\Jmf= (\widehat\Imc, \widehat\Fmc, \hat \pi)$, such that $\Delta^\Imc \subseteq \Delta^{\widehat \Imc}$ and 
$A^\Imc \subseteq  A^{\Imc'}$, for every concept $A$ name;
$r^{\Imc} \subseteq  r^{\widehat \Imc}$, for every role name; and $|\widehat \Fmc | \leq \mn{poly}(\Tmc)$.

\smallskip 
\noindent
Let $\mn{sub}_\Tmc$ be the set of concepts 
\begin{align*}
\mn{sub}_\Tmc := &  \{\top, \bot \} \cup \{A \mid \mn{N_C} \cap \mn{sig}(\Tmc)\}\\
&\quad \cup \{{A \bet B} \mid D \sqsubseteq A\bet B \in \Tmc, \text{ for some } D \}\\
&\quad  \cup \{A\sqcap B \mid A \sqcap B \sqsubseteq D \in \Tmc, \text{ for some } D\}
\end{align*} 
 Further, let $\equiv_{\Imc}$ be the following relation over the elements of  $\mn{sub}_\Tmc$. 
$$ \{(C,D) \mid C^\Imc  = D^\Imc \}.$$ 
Clearly, $\equiv_{\Imc}$  is an equivalence relation and thus induces a partition on $\mn{sub}_\Tmc$. 
Let $[C] := \{ D \mid C \equiv_\Imc D\}$ denote the \emph{equivalence class of $C$}.  For two distinct classes 
$[C]$, $[D]$, we write $[C] \prec [D]$ if there are $C' \in [C]$ and $D' \in [D]$ such that $C'^\Imc \subseteq D'^\Imc$. 
Let us denote with $\Kbf$ the set of all equivalnce classes. We will define inductively a mapping $\varphi': \Kbf \rightarrow 2^{\widehat\Fmc}$
where $\widehat \Fmc = \{f^1_X, f^2_X \mid X \in \mn{sub}_\Tmc  \setminus \{\top, \bot \}\}$. 
Let $\varphi_0$ be defined by 
\begin{align*}
\varphi_0([\top]) &= \emptyset;  \quad  \varphi_0([\bot])  = \widehat\Fmc; \\
\varphi_0([C]) &= \{ f^1_X, f^2_X \mid X \in [C] \} & \text{ if  neither }  \bot  \in [C] \text{ nor } \top \in [C] 
\end{align*}

Assuming that $\varphi_i$, for $i\geq 0$,  is defined, $\varphi_{i+1}$ is obtained by applying the following completion rules to every $[C] \in \Kbf$.  Let $F = \varphi_i([C] )$. 
\begin{itemize}
    \item $\varphi_{i+1}([C])$ is the result of adding to $F$ all the elements in $\varphi_i([D])$, for all $[D]$ such that $C \prec [D]$, as well as $\varphi_i([A]) \cap \varphi_i([B])$, for every concept of the form ${A\bet B}$ in  $[C]$.  
    \item For every $A \sqcap B \in [C]$,  with $A,B \in \mn{N^{Nat}_C}$, 
    \begin{align*}
    \varphi_{i+1}([A]) & = \varphi_{i}([A]) \cup \{f^1_X \in F \mid X \in \mn{sub}_\Tmc\}, \text{ and } \\
    \varphi_{i+1}([B]) &= \varphi_i([B]) \cup \{f^2_X \in  F\mid X \in \mn{sub}_\Tmc \}.   
   \end{align*}
\end{itemize}
Let $\hat\varphi$ be the mapping obtained in the limit. 

\smallskip
Towards defining $\Jmf$, let $\Delta$ be a set containing exactly one witness element $d_{F'}$ for each proper subset $F' \subset \widehat \Fmc$, and 
let $\hat \pi: \Delta^\Imc \cup \Delta \rightarrow \widehat \Fmc$ be defined by $\hat\pi(d_F)= F$, for every $d_F \in \Delta$ and 
$\hat\pi(d)= \bigcup_{d \in C^\Imc} \widehat\varphi([C])$, for every $d \in \Delta^\Imc$.  

We are ready to define the interpretation $\Imc'$. 
Then  $\Imc' = (\Delta^\Imc, \cdot^\Imc)$ is the interpretation with $\Delta^{\Imc'} = \Delta^\Imc \cup \Delta$ and
\begin{itemize}
\item $A^{\Imc'} = A^\Imc \cup \{ d_{F} \in \Delta \mid  \widehat\varphi(A) \subseteq F \}$,  for every concept name $A$;
\item $r^{\Imc'} = r^{\Imc} \cup \{ (d_{F}, d_{G}) \in \Delta \times \Delta \mid  \widehat\varphi(A) \subseteq  F \land \widehat\varphi(B) \subseteq G \land A \sqsubseteq \exists r. B \in \Tmc \}$. 
\end{itemize}

It is obvious that $| \widehat{\Fmc} | \leq \mn{poly}(\Tmc)$. 
%It remains to show that 
It is routine to show that $\Jmf= (\Imc', \widehat\Fmc, \hat{\pi})$ satisfies Conditions 1--5 of Definition~\ref{defFeatureInterpretation}, and that it is a model of \Tmc. %\todo{finish}

 %\end{proof}

\medskip
%%%%
\par\noindent{\bf Lemma~\ref{lemma:coNPUpper}} \ 
Let \Tmc be an $\EL^\bet$ TBox and $A,B$ concept names. Then, $\Tmc \not \models A \sqsubseteq B$ iff, after applying Steps 1-4 above, %there is a feature assignment $\varphi$ for \Tmc such that 
 $A \sqsubseteq B \not \in \Tmc'_\theta$. %and $\varphi$ is compatible with $\Tmc'_\varphi$.

\medskip
\noindent
\begin{proof}
%{\bf Proof}
  ``$\Rightarrow$" direction.  
 Note that for every TBox $\Tmc$ and  every model $\Imf= (\Imc,\Fmc, \pi)$ of $\Tmc$, the mapping $\varphi$ defined by $\Imc$ and $\pi$ is proper for $\Tmc$. Further,  
 all CIs added to $\Tmc$ in Step 2 are consequence of the feature-based semantics according to Lemma~\ref{prop:NatIncl}; and the saturation rules (see Figure~\ref{fig:rules}) for  \EL  preserve properness of $\varphi$. Finally, that $A \sqsubseteq B \not \in \Tmc'_\varphi$ follows from soundness of the completion algorithm for \EL.  
 
  \newtheorem{claim}{Claim}
 \smallskip
 \noindent
 For the $\Leftarrow$ direction.  Our aim is to construct a canonical model $\Imf = (\Imc, \Fmc, \pi)$. 
 We will use $\theta$ and $\Tmc'_\theta$ to guide the construction. 
 We can assume that $\theta$ is a total mapping from the set of concept names in $\Tmc$ to some set of features $\Fmc$ (of polynomial size) such that $\theta(A)\subset \Fmc$, for every concept name $A$. 
 %
%  Let $\Delta_\Imc = \{X \mid X \subseteq \Fmc\}$. Set for every concept name $A$, $A^\Imc= \{X \mid \theta(A) \subseteq X\}$ and for every role $r$, $r^\Imc = \{(X,Y) \mid X\in A^\Imc, Y\in B^\Imc \text { and } A  \sqsubseteq \exists r. B \in \Tmc'_\theta \}$. 
%  Finally,  $\pi(X) =X$. 

\smallskip 
We start by defining  $\Delta^\Imc = \{d_{A} \mid A\in \mn{N_C} \text{ occurs in } \Tmc'_\theta\}\cup \{d_X \mid X \subset \Fmc \}$ and set for every concept name $A$, 
$$\begin{aligned}
A^\Imc = &\{ d_{A'} \mid A' \sqsubseteq A  \in \Tmc'_\theta \} \, \cup \\
& \{d_X \mid \exists A'\in \mn{N^{Nat}_C},
\text{ s.t. } A' \sqsubseteq A \in \Tmc'_\theta, \theta(A') \subseteq X \}\end{aligned}$$ 
and for every role name $r$, $$r^\Imc = \{(d_A, d_{B}), (d_X, d_B) \mid A \sqsubseteq \exists r. B \text{ and } \theta(A)\subseteq X\}.$$
Further, set $\pi(d_A)= \theta(A)$ and $\pi(d_X)= X$. 

\medskip\noindent
The following is a consequence of $\theta$ being proper. 
\begin{claim}%\begin{itemize}
    %\item 
    For every concept name $A$, 
$\varphi(A) = \theta(A)$. 
%\item $\varphi(B_1\bet B_2) = \theta$
%\end{itemize}
\end{claim}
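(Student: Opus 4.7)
The plan is to prove the two inclusions $\varphi(A)\subseteq\theta(A)$ and $\theta(A)\subseteq\varphi(A)$ separately, relying on the explicit description of $A^\Imc$ from the canonical construction together with the fact that $\theta$ is proper for $\Tmc'_\theta$, which is ensured by Step 4 of the decision procedure.

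For $\varphi(A)\subseteq\theta(A)$, the key point is that the reflexivity rule (Rule 1 in the standard \EL saturation algorithm) guarantees $A\sqsubseteq A\in\Tmc'_\theta$. Hence $d_A$ lies in $A^\Imc$ by the first clause of its definition, and since $\pi(d_A)=\theta(A)$, the intersection $\varphi(A)=\bigcap_{d\in A^\Imc}\pi(d)$ is trivially contained in $\theta(A)$.

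For the reverse inclusion, I would take an arbitrary $d\in A^\Imc$ and establish $\theta(A)\subseteq\pi(d)$. By the definition of $A^\Imc$, such a $d$ is either of the form $d_{A'}$ with $A'\sqsubseteq A\in\Tmc'_\theta$, or of the form $d_X$ with $\theta(A')\subseteq X$ for some natural concept name $A'$ satisfying $A'\sqsubseteq A\in\Tmc'_\theta$. In both cases, the properness condition (2) applied to the CI $A'\sqsubseteq A$ yields $\widehat{\theta}(A)\subseteq\theta(A')$, which since $A$ is a concept name simplifies to $\theta(A)\subseteq\theta(A')$. Then $\theta(A)\subseteq\theta(A')=\pi(d_{A'})$ handles the first case, and $\theta(A)\subseteq\theta(A')\subseteq X=\pi(d_X)$ handles the second.

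The main delicate ingredient is thus the properness of $\theta$ on the full saturated TBox $\Tmc'_\theta$, rather than only on the original $\Tmc$: this is precisely the invariant that Step 4 of the procedure is designed to enforce, and without it the chain $\theta(A)\subseteq\theta(A')$ could break. Once this invariant is granted, however, the remaining work is a straightforward case analysis on the two ways in which an element can enter $A^\Imc$, so no significant obstacle arises beyond careful unfolding of the definitions.
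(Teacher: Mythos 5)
Your proof is correct and is essentially the paper's (implicit) argument: the paper merely asserts the claim as ``a consequence of $\theta$ being proper,'' and your unfolding --- $\varphi(A) \subseteq \theta(A)$ because reflexivity (rule R1 of the completion) puts $d_A \in A^\Imc$ with $\pi(d_A)=\theta(A)$, and $\theta(A) \subseteq \varphi(A)$ by applying properness condition~2 to the inclusion $A' \sqsubseteq A \in \Tmc'_\theta$ witnessing each of the two kinds of elements of $A^\Imc$ --- is exactly the intended one. You are also right about the key invariant: properness must hold for the saturated TBox $\Tmc'_\theta$ (as checked in Step~4), not merely for the original $\Tmc$.
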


\smallskip\noindent
Note that according to the definition of $\Imf$, we have that the following holds. 
\begin{itemize}
\item 
$d_{A'} \in A^\Imc$ iff $A'\sqsubseteq A\in\Tmc'_\theta$;
\item $d_{X} \in A^\Imc$ iff there is a natural concept name $A'$ (possibly equal to $A$)  whose defining features contain $X$. 
\end{itemize}

\medskip\noindent
We now show that $\Imf$ is indeed a model of $\Tmc$. 
 %Since $\theta$ is proper $\pi(d)$ is well defined. 
 
 %The definition of $\cdot^\Imf$ ensures that all 
\begin{itemize}
\item  For CIs of the form $A \sqsubseteq B \in \Tmc'_\theta$ we have that  for each $d_{A'} \in A^\Imc$ we have  
 $A' \sqsubseteq  A \in \Tmc'_\theta$. Saturation of $\Tmc'_\theta$ yields $A' \sqsubseteq B$ and therefore $d_{A'}\in B^\Imc$. Now, for each $d_X \in A^\Imc$, we know there is a natural concept name $A'$ such that $A' \sqsubseteq A \in \Tmc'_\theta$. Since $\theta$ is proper, we obtain the following $\pi(d_X) = X \supset \theta(A')\supseteq \theta(B')$. The definition of $\Imc$ yields that $d_X  \in B^\Imc$.  
 
\item  The argument for CIs of the form $A_1\sqcap  A_1 \sqsubseteq B$ in $\Tmc'_\theta$ is similar to the previous one, using the semantics of $\sqcap$. 
 \item For CIs of the form $A \sqsubseteq \exists r. B$ in $\Tmc'_\theta$. 
 Let $d \in A^\Imc$, then there is some $A'$ such that $A' \sqsubseteq A \in \Tmc'_\theta$, and either  $d= d_{A'}$, or $d= d_X$ and $A'$ is a natural concept name with $\theta(A') \subseteq X$.
 Because $\Tmc'_\theta$ is saturated by the rules, $A' \sqsubseteq \exists r. B$ is also in $\Tmc'_\theta$. The definition of $r^\Imc$, then ensures that  $(d_{A'}, d_{B}) \in r^\Imc$. Clearly, $d_B \in B^\Imc$ since $B \sqsubseteq B \in \Tmc'_\theta$. 
 
\item For CIs of the form $\exists r. A \sqsubseteq B$. 
    Let $d,d_{A'} \in \Delta^\Imc$ such that $(d,d_{A'}) \in r^\Imc$ and $d_{A'} \in A^\Imc$. From the definition of $r
    ^\Imc$, it then follows that there is some $B'$ such that $B' \sqsubseteq \exists r. A'$ and $A' \sqsubseteq A$ are both in $\Tmc'_\theta$ and $\theta(B') \subseteq X$, and either (i)~$d= d_{B'}$, or (ii)~$d= d_X$, and $B'$ is a natural concept name.  
    The saturation ensures $B' \sqsubseteq B \in \Tmc'_\theta$. Therefore in both cases   (i) and (ii) $d= d \in B^\Imc$. 
     
\item Consider now, CIs of the form $A \sqsubseteq B_1 \bet B_2$ and  let $d \in A^\Imc$.  It suffices to show that $\varphi(B_1) \cap \varphi(B_2) \subseteq \pi(d)$. By Claim 1, we have that $\varphi(B_1)\cap \varphi(B_2)= \theta(B_1)\cap \theta(B_2)$. Then, the definition of $\Imf$ implies that there is some $A'$ with $A' \sqsubseteq A \in \Tmc'_\theta$.  Since $\theta$ is proper we have that $\theta(B_1) \cap \theta(B_2) \subseteq \theta(A')$.  If $d= d_{A'}$, then clearly $\pi(d)$ contains $\varphi(B_1) \cap \varphi(B_2)$. On the other hand, if $d= d_X$, because $\varphi(A') \subseteq X$, we get as required  $\varphi(B_1) \cap \varphi(B_2) \subseteq \varphi(d)$. 

\item For CIs $B_1 \bet B_2 \sqsubseteq A$. Let $d \in (B_1\bet B_2)^\Imc$. Then $\varphi(B_1) \cap \varphi(B_2) \subseteq \pi(d)$. We consider two cases according to the form of $d$. 

\begin{itemize}

\item First, assume $d= d_A'$, for some $A'$. Then we have that $\pi(d)= \theta(A') \supset \theta(B_1) \cap \theta(B_2)= \varphi(B_1\bet B_2)$. The construction of $\Tmc_\theta$, then implies that $A' \sqsubseteq B_1 \bet B_2 \in \Tmc'_\theta$. Since in-between concepts are treated as concept names, this means saturation (rule \Rule{3}) ensures that $A' \sqsubseteq A\in \Tmc'_\theta$, and therefore $d \in A^\Imc$ in this case. 

\item Second, assume $d= d_X$ for some $X \subset \Fmc$.  Then since $\theta$ 
 is proper, we have that $\theta(A) \subseteq \theta(B_1) \cap \theta(B_2)= \varphi(B_1) \cap \varphi(B_2) \subseteq \pi(d)= X$. The definition of $\Imc$ then ensures $d \in A^\Imc$.   
\end{itemize}
\end{itemize}
 
Notably, the addition of elements of the form $d_X$ in $\Delta^\Imc$ ensures that every natural concept name is interpreted as a natural concept, and that Condition 4 of Definition~\ref{defFeatureInterpretation} is satisfied.

Finally, assume that $A \sqsubseteq B \not\in \Tmc'_\theta$. Then, by rule \Rule{1}
$A \sqsubseteq A \in \Tmc'_\theta$,  which means $d_A \in A^\Imc$, and $d_A \not \in B^\Imc$. Since $\Imf$ is also a model of $\Tmc$ this yields $\Tmc \not \models A\sqsubseteq B$. 
\end{proof}

%%%%

%****************************************************
\section*{Proof of Theorem \ref{thm:lowergeo}}
Let us consider a generalized CP-net over the set of atoms $\mathcal{A}=\{a_1,...,a_m\}$, with rules $\rho_1,...,\rho_n$ of the form \eqref{eqCPrule}. Let $(l_1,...,l_m)$ be the initial outcome, and let the $\EL^\bet$ TBox $\mathcal{T}$ be the translation of the generalized CP-net, as defined in Section \ref{secHardnessGeometric}.

One direction of the proof of Theorem \eqref{thm:lowergeo} is straightforward to show, as expressed in the following lemma.
\begin{lemma}
If there exists a sequence of improving flips from the initial outcome $(l_1,...,l_m)$ to another outcome $(r_1,...,r_m)$ then it follows that
$$
\mathcal{T} \models \tau(r_1 \wedge ... \wedge r_m) \sqsubseteq Z
$$
\end{lemma}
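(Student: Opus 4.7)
The plan is to prove the statement by induction on the length $k$ of the sequence of improving flips leading from $(l_1,\ldots,l_m)$ to $(r_1,\ldots,r_m)$. The base case $k=0$ is immediate: the outcomes coincide and $\tau(l_1\wedge\cdots\wedge l_m)\sqsubseteq Z$ is exactly the concept inclusion \eqref{eqInitialELrule}. For the inductive step, I let $(s_1,\ldots,s_m)$ be the outcome obtained after $k$ flips, so that there is a single improving flip from $s$ to $r$ sanctioned by some rule $\rho_i: \eta_i: q_i > \neg q_i$, and by the induction hypothesis $\mathcal{T}\models \tau(s_1\wedge\cdots\wedge s_m)\sqsubseteq Z$.

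The two outcomes $s$ and $r$ differ only in the truth value assigned to $q_i$'s atom, with $s$ containing $\neg q_i$ and $r$ containing $q_i$; write $\sigma$ for the conjunction of literals on which they agree. Since both $s$ and $r$ satisfy $\eta_i$ and $\eta_i$ does not mention $q_i$'s atom, the literals of $\eta_i$ lie within $\sigma$, so I may decompose $\sigma = \eta_i \wedge \mu$, where $\mu$ collects precisely those literals whose atoms appear neither in $\eta_i$ nor among $\{q_i,\neg q_i\}$. By the construction of $\mathcal{T}$, every such literal $\ell$ occurring in $\mu$ satisfies $\tau(\ell)\lessdot(W_i,X_i)\in\mathcal{T}$. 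Iterating Proposition \ref{prop:ConjGeo} over the conjuncts of $\mu$ then yields $\tau(\mu)\lessdot(W_i,X_i)$.

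I then invoke Proposition \ref{propSoundnessGeoLeftNI} with $A:=\tau(\mu)$, $C:=W_i$, $D:=X_i$, $B:=Z$. Non-interference was just established; the condition $X_i\sqsubseteq Z$ is \eqref{eqELrule2}; and the remaining premise $\tau(\mu)\sqcap W_i\sqsubseteq Z$ follows from \eqref{eqELruleWi1}, which gives $W_i\sqsubseteq\tau(\eta_i\wedge\neg q_i)$, together with the induction hypothesis, since $\tau(\mu)\sqcap W_i\sqsubseteq\tau(\mu)\sqcap\tau(\eta_i)\sqcap\tau(\neg q_i)=\tau(s_1\wedge\cdots\wedge s_m)\sqsubseteq Z$. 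All of $\tau(\mu)$, $W_i$, $X_i$, and $Z$ are natural (each $A_j$, $\overline{A_j}$, $W_i$, $X_i$, and $Z$ is declared a natural concept name, and natural concepts are closed under $\sqcap$), so the proposition applies and yields $\tau(\mu)\sqcap(W_i\bowtie X_i)\sqsubseteq Z$. Combining this with \eqref{eqELrule3}, I conclude $\tau(r_1\wedge\cdots\wedge r_m)=\tau(\mu)\sqcap\tau(\eta_i\wedge q_i)\sqsubseteq\tau(\mu)\sqcap(W_i\bowtie X_i)\sqsubseteq Z$, completing the induction.

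The main obstacle will be the careful bookkeeping to ensure that \emph{every} literal appearing in $\mu$ has a matching non-interference assertion in $\mathcal{T}$: this is precisely why the definition of $\mathcal{T}$ adds both $A_j\lessdot(W_i,X_i)$ and $\overline{A_j}\lessdot(W_i,X_i)$ for every atom $a_j$ outside $\eta_i\cup\{q_i,\neg q_i\}$, so that whichever sign $\mu$ happens to carry for $a_j$, non-interference is available. A secondary point that must be checked along the way is that the naturalness preconditions of Propositions \ref{prop:ConjGeo} and \ref{propSoundnessGeoLeftNI} are preserved at every stage of the induction, which is straightforward here because the concepts appearing on the left- and right-hand sides of the relevant inclusions are all built from natural concept names via $\sqcap$ and $\bowtie$ only.
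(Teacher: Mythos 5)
Your proof is correct and takes essentially the same route as the paper: the paper's own proof is a single sentence asserting that each improving flip can be simulated by applying sound interpolation, and your induction---using Proposition~\ref{prop:ConjGeo} to aggregate the non-interference assertions over the untouched literals $\mu$ and Proposition~\ref{propSoundnessGeoLeftNI} (with $A=\tau(\mu)$, $C=W_i$, $D=X_i$, $B=Z$) to carry $Z$-subsumption across each flip via \eqref{eqELrule2}--\eqref{eqELruleWi2}---is exactly that simulation, spelled out. The only detail left implicit is the degenerate case where $\mu$ is empty, in which non-interference can be bypassed altogether and the conclusion follows directly from Lemma~\ref{lemma:int} applied to $W_i\bet X_i$ with \eqref{eqELrule2} and \eqref{eqELruleWi1}.
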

\begin{proof}
This follows trivially from the soundness of interpolation, given that each improving flip can be simulated by applying interpolation.
\end{proof}

\noindent We focus on showing the opposite direction. In particular, for an outcome $(r_1,...,r_m)$ such that $\mathcal{T} \models \tau(r_1 \wedge ... \wedge r_m) \sqsubseteq Z$, we show that there must exist a sequence of improving flips from $(l_1,...,l_m)$ to $(r_1,...,r_m)$.

To this end, it is sufficient to show that there exists some particular geometric model $\mathcal{I}$ of $\mathcal{T}$ such that $\mathcal{I} \models \tau(r_1 \wedge ... \wedge r_m) \sqsubseteq Z$ iff there exists a sequence of improving flips from $(l_1,...,l_m)$ to $(r_1,...,r_m)$.
We construct such a model $\mathcal{I}$ as follows. Let $\omega_1,...,\omega_n$ be an enumeration of all the possible outcomes (or, equivalently, an enumeration of the possible worlds over $\mathcal{A}$). For $a_i\in \mathcal{A}$, we define $\text{reg}_{\mathcal{I}} (A_i)$ as the set of all points $(x_1,...,x_n)\in \mathbb{R}^n$ such that:
\begin{itemize}
\item $(\omega_i \models a_i) \Rightarrow (0 \leq x_{i} \leq 1)$,
\item $(\omega_i \models \neg a_i) \Rightarrow (x_{i}=0)$,
\item $\sum_i x_i = 1$.   
\end{itemize}
Similarly, we define $\text{reg}_{\mathcal{I}} (\overline{A_i})$ as the set of all points $(x_1,...,x_n)$ such that:
\begin{itemize}
\item $(\omega_i \models \neg a_i) \Rightarrow (0 \leq x_{i} \leq 1)$,
\item $(\omega_i \models  a_i) \Rightarrow (x_{i}=0)$,
\item $\sum_i x_i = 1$.   
\end{itemize}
For an outcome $\omega=(r_1,...,r_m)$ it holds that $\text{reg}_{\mathcal{I}}(\tau(r_1)) \cap ... \cap \text{reg}_{\mathcal{I}}(\tau(r_m))$ contains a single point, which we will denote as $P(\omega)$. Note that the coordinate of $P(\omega)$ corresponding to the outcome $\omega$ is 1 and all other coordinates are 0.

The natural concept names $W_i$ were introduced as an abbreviation of $\eta_i \wedge \neg q_i$. Accordingly, we define:
$$
\text{reg}_{\mathcal{I}} (W_i) = \text{reg}_{\mathcal{I}} (\tau(\eta_i \wedge \neg q_i))
$$
For a CP-rule $\rho_i$ let us write $\mathcal{O}_i$ for the set of all improving flips $(\omega,\omega')$ that are warranted by this rule. With each such an improving flip, we associate the point $p_{(\omega,\omega')} = (x_1,...,x_n)$ where:
\begin{align*}
x_i = 
\begin{cases}
-1 & \text{if $\omega_i = \omega$}\\
2 & \text{if $\omega_i = \omega'$}\\
0 & \text{otherwise}
\end{cases}
\end{align*}
We define:
$$
\text{reg}_{\mathcal{I}} (X_i) = \mn{conv}(\bigcup_{(\omega,\omega')\in \mathcal{O}_i} p_{(\omega,\omega')})
$$
and
\begin{align*}
\text{reg}_{\mathcal{I}} (Z) &= \mn{conv}\left( \{P(l_1,..., l_m)\} \cup \bigcup_i \text{reg}_{\mathcal{I}} (X_i) \right)
\end{align*}

\begin{lemma}
The $n$-dimensional geometric interpretation $\mathcal{I}$ defined above is a model of $\mathcal{T}$.
\end{lemma}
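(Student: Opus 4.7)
The plan is to verify in turn each condition required for $\mathcal{I}$ to be a geometric model of $\mathcal{T}$: convexity of every natural concept region, satisfaction of every concept inclusion, existence of the required $\kappa$-mappings witnessing $\bowtie$-membership, and satisfaction of every non-interference assertion. The convexity checks are immediate: $\mn{reg}_{\mathcal{I}}(A_i)$ and $\mn{reg}_{\mathcal{I}}(\overline{A_i})$ are intersections of half-spaces with an affine hyperplane; $\mn{reg}_{\mathcal{I}}(W_i)$ is defined as $\mn{reg}_{\mathcal{I}}(\tau(\eta_i\wedge\neg q_i))$, hence again an intersection of such regions; and $\mn{reg}_{\mathcal{I}}(X_i)$, $\mn{reg}_{\mathcal{I}}(Z)$ are explicit convex hulls. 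The inclusions $X_i\sqsubseteq Z$ and \eqref{eqInitialELrule} hold by construction of $\mn{reg}_{\mathcal{I}}(Z)$, while \eqref{eqELruleWi1} and \eqref{eqELruleWi2} hold by the definition of $W_i$.

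The first inclusion that requires work is \eqref{eqELrule3}, namely $\tau(\eta_i\wedge q_i)\sqsubseteq W_i\bowtie X_i$. The region $\mn{reg}_{\mathcal{I}}(\tau(\eta_i\wedge q_i))$ is the simplex spanned by $\{P(\omega):\omega\models \eta_i\wedge q_i\}$, so by convexity of $\mn{conv}(\mn{reg}_{\mathcal{I}}(W_i)\cup \mn{reg}_{\mathcal{I}}(X_i))$ it suffices to place each such vertex inside. Letting $\omega'$ be the outcome obtained from $\omega$ by flipping $q_i$ to $\neg q_i$, we have $\omega'\models \eta_i\wedge\neg q_i$, $(\omega',\omega)\in \mathcal{O}_i$, and a direct coordinate check gives $P(\omega)=\tfrac{1}{2}P(\omega')+\tfrac{1}{2}p_{(\omega',\omega)}$, which exhibits $P(\omega)$ as a convex combination of a point of $W_i$ and a point of $X_i$.

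The main obstacle is defining the maps $\kappa_{(W_i,X_i)}$ and verifying the non-interference assertions $A_j\lessdot(W_i,X_i)$ and $\overline{A_j}\lessdot(W_i,X_i)$ for atoms $a_j$ not occurring in $\eta_i$ or $\{q_i,\neg q_i\}$. The approach exploits that, writing $S$ for the set of such free atoms, $\mathcal{O}_i=\{(\omega^-_\sigma,\omega^+_\sigma):\sigma\in\{T,F\}^S\}$, where $\omega^\pm_\sigma$ is the unique outcome satisfying $\eta_i$, $\pm q_i$, and~$\sigma$. Every $p\in \mn{conv}(\mn{reg}_{\mathcal{I}}(W_i)\cup \mn{reg}_{\mathcal{I}}(X_i))$ thus admits a representation $p=\sum_\sigma \lambda_\sigma P(\omega^-_\sigma)+\sum_\sigma \mu_\sigma p_{(\omega^-_\sigma,\omega^+_\sigma)}$ with non-negative coefficients summing to~$1$, and I define $\kappa_{(W_i,X_i)}(p)=(p_1,p_2)$ by normalising each partial sum, with $\kappa_{(X_i,W_i)}(p)=(p_2,p_1)$ enforced jointly; degenerate cases in which one sum vanishes are handled by choosing any point of the corresponding region whose $A_j$/$\overline{A_j}$-memberships match those of~$p$, which is possible since the signs of atoms in $S$ may be prescribed freely. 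The crucial calculation is then: if $p\in \mn{reg}_{\mathcal{I}}(A_j)$ and $\sigma_0(a_j)=F$, the vanishing of the coordinate of $p$ at $\omega^+_{\sigma_0}$ forces $2\mu_{\sigma_0}=0$ (only $p_{(\omega^-_{\sigma_0},\omega^+_{\sigma_0})}$ contributes at that coordinate), and then the vanishing at $\omega^-_{\sigma_0}$ forces $\lambda_{\sigma_0}=0$; hence $p_1$ has no mass on any outcome falsifying $a_j$, so $p_1\in \mn{reg}_{\mathcal{I}}(A_j)$. The argument for $\overline{A_j}$ is symmetric, swapping the roles of $T$ and $F$.
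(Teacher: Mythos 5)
Your proof is correct and takes essentially the same route as the paper's: the trivial inclusions, the midpoint identity $P(\omega)=\tfrac{1}{2}P(\omega')+\tfrac{1}{2}p_{(\omega',\omega)}$ for \eqref{eqELrule3}, and a definition of $\kappa_{(W_i,X_i)}$ that splits the (coordinate-wise unique) convex decomposition of $p$ into its $W_i$-part and $X_i$-part and normalises, with the same coordinate argument showing the $W_i$-component inherits membership in $\mn{reg}_{\mathcal{I}}(A_j)$ and $\mn{reg}_{\mathcal{I}}(\overline{A_j})$ for free atoms $a_j$. The only quibble is your degenerate-case claim that the missing component can be chosen with \emph{matching} $A_j/\overline{A_j}$-memberships: when $p\in\mn{reg}_{\mathcal{I}}(W_i)$ lies in some $\mn{reg}_{\mathcal{I}}(A_j)$, no point of $\mn{reg}_{\mathcal{I}}(X_i)$ can match it (every point of $\mn{reg}_{\mathcal{I}}(X_i)$ has a negative coordinate), but this is harmless because the assertions $A_j\lessdot(W_i,X_i)$ only constrain the first component $p_1=p$ --- which is exactly how the paper resolves these cases.
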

\begin{proof}
We have that \eqref{eqInitialELrule}, \eqref{eqELrule2}, \eqref{eqELruleWi1} and \eqref{eqELruleWi2} are trivially satisfied. 
To see why \eqref{eqELrule3} is satisfied, consider any outcome $\omega' = (s_1,...,s_m)$ such that $\omega'\models \eta_i \wedge  q_i$. Let $\omega$ be the outcome which coincides with $\omega'$ apart from the fact that $\omega$ satisfies $\neg q_i$ instead of $q_i$. Note that $(\omega,\omega') \in \mathcal{O}_i$ and thus $p_{(\omega,\omega')} \in \text{reg}_{\mathcal{I}} (X_i)$. Moreover, we also have $P(\omega) \in \tau(\eta_i \wedge \neg q_i)$, and we have that:
$$
P(\omega') = \frac{1}{2} P(\omega) + \frac{1}{2} p_{(\omega,\omega')}
$$
We thus find $P(\omega')\in \text{reg}_{\mathcal{I}}( \tau(\eta_i\wedge \neg q_i)\bowtie X_i)$. Finally, note that each point in $\text{reg}_{\mathcal{I}}(\tau(\eta_i\wedge q_i))$ can be written as a convex combination of points of the form $P(\omega')$, hence we have 
$$
\text{reg}_{\mathcal{I}}(\tau(\eta_i \wedge q_i)) \subseteq \text{reg}_{\mathcal{I}}( \tau(\eta_i\wedge \neg q_i)\bowtie X_i)
$$
Given that $\text{reg}_{\mathcal{I}}( \tau(\eta_i\wedge\neg q_i))=\text{reg}_{\mathcal{I}}( W_i)$, we thus have that $\mathcal{I}$ satisfies \eqref{eqELrule3}. Finally, we show that the non-interference assertions are all satisfied. For each rule $\rho_i$, we need to define a suitable mapping $\kappa_{(W_i,X_i)}$. For the ease of presentation we will denote this mapping as $\kappa_i$. Let $p \in \text{reg}_{\mathcal{I}} (W_i\bowtie X_i)$, which is equivalent to $p \in \text{reg}_{\mathcal{I}}( \tau(\eta_i \wedge \neg q_i)\bowtie X_i)$. Then it holds that
\begin{align}\label{eqConvexCombinationA1}
p = \lambda_0 p_1 + \sum_{(\omega,\omega')\in\mathcal{O}_i} \lambda_{(\omega,\omega')} p_{(\omega_{j_l},\omega_{j_l}')}
\end{align}
for some $p_1 \in \text{reg}_{\mathcal{I}}( \tau(\eta_i\wedge\neg q_i))$, where $\lambda_0$ and the weights of the form $\lambda_{(\omega,\omega')}$ are non-negative and sum to 1. Let $x_{j_1},...,x_{j_d}$ be the non-zero coordinates from $p=(x_1,...,x_n)$ that correspond to outcomes $\omega'_{j_1},...,\omega'_{j_d}$ in which $\eta_i\wedge q_i$ is true. Note that for each of these outcomes $\omega'_{j_l}$ there is a unique improving flip $(\omega_{j_l},\omega'_{j_l})$ in $\mathcal{O}_i$. Moreover, for $(\omega,\omega')\in \mathcal{O}_i \setminus \{(\omega_{j_1},\omega_{j_1}'),...,(\omega_{j_d},\omega_{j_d}')\}$ we have that $\lambda_{(\omega,\omega')}=0$. Indeed, for such a flip $(\omega,\omega')$ we have that $x_{\omega'}=0$, whereas the coordinate corresponding to $\omega'$ in $p_{(\omega,\omega')}$ is strictly positive, and the coordinate corresponding to $\omega'$ in $p_{(\mu,\mu')}$ for any other improving flip $(\mu,\mu')\in \mathcal{O}_i$ is 0. Hence we can rewrite \eqref{eqConvexCombinationA1} as:
$$
p = \lambda_0 p_1 + \sum_{l=1}^d \lambda_l p_{(\omega_{j_l},\omega_{j_l}')}
$$
where $\sum_{l=0}^d \lambda_l= 1$.
If $\lambda_0 >0$, the point $p_1$ is uniquely defined. If moreover $\lambda_0<1$,  we can define $\kappa_i(p)=(p_1,p_2)$ with
$$
p_2 = \sum_{l=1}^s \frac{\lambda_l}{1-\lambda_0} p_{(\omega_{j_l},\omega_{j_l}')}
$$
where we indeed have $p_2\in \text{reg}_{\mathcal{I}}(X_i)$. Moreover, note that $p\in \text{reg}_{\mathcal{I}}(A_j)$ iff $p_1\in \text{reg}_{\mathcal{I}}(A_j)$ and $p\in \text{reg}_{\mathcal{I}}(\overline{A_j})$ iff $p_1\in \text{reg}_{\mathcal{I}}(\overline{A_j})$ for all atoms $a_j$ that do not appear in $\eta_i$ and $q_i$, i.e.\ the required non-interference assertions are all satisfied. In the case where $\lambda_0=0$, we can choose $p_2=p$ and choose $p_1$ arbitrarily from $\text{reg}_{\mathcal{I}}( \tau(\eta_i \wedge \neg q_i))$. Indeed, we then have $p\notin \text{reg}_{\mathcal{I}}(A_j)$ and $p\notin \text{reg}_{\mathcal{I}}(\overline{A_j})$ for all atoms $a_j$, hence the non-interference assertions are trivially satisfied. Finally, if $\lambda_0=1$, we can choose $p_1=p$ and choose $p_2$ arbitrarily from $\text{reg}_{\mathcal{I}}(X_i)$, in which case the non-interference assertions are also trivially satisfied. 
\end{proof}

\begin{lemma} It holds that $\mathcal{I} \models \tau(r_1\wedge ... \wedge r_m) \sqsubseteq Z$ iff there exists a sequence of improving flips from $(l_1,...,l_m)$ to $(r_1,...,r_m)$.
\end{lemma}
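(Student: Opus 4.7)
The plan is to first reduce the statement to a geometric containment question in $\mathbb{R}^n$, and then resolve that question by a two-part combinatorial argument.

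First I observe that $\mn{reg}_\Imc(\tau(r_1 \wedge \ldots \wedge r_m)) = \bigcap_i \mn{reg}_\Imc(\tau(r_i))$ is the singleton $\{P(\omega^*)\}$ for $\omega^* = (r_1,\ldots,r_m)$, as a direct consequence of how the regions $\mn{reg}_\Imc(A_i)$ and $\mn{reg}_\Imc(\overline{A_i})$ were constructed. By the standard identity $\mn{conv}(\mn{conv}(A) \cup \mn{conv}(B)) = \mn{conv}(A \cup B)$, we also have $\mn{reg}_\Imc(Z) = \mn{conv}\bigl(\{P(\omega_0)\} \cup \{p_{(\omega,\omega')} : (\omega,\omega') \in F\}\bigr)$ where $\omega_0 = (l_1,\ldots,l_m)$ is the initial outcome and $F = \bigcup_i \mathcal{O}_i$ is the set of all improving flips. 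The lemma thus reduces to: $P(\omega^*)$ is a convex combination of $P(\omega_0)$ and the points $p_{(\omega,\omega')}$ iff some sequence of improving flips leads from $\omega_0$ to $\omega^*$.

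For the ``if'' direction I would induct on the length $k$ of the flip sequence $\omega_0 = \nu_0 \to \nu_1 \to \cdots \to \nu_k = \omega^*$. The identity $P(\nu_j) = \tfrac{1}{2}P(\nu_{j-1}) + \tfrac{1}{2}\, p_{(\nu_{j-1},\nu_j)}$, already used in the construction to verify axiom~\eqref{eqELrule3}, lets us unfold $P(\omega^*)$ into an explicit convex combination of $P(\omega_0)$ together with the flip points $p_{(\nu_{j-1},\nu_j)}$, each of which lies in some $\mn{reg}_\Imc(X_i) \subseteq \mn{reg}_\Imc(Z)$.

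The main work is the ``only if'' direction, which I would establish by contraposition. Suppose no improving-flip sequence reaches $\omega^*$ from $\omega_0$, and let $R$ be the forward-closure of $\{\omega_0\}$ under $F$, so $\omega^* \notin R$. Define the linear functional $L(x_1,\ldots,x_n) = \sum_{\nu \in R} x_\nu$. Then $L(P(\omega_0)) = 1$ and $L(P(\omega^*)) = 0$; moreover the case $\omega \in R$, $\omega' \notin R$ is ruled out by forward-closure of $R$, so $L(p_{(\omega,\omega')}) \in \{0, 1, 2\}$, with the value $0$ arising exactly when both endpoints lie outside $R$. Applying $L$ to a putative convex combination $P(\omega^*) = \mu_0 P(\omega_0) + \sum c_{(\omega,\omega')}\, p_{(\omega,\omega')}$, non-negativity of the coefficients forces $\mu_0 = 0$ and restricts the support of positive $c_{(\omega,\omega')}$ to flips whose endpoints both lie outside $R$. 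At this point I would invoke the hypothesis that the GCP-net is consistent: the improving-flip relation is acyclic, so the restricted support is a DAG. Matching coordinates in $P(\omega^*) = \sum c_{(\omega,\omega')}\bigl(2P(\omega') - P(\omega)\bigr)$ yields a balance equation $2 y_\nu - x_\nu = \delta_{\nu,\omega^*}$ at every $\nu \notin R$, where $x_\nu$ and $y_\nu$ are the total outflow and inflow at $\nu$. Choosing a source $s$ of the DAG-support then makes $y_s = 0$ while $x_s > 0$, contradicting the balance equation whether $s = \omega^*$ (which would force $x_s = -1$) or $s \neq \omega^*$ (which would force $x_s = 0$).

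The main obstacle, as anticipated, is designing the correct separating functional $L$ and then exploiting acyclicity of the consistent GCP-net to rule out the remaining ``exterior-only'' flows; the rest is a mechanical translation between the geometry of $\mn{reg}_\Imc(Z)$ and the structure of the improving-flip graph.
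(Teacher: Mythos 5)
Your proof is correct, and while your reduction to the containment $P(\omega^*)\in\mn{conv}(\{P(\omega_0)\}\cup\{p_{(\omega,\omega')}\})$ and your ``if'' direction (unfolding $P(\nu_j)=\tfrac12 P(\nu_{j-1})+\tfrac12 p_{(\nu_{j-1},\nu_j)}$ along the flip sequence) coincide with the paper's, your ``only if'' direction takes a genuinely different route. The paper argues directly: given a convex combination $P(\omega_s)=\lambda_0 P(\omega_l)+\sum_i \lambda_i p_{(\omega_{u_i},\omega_{v_i})}$, it inspects coordinate signs to show that every positively weighted flip is chained --- its source is $\omega_l$ or the target of another positively weighted flip, and its target is $\omega_s$ or the source of another --- and then invokes consistency (acyclicity) to conclude that these chained flips assemble into improving-flip sequences from $\omega_l$ to $\omega_s$; that final assembly step is left rather terse in the paper. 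You instead argue by contraposition with a separating linear functional $L(x)=\sum_{\nu\in R}x_\nu$ built from the reachable set $R$: forward-closure of $R$ makes $L$ nonnegative on all generators of $\mn{reg}_\Imc(Z)$ (ruling out the value $-1$, since no flip leaves $R$), which forces $\mu_0=0$ and confines the support to flips with both endpoints outside $R$, after which acyclicity yields a source vertex $s$ of the support DAG where the balance equation $2y_\nu-x_\nu=\delta_{\nu,\omega^*}$ fails, since $y_s=0$ and $x_s>0$. Your route buys a cleanly verifiable finish --- it avoids having to argue that chained flips actually concatenate into a path --- at the cost of being non-constructive, whereas the paper's argument, once spelled out, exhibits the witnessing flip sequence inside the support of the combination; both proofs invoke consistency of the GCP-net at exactly the same point. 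One detail worth making explicit in your write-up: the support of the combination is nonempty because $\mu_0=0$ while the convex coefficients sum to $1$, which is what licenses ``choosing a source of the DAG-support.''
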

\begin{proof}
Let us write $\Omega$ for the set of all outcomes $(s_1,..., s_m)$ that can be obtained from $(l_1,...,l_m)$ using a sequence of improving flips, where we will assume that $\Omega$ also includes $(l_1,...,l_m)$ itself. Let us write $\Psi$ for the set of all outcomes $(s_1,...,s_m)$ for which
$$
P(s_1,... ,s_m) \in \text{reg}_{\mathcal{I}} (Z)
$$
We will show that $\Omega=\Psi$, from which the stated result immediately follows. 

Note that if $\rho_i$ sanctions the flip from some outcome $\omega_t = (t_1,...,t_m)$ to some outcome $\omega_s =(s_1,...,s_m)$ then it holds that 
$$
P(\omega_s) = \frac{1}{2}P(\omega_t) + \frac{1}{2}p_{(\omega_s,\omega_t)}
$$
and thus we have in particular
$$
P(\omega_s)  \in \mn{conv}(\{P(\omega_t)\} \cup \text{reg}_{\mathcal{I}} (X_i))
$$
From this, we find that if there exists a sequence of improving flips from $\omega_l = (l_1,...,l_m)$ to $\omega_s$, it must be the case that $P(\omega_s)\in \text{reg}_{\mathcal{I}} (Z)$. We thus already have that $\Omega \subseteq \Psi$.

Conversely, let $\omega_s \in \Psi$. Then $P(\omega_s) \in \mn{conv}( \{P(\omega_l)\} \cup \bigcup_i \text{reg}_{\mathcal{I}} (X_i))$. Hence, there must exist improving flips $(\omega_{u_1},\omega_{v_1}),...,(\omega_{u_k},\omega_{v_k})$ and weights $\lambda_0,\lambda_1,...,\allowbreak\lambda_k \geq 0$ such that $\sum_{i=0}^k \lambda_i =1$ and
$$
P(\omega_s) = \lambda_0 P(\omega_l) + \sum_{i=1}^k \lambda_i p_{(\omega_{u_i},\omega_{v_i})}
$$
For each $i\in \{1,...,k\}$ such that $\lambda_i>0$, one of the following needs to hold:
\begin{itemize}
    \item $\omega_{u_i} = \omega_l$;
    \item $\omega_{v_i} = \omega_s$;
    \item there are $j,l\in \{1,...,k\}$ such that $\lambda_j,\lambda_l>0$, $\omega_{u_i}=\omega_{v_j}$ and $\omega_{v_i}=\omega_{u_l}$.
\end{itemize}
Indeed, suppose there were some $i$ such that $\omega_{u_i} \neq \omega_l$, $\omega_{v_i} \neq \omega_s$, $\omega_{u_i}\neq \omega_{v_j}$, for all $j\in \{1,...,k\}$ such that $\lambda_j>0$; the case where no $l$ with $\lambda_l>0$ exists for which $\omega_{v_i}=\omega_{u_l}$ is entirely analogous. The coordinate of $p_{(\omega_{u_i},\omega_{v_i})}$ that corresponds to the outcome $\omega_{u_i}$ is strictly negative, whereas there are no points among $\{P(\omega_l),p_{(\omega_{u_1},\omega_{v_1})},...,p_{(\omega_{u_k},\omega_{v_k})}\}$ for which this coordinate is strictly positive. This would mean that the coordinate corresponding to $\omega_{u_i}$ is also negative in $P(\omega_s)$, which is a contradiction.

Since we assumed that the given generalized CP-net is consistent, there cannot be cycles of improving flips. This means that $(\omega_{u_1},\omega_{v_1}),...,\allowbreak (\omega_{u_k},\omega_{v_k})$ define one or more sequences of improving flips from $\omega_l$ to $\omega_s$.
\end{proof}

\end{document}